\let\old@ssect\@ssect 
\newcommand{%
	\def\svgwidth{1\columnwidth}
	\import{./figures/}{.pdf_tex}
}[2][1]{%
	\def\svgwidth{#1\columnwidth}
	\import{./figures/}{#2.pdf_tex}
}
\DeclareMathOperator{\interior}{int}
\DeclareMathOperator*{\argmin}{arg\,min}
\newcommand{\nnumbers}[1][]{\mathbb{N}}
\newcommand{\myreals}[1][]{\mathbb{R}^{#1}}
\newcommand{\mypreals}[1][]{\mathbb{R}_{\geq 0}^{#1}}
\newcommand{\myinputt}{\upsilon}
\newcommand{\mystatet}{\phi}
\newcommand{\hs}{\mathcal{H}}
\newcommand{\il}{\mathcal{U}}
\newcommand{\pn}[1]{{#1}}
\newcommand{\stree}{\mathcal{T}}
\newcommand{\fw}[1]{{#1}^{\mathrm{fw}}}
\newcommand{\bw}[1]{{#1}^{\mathrm{bw}}}
\newcommand{\bwr}[1]{{#1}^{\mathrm{bw'}}}
\newcommand{\recon}[1]{{#1}_{\bwr{\myinputt}}}
\newcommand{\simu}[1]{{#1}^{\mathrm{r}}}
\newcounter{thmc}
\newtheorem{problem}[thmc]{Problem}
\newcommand{\dom}{\text{dom }}
\DeclareMathOperator{\rge}{rge}
\newtheorem{definition}{Definition}
\newtheorem{example}{Example}
\newtheorem{proposition}[thmc]{Proposition}
\newtheorem{lemma}[thmc]{Lemma}
\newtheorem{remark}[thmc]{Remark}
\newtheorem{theorem}[thmc]{Theorem}
\newtheorem{assumption}[thmc]{Assumtion}
\newtheorem{corollary}[thmc]{Corollary}
\newenvironment{proof}%
{\par\noindent{Proof.}}
{\hfill$\square$\par}
\newlist{steps}{enumerate}{1}
\setlist[steps, 1]{leftmargin=45pt, label = \textbf{Step \arabic*}:}
\newcommand{\EndRemark}{$\hfill \blacktriangle$}
\def\@ssect#1#2#3#4#5#6{%
	\NR@gettitle{#6}
	\old@ssect{#1}{#2}{#3}{#4}{#5}{#6}
}
\begin{document}
\ifbool{conf}{
\begin{frontmatter}
	\title{HyRRT-Connect: A Bidirectional Rapidly-Exploring Random Trees Motion Planning Algorithm for Hybrid Systems\thanksref{footnoteinfo}} 
	
	\thanks[footnoteinfo]{Research by N. Wang and R. G. Sanfelice partially supported by NSF Grants no. CNS-2039054 and CNS-2111688, by AFOSR Grants nos. FA9550-19-1-0169, FA9550-20-1-0238, FA9550-23-1-0145, and FA9550-23-1-0313, by AFRL Grant nos. FA8651-22-1-0017 and FA8651-23-1-0004, by ARO Grant no. W911NF-20-1-0253, and by DoD Grant no. W911NF-23-1-0158.}
	
	\author[First]{Nan Wang} 
	\author[First]{Ricardo G. Sanfelice} 
	
	\address[First]{University of California, Santa Cruz, 
		Santa Cruz, CA 95064 USA (e-mail: nanwang, ricardo@ucsc.edu).}
	
	\begin{abstract}                
		This paper proposes a bidirectional rapidly-exploring random trees (RRT) algorithm to solve the motion planning problem for hybrid systems. The proposed algorithm, called HyRRT-Connect, propagates in both forward and backward directions in hybrid time until an overlap between the forward and backward propagation results is detected. Then, HyRRT-Connect constructs a motion plan through the reversal and concatenation of functions defined on hybrid time domains, ensuring the motion plan thoroughly satisfies the given hybrid dynamics. To address the potential discontinuity along the flow caused by tolerating some distance between the forward and backward partial motion plans, we reconstruct the backward partial motion plan by a forward-in-hybrid-time simulation from the final state of the forward partial motion plan. By applying the reversed input of the backward partial motion plan, the reconstruction process effectively eliminates the discontinuity and ensures that as the tolerance distance decreases to zero, the distance between the endpoint of the reconstructed motion plan and the final state set approaches zero. The proposed algorithm is applied to an actuated bouncing ball example and a walking robot example so as to highlight its generality and computational improvement.
	\end{abstract}
	
	\begin{keyword}
		Hybrid systems, Motion planning, RRT, Robotics
	\end{keyword}
	
\end{frontmatter}
}{
\ititle{HyRRT-Connect: A Bidirectional Rapidly-Exploring Random Trees Motion Planning Algorithm for Hybrid Dynamical Systems}
\iauthor{
	Nan Wang \\
	{\normalsize nanwang@ucsc.edu} \\
	Ricardo G. Sanfelice \\
	{\normalsize ricardo@ucsc.edu}}
\idate{\today{}} 
\iyear{2023}
\irefnr{04}
\makeititle
\begin{abstract}
	This report proposes a bidirectional rapidly-exploring random trees (RRT) algorithm to solve the motion planning problem for hybrid systems. The proposed algorithm, called HyRRT-Connect, propagates in both forward and backward directions in hybrid time until an overlap between the forward and backward propagation results is detected. Then, HyRRT-Connect constructs a motion plan through the reversal and concatenation of functions defined on hybrid time domains, ensuring the motion plan thoroughly satisfies the given hybrid dynamics. To address the potential discontinuity along the flow caused by tolerating some distance between the forward and backward propagation results, we reconstruct the backward partial motion plan by a forward-in-hybrid-time simulation starting from the final state of the forward propagation result. By applying the reversed input of the backward partial motion plan and enforcing the same hybrid time domain as the reversal of the backward partial motion plan, the reconstruction process effectively eliminates the discontinuity and ensures that as the tolerance distance decreases to zero, the distance between the endpoint of the reconstructed motion plan and the final state set approaches zero. The proposed algorithm is applied to an actuated bouncing ball example and a walking robot example so as to highlight its generality and computational improvement.
\end{abstract}
}


%
%

\section{Introduction}
Motion planning consists of finding a state trajectory and corresponding inputs that connect initial and final state sets, satisfying the system dynamics and specific safety requirements.  Motion planning for purely continuous-time systems and purely discrete-time systems has been extensively explored in existing literature; see e.g., \cite{lavalle2006planning}. In recent years, several motion planning algorithms have been developed, including graph search algorithms \cite{wilfong1988motion}, artificial potential/fluid-flow field method \cite{khatib1986real, wang2017flow, song2019flow} and sampling-based algorithms. The sampling-based algorithms have drawn much attention because of their fast exploration speed for high-dimensional problems and theoretical guarantees; specially, probabilistic completeness, which means that the probability of failing to find a motion plan converges to zero, as the number of samples approaches infinity. Compared with other sampling-based algorithms, such as probabilistic roadmap algorithm, the rapidly-exploring random tree (RRT) algorithm \cite{lavalle2001randomized} is perhaps the most successful algorithm to solve motion planning problems because it does not require a steering function to solve a two point boundary value problem, which is difficult to solve for most dynamical systems. 
While the aforementioned motion planning algorithms have been extensively applied to purely continuous-time and purely discrete-time systems, comparatively less effort has been devoted into motion planning for systems with combined continuous and discrete behavior. In our earlier research \cite{wang2022rapidly}, we formulated a motion planning problem for hybrid systems using hybrid equations, as in \cite{sanfelice2021hybrid}. This formulation presents a general framework that encompasses a wide range of hybrid systems. In the same paper, we introduced a probabilistically complete RRT algorithm, referred to as HyRRT, specifically designed to address the motion planning problem for hybrid systems. Building on this work, we formulated an optimal motion planning problem for hybrid systems in the same hybrid model framework in \cite{wang2023hysst}. In this research, we introduce HySST, an asymptotically near-optimal motion planning algorithm for hybrid systems from the Stable Sparse RRT (SST) algorithm, originally introduced in \cite{li2016asymptotically}.

It is significantly challenging for almost all motion planning algorithms to maintain efficient computation performance, especially in solving high-dimensional problems.
Although RRT-type algorithms have demonstrated notable efficiency in rapidly searching for solutions to high-dimensional problems compared to other algorithm types, there remains room for enhancing their computational performance. To improve the computational performance, a modular motion planning system for purely continuous-time systems, named FaSTrack, is designed in \cite{herbert2017fastrack} that simultaneously plans and tracks a trajectory. This system accelerates the planning process by only considering low-dimensional model of the system dynamics. In \cite{kuffner2000rrt}, RRT-Connect algorithm is proposed that propagates both in forward direction and backward direction, where a notable improvement in computational performance is observed. 
Inspired by this work, we design a bidirectional RRT-type algorithm for hybrid dynamical systems, called HyRRT-Connect, that incrementally constructs two search trees, in which one tree is rooted in the initial state set and constructed forward in hybrid time, while the other is rooted in the final state set and constructed backward in hybrid time. However, the backward propagation is a nontrivial task for hybrid systems.
To facilitate the backward propagation, we formally define a backward-in-time hybrid system that approaches the inverse dynamics of the given hybrid system. \ifbool{conf}{}{In constructing each search tree, at first, HyRRT-Connect draws samples from the state space. Then, it selects the vertex such that the state associated with this vertex has minimal distance to the sample. Next, HyRRT-Connect simulates a state trajectory from the selected vertex by applying a randomly selected input. }When HyRRT-Connect detects an overlap between a path in the forward search tree and a path in the backward search tree, it initiates the construction of a motion plan.  This construction involves initially reversing the trajectory associated with the path in the backward search tree, followed by concatenating this reversed trajectory with the trajectory associated with the path in the forward search tree. In this paper, we formally define the reversal and concatenation operations and thoroughly validate that the results of both operations satisfy the given hybrid dynamics. 

In practice, HyRRT-Connect always tolerates some distance between states in the forward and backward search trees, due to the randomness in state and input selection. 
However, this tolerance can result in discontinuities along the flow of the constructed motion plan. To address this issue,  the trajectory associated with the backward path is reconstructed, involving simulating from the final state of the forward path while applying the reversed input from the backward path. By ensuring the same hybrid time domain as the reversal of the backward path, we guarantee that as tolerance decreases, the reconstructed motion plan's endpoint converges to the final state set.
 To the best of the authors' knowledge, this is the first bidirectional RRT-type algorithm being applied to systems with hybrid dynamics. The proposed algorithm is illustrated in an actuated bouncing ball system and a walking robot system. In both cases, a significant improvement in computational performance is observed, highlighting the efficiency of this novel approach.

The remainder of the paper is structured as follows. Section \ref{section:preliminaries} presents notation and preliminaries. Section \ref{section:problemstatement} presents the problem statement and introduction of applications. Section \ref{section:hyrrtconnect} presents the HyRRT-Connect algorithm. Section \ref{section:solutionchecking} presents the overlap detection, reconstruction process and its theoretical guarantee. Section \ref{section:simulation} presents the illustration of HyRRT-Connect in the examples. Section \ref{section:parallelcomputation} discusses the parallel implementation of HyRRT-Connect Algorithm. \ifbool{conf}{Proofs and more details are given in \cite{wang_hyrrtconnect_2023}.}{}
\ifbool{conf}{\vspace{-0.12cm}}{}
\section{Notation and Preliminaries}\label{section:preliminaries}
\subsection{Notation}
\ifbool{conf}{\vspace{-0.35cm}}{}
The real numbers are denoted as $\mathbb{R}$, its nonnegative subset is denoted as $\mathbb{R}_{\geq 0}$ and its nonpositive subset is denoted as $\mathbb{R}_{\leq 0}$. The set of nonnegative integers is denoted as $\mathbb{N}$. The notation $\interior I$ denotes the interior of the interval $I$. The notation $\overline{S}$ denotes the closure of the set $S$. The notation $\partial S$ denotes the boundary of the set $S$. Given sets $P\subset\mathbb{R}^n$ and $Q\subset\mathbb{R}^n$, the Minkowski sum of $P$ and $Q$, denoted as $P + Q$, is the set $\{p + q: p\in P, q\in Q\}$. The notation $|\cdot|$ denotes the Euclidean norm. 
The notation $\rge f$ denotes the range of the function $f$.
Given a point $x\in \mathbb{R}^{n}$ and a subset $S\subset \mathbb{R}^{n}$, the distance between $x$ and $S$ is denoted $|x|_{S} := \inf_{s\in S} |x - s|$. 
The notation $\mathbb{B}$ denotes the closed unit ball of appropriate dimension in the Euclidean norm.

\ifbool{conf}{\vspace{-0.5cm}}{}
\subsection{Preliminaries}
\vspace{-0.25cm}
A hybrid system $\mathcal{H}$ with inputs is modeled as  \cite{sanfelice2021hybrid}
\begin{equation}
\mathcal{H}: \left\{              
\begin{aligned}               
\dot{x} & = f(x, u)     &(x, u)\in C\\                
x^{+} & =  g(x, u)      &(x, u)\in D\\                
\end{aligned}   \right. 
\label{model:generalhybridsystem}
\end{equation}
where $x\in \mathbb{R}^n$ represents the state, $u\in \mathbb{R}^m$ represents the input, $C\subset \mathbb{R}^{n}\times\mathbb{R}^{m}$ represents the flow set, $f: \mathbb{R}^{n}\times\mathbb{R}^{m} \to \mathbb{R}^{n}$ represents the flow map, $D\subset \mathbb{R}^{n}\times\mathbb{R}^{m}$ represents the jump set, and $g:\mathbb{R}^{n}\times\mathbb{R}^{m} \to \mathbb{R}^{n}$ represents the jump map. The continuous evolution of $x$ is captured by the flow map $f$. The discrete evolution of $x$ is captured by the jump map $g$. The flow set $C$ collects the points where the state can evolve continuously. The jump set $D$ collects the points where jumps can occur.
Given a flow set $C$, the set $U_{C} := \{u\in \mathbb{R}^{m}: \exists x\in \mathbb{R}^{n}\text{ such that } (x, u)\in C\}$ includes all possible input values that can be applied during flows. Similarly, given a jump set $D$, the set $U_{D} := \{u\in \mathbb{R}^ {m}: \exists x\in \mathbb{R}^{n}\text{ such that } (x, u)\in D\}$ includes all possible input values that can be applied at jumps. These sets satisfy $C\subset \mathbb{R}^{n}\times U_{C}$ and $D\subset \mathbb{R}^{n}\times U_{D}$. Given a set $K\subset \mathbb{R}^{n}\times U_{\star}$, where $\star$ is either $C$ or $D$, we define
	$
	\Pi_{\star}(K) := \{x: \exists u\in U_{\star} \text{ s.t. } (x, u)\in K\}
	$
	as the projection of $K$ onto $\mathbb{R}^{n}$, and define 
	\ifbool{conf}{\begin{equation}
		\label{equation:Cprime}
		C' := \Pi_{C}(C), 
		D' := \Pi_{D}(D).
		\end{equation}}{\begin{equation}
		\label{equation:Cprime}
		C' := \Pi_{C}(C)
		\end{equation} and 
		\begin{equation}
		\label{equation:Dprime}
		D' := \Pi_{D}(D).
		\end{equation}}
In addition to ordinary time $t\in \mathbb{R}_{\geq 0}$, we employ $j\in \mathbb{N}$ to denote the number of jumps of the evolution of $x$ and $u$ for $\mathcal{H}$ in (\ref{model:generalhybridsystem}), leading to hybrid time $(t, j)$ for the parameterization of its solutions and inputs. 
\ifbool{conf}{}{The domain of a solution to $\mathcal{H}$ is given by a hybrid time domain as is defined as follows. 
\begin{definition}[Hybrid time domain]\label{definition:hybridtimedomain}
	A set $E\subset \mypreals\times\nnumbers$ is a hybrid time domain if, for each $(T, J)\in E$, the set
	\ifbool{conf}{$
		E\cap ([0, T]\times \{0, 1, ..., J\})
		$}{$$
		E\cap ([0, T]\times \{0, 1, ..., J\})
		$$} can be written in the form
	\ifbool{conf}{$
		\bigcup_{j = 0}^{J} ([t_{j}, t_{j + 1}]\times \{j\})
		$}{$$
		\bigcup_{j = 0}^{J} ([t_{j}, t_{j + 1}]\times \{j\})
		$$}
	for some finite sequence of times $\{t_{j}\}_{j = 0}^{J + 1}$ satisfying $0=t_{0}\leq t_{1}\leq t_{2}\leq ... \leq t_{J+1} = T$.
\end{definition}
The input to the hybrid systems is defined as a function on a hybrid time domain as follows.
\begin{definition}[Hybrid input]\label{definition:hybridinput}
	A function $\myinputt: \dom \myinputt \to \myreals[n]$ is a hybrid input if $\dom \myinputt$ is a hybrid time domain and if, for each $j\in \nnumbers$, $t\mapsto \myinputt(t,j)$ is Lebesgue measurable and locally essentially bounded on the interval $I^{j}_{\myinputt}:=\{t:(t, j)\in \dom \myinputt\}$. 
\end{definition}
A hybrid arc describes the state trajectory of the system as is defined as follows.
\begin{definition}[Hybrid arc]\label{definition:hybridarc}
	A function $\mystatet: \dom \mystatet \to \myreals[n]$ is a hybrid arc if $\dom \mystatet$ is a hybrid time domain and if, for each $j\in \nnumbers$, $t\mapsto \mystatet(t,j)$ is locally absolutely continuous on the interval $I^{j}_{\mystatet}:=\{t:(t, j)\in \dom \mystatet\}$. 
\end{definition}}
\ifbool{conf}{With the hybrid time domain, the hybrid input and the hybrid arc defined in \cite{wang_hyrrtconnect_2023}, the}{The} definition of solution pair to a hybrid system is given as follows.
\begin{definition}[Solution pair to a hybrid system]
	\label{definition:solution}
	 A hybrid input $\myinputt$ and a hybrid arc $\mystatet$ define a solution pair $(\mystatet, \myinputt)$ to the hybrid system $\hs$ if
	\begin{enumerate}[label=\arabic*)]
		\item $(\mystatet(0,0), \myinputt(0,0))\in \overline{C} \cup D$ and $\dom \mystatet = \dom \myinputt (= \dom (\mystatet, \myinputt))$.
		\item For each $j\in \mathbb{N}$ such that $I^{j}_{\mystatet}$ has nonempty interior $\interior(I^{j}_{\mystatet})$, $(\mystatet, \myinputt)$ satisfies
		$
		(\mystatet(t, j),\myinputt(t, j))\in C
		$ for all $t\in \interior I^{j}_{\mystatet}$, and 
		$
		\frac{\text{d}}{\text{d} t} {\mystatet}(t,j) = f(\mystatet(t,j), \myinputt(t,j))
		$ for almost all $t\in I^{j}_{\mystatet}$.
		\item For all $(t,j)\in \dom (\mystatet, \myinputt)$ such that $(t,j + 1)\in \dom (\mystatet, \myinputt)$, 
		\ifbool{conf}{$
			(\mystatet(t, j), \myinputt(t, j))\in D,
			\mystatet(t,j+ 1) = g(\mystatet(t,j), \newline\myinputt(t, j)).
			$}{\begin{equation}
			\begin{aligned}
			(\mystatet(t, j), \myinputt(t, j))&\in D\\
			\mystatet(t,j+ 1) &= g(\mystatet(t,j), \myinputt(t, j)).
			\end{aligned}
			\end{equation}}
		
	\end{enumerate}
\end{definition}
The concatenation operation of solution pairs in \cite[Definition 2.2]{wang2022rapidly} is given next.
\begin{definition}[Concatenation operation]
	\label{definition:concatenation}
	Given two functions $\phi_{1}: \dom \phi_{1} \to \mathbb{R}^{n}$ and $\phi_{2}:\dom \phi_{2} \to \mathbb{R}^{n}$, where $\dom \phi_{1}$ and $\dom \phi_{2}$ are hybrid time domains, $\phi_{2}$ can be concatenated to $\phi_{1}$ if $ \phi_{1}$ is compact and $\phi: \dom \phi \to \mathbb{R}^n$ is the concatenation of $\phi_{2}$ to $\phi_{1}$, denoted $\phi = \phi_{1}|\phi_{2}$, namely,
	\begin{enumerate}[label=\arabic*)]
		\item $\dom \phi = \dom \phi_{1} \cup (\dom \phi_{2} + \{(T, J)\}) $, where $(T, J) = \max \dom \phi_{1}$ and the plus sign denotes Minkowski addition;
		\item $\phi(t, j) = \phi_{1}(t, j)$ for all $(t, j)\in \dom \phi_{1}\backslash \{(T, J)\}$ and $\phi(t, j) = \phi_{2}(t - T, j - J)$ for all $(t, j)\in \dom \phi_{2} + \{(T, J)\}$.
	\end{enumerate}
\end{definition}
\section{Problem Statement and Applications}
The motion planning problem for hybrid systems studied in this paper is given in \cite[Problem 1]{wang2022rapidly} as follows.
\begin{problem}[Motion planning problem for hybrid systems]\label{problem:motionplanning}
	Given a hybrid system $\mathcal{H}$ as in (\ref{model:generalhybridsystem}) with input $u\in \myreals[m]$ and state $x\in \myreals[n]$, the initial state set $X_{0}\subset\myreals[n]$, the final state set $X_{f}\subset\myreals[n]$, and the unsafe set\ifbool{conf}{}{\footnote{The unsafe set $X_{u}$ can be used to constrain both the state and the input.}} $X_{u}\subset\myreals[n]\times \myreals[m]$, find $(\mystatet, \myinputt): \dom (\mystatet, \myinputt)\to \mathbb{R}^{n}\times \mathbb{R}^{m}$, namely, \emph{a motion plan}, such that for some $(T, J)\in \dom (\mystatet, \myinputt)$, the following hold:
	\begin{enumerate}[label=\arabic*)]
		\item $\mystatet(0,0) \in X_{0}$, namely, the initial state of the solution belongs to the given initial state set $X_{0}$;
		\item $(\mystatet, \myinputt)$ is a solution pair to $\mathcal{H}$ as defined in Definition \ref{definition:solution};
		\item $(T,J)$ is such that $\mystatet(T,J)\in X_{f}$, namely, the solution belongs to the final state set at hybrid time $(T, J)$;
		\item $(\mystatet(t,j), \myinputt(t, j))\notin X_{u}$ for each $(t,j)\in \dom (\mystatet, \myinputt)$ such that $t + j \leq T+ J$, namely, the solution pair does not intersect with the unsafe set before its state trajectory reaches the final state set.
	\end{enumerate}
Therefore, given sets $X_{0}$, $X_{f}$ and $X_{u}$, and a hybrid system $\mathcal{H}$ as in (\ref{model:generalhybridsystem}) with data $(C, f, D, g)$, a motion planning problem $\mathcal{P}$ is formulated as
$
	\mathcal{P} = (X_{0}, X_{f}, X_{u}, (C, f, D, g)).
$
\end{problem}
\ifbool{conf}{}{\begin{figure}[htbp]
		\centering
	\def\svgwidth{0.9\columnwidth}
\begingroup%
  \makeatletter%
  \providecommand\color[2][]{%
    \errmessage{(Inkscape) Color is used for the text in Inkscape, but the package 'color.sty' is not loaded}%
    \renewcommand\color[2][]{}%
  }%
  \providecommand\transparent[1]{%
    \errmessage{(Inkscape) Transparency is used (non-zero) for the text in Inkscape, but the package 'transparent.sty' is not loaded}%
    \renewcommand\transparent[1]{}%
  }%
  \providecommand\rotatebox[2]{#2}%
  \newcommand*\fsize{\dimexpr\f@size pt\relax}%
  \newcommand*\lineheight[1]{\fontsize{\fsize}{#1\fsize}\selectfont}%
  \ifx\svgwidth\undefined%
    \setlength{\unitlength}{460.76175145bp}%
    \ifx\svgscale\undefined%
      \relax%
    \else%
      \setlength{\unitlength}{\unitlength * \real{\svgscale}}%
    \fi%
  \else%
    \setlength{\unitlength}{\svgwidth}%
  \fi%
  \global\let\svgwidth\undefined%
  \global\let\svgscale\undefined%
  \makeatother%
  \begin{picture}(1,0.6529201)%
    \lineheight{1}%
    \setlength\tabcolsep{0pt}%
    \put(0,0){\includegraphics[width=\unitlength,page=1]{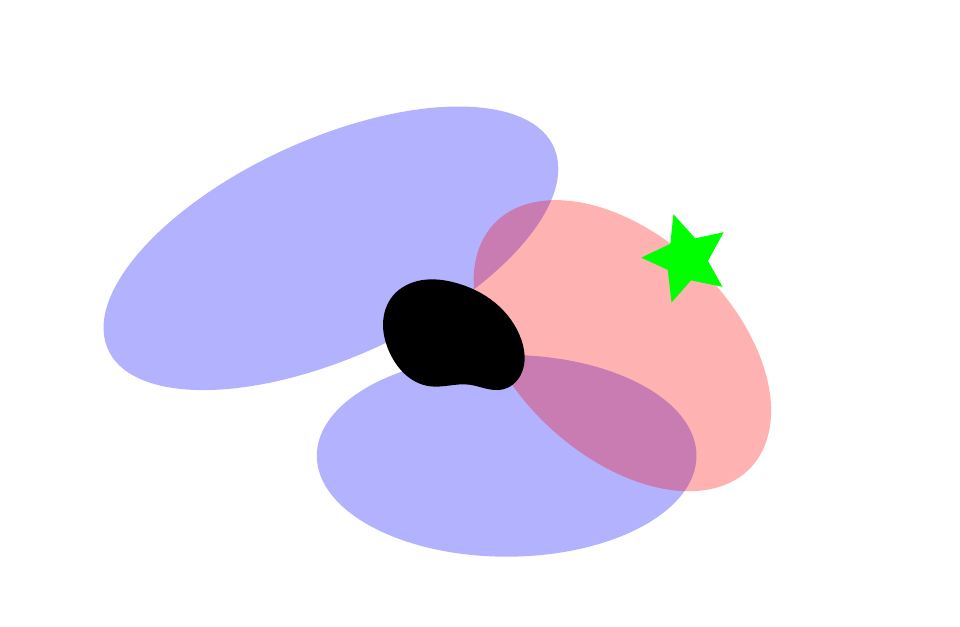}}%
    \put(0.05417655,0.20361801){\makebox(0,0)[lt]{\lineheight{1.25}\smash{\begin{tabular}[t]{l}initial state\end{tabular}}}}%
    \put(0.8141166,0.36989962){\makebox(0,0)[lt]{\lineheight{1.25}\smash{\begin{tabular}[t]{l}final state\end{tabular}}}}%
    \put(0.66638215,0.49017024){\makebox(0,0)[lt]{\lineheight{1.25}\smash{\begin{tabular}[t]{l}unsafe set\end{tabular}}}}%
    \put(0,0){\includegraphics[width=\unitlength,page=2]{hybridmotionplanning.pdf}}%
    \put(0.20653165,0.11709806){\makebox(0,0)[lt]{\lineheight{1.25}\smash{\begin{tabular}[t]{l}flow set\end{tabular}}}}%
    \put(0.82743469,0.26413127){\makebox(0,0)[lt]{\lineheight{1.25}\smash{\begin{tabular}[t]{l}jump set\end{tabular}}}}%
    \put(0,0){\includegraphics[width=\unitlength,page=3]{hybridmotionplanning.pdf}}%
  \end{picture}%
\endgroup%

		\caption{A motion plan to Problem \ref{problem:motionplanning}. The green square denote the initial state set. The green star denotes the final state set. The blue region denotes the flow set. The red region denotes the jump set. The solid blue lines denote flow and the dotted red lines denote jumps in the motion plan.}
\end{figure}}
Problem \ref{problem:motionplanning} is illustrated in the following examples.
\begin{example}[Actuated bouncing ball system]\label{example:bouncingball}
	Consider a ball bouncing on a fixed horizontal surface\ifbool{conf}{}{ as is shown in Figure \ref{fig:bouncingball}}. The surface is located at the origin and, through control actions, is capable of affecting the velocity of the ball after the impact.  The dynamics of the ball while in the air is given by
	\ifbool{conf}{$
		\dot{x} = \left[ \begin{matrix}
		x_{2} \\
		-\gamma
		\end{matrix}\right] =: f(x, u)\quad (x, u)\in C
		$}{\begin{equation}
		\label{model:bouncingballflow}
		\dot{x} = \left[ \begin{matrix}
		x_{2} \\
		-\gamma
		\end{matrix}\right] =: f(x, u)\qquad (x, u)\in C
		\end{equation}}
	where $x :=(x_{1}, x_{2})\in \mathbb{R}^2$. The height of the ball is denoted by $x_{1}$. The velocity of the ball is denoted by $x_{2}$. The gravity constant is denoted by $\gamma$. 
	\ifbool{conf}{}{\begin{figure}[htbp] 
			\centering
			\parbox[h]{0.49\columnwidth}{
				\centering
				\subfigure[The actuated bouncing ball system\label{fig:bouncingball} ]{%
	\def\svgwidth{0.49\columnwidth}
\begingroup%
  \makeatletter%
  \providecommand\color[2][]{%
    \errmessage{(Inkscape) Color is used for the text in Inkscape, but the package 'color.sty' is not loaded}%
    \renewcommand\color[2][]{}%
  }%
  \providecommand\transparent[1]{%
    \errmessage{(Inkscape) Transparency is used (non-zero) for the text in Inkscape, but the package 'transparent.sty' is not loaded}%
    \renewcommand\transparent[1]{}%
  }%
  \providecommand\rotatebox[2]{#2}%
  \newcommand*\fsize{\dimexpr\f@size pt\relax}%
  \newcommand*\lineheight[1]{\fontsize{\fsize}{#1\fsize}\selectfont}%
  \ifx\svgwidth\undefined%
    \setlength{\unitlength}{155.66442535bp}%
    \ifx\svgscale\undefined%
      \relax%
    \else%
      \setlength{\unitlength}{\unitlength * \real{\svgscale}}%
    \fi%
  \else%
    \setlength{\unitlength}{\svgwidth}%
  \fi%
  \global\let\svgwidth\undefined%
  \global\let\svgscale\undefined%
  \makeatother%
  \begin{picture}(1,0.86048663)%
    \lineheight{1}%
    \setlength\tabcolsep{0pt}%
    \put(0,0){\includegraphics[width=\unitlength,page=1]{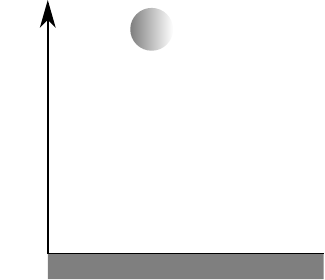}}%
    \put(0.03673765,0.65008529){\rotatebox{90}{\makebox(0,0)[rt]{\lineheight{1.25}\smash{\begin{tabular}[t]{r}height ($x_{1}$)\end{tabular}}}}}%
    \put(0,0){\includegraphics[width=\unitlength,page=2]{bouncingballfigure.pdf}}%
    \put(0.55137666,0.30449301){\makebox(0,0)[lt]{\lineheight{1.25}\smash{\begin{tabular}[t]{l}control\\input $(u)$\end{tabular}}}}%
  \end{picture}%
\endgroup%

}
			}
			\parbox[h]{0.49\columnwidth}{
				\centering
				\subfigure[A motion plan to the sample motion planning problem for actuated  bouncing ball system.\label{fig:samplesolutionbb}]{\includegraphics[width=0.46\columnwidth]{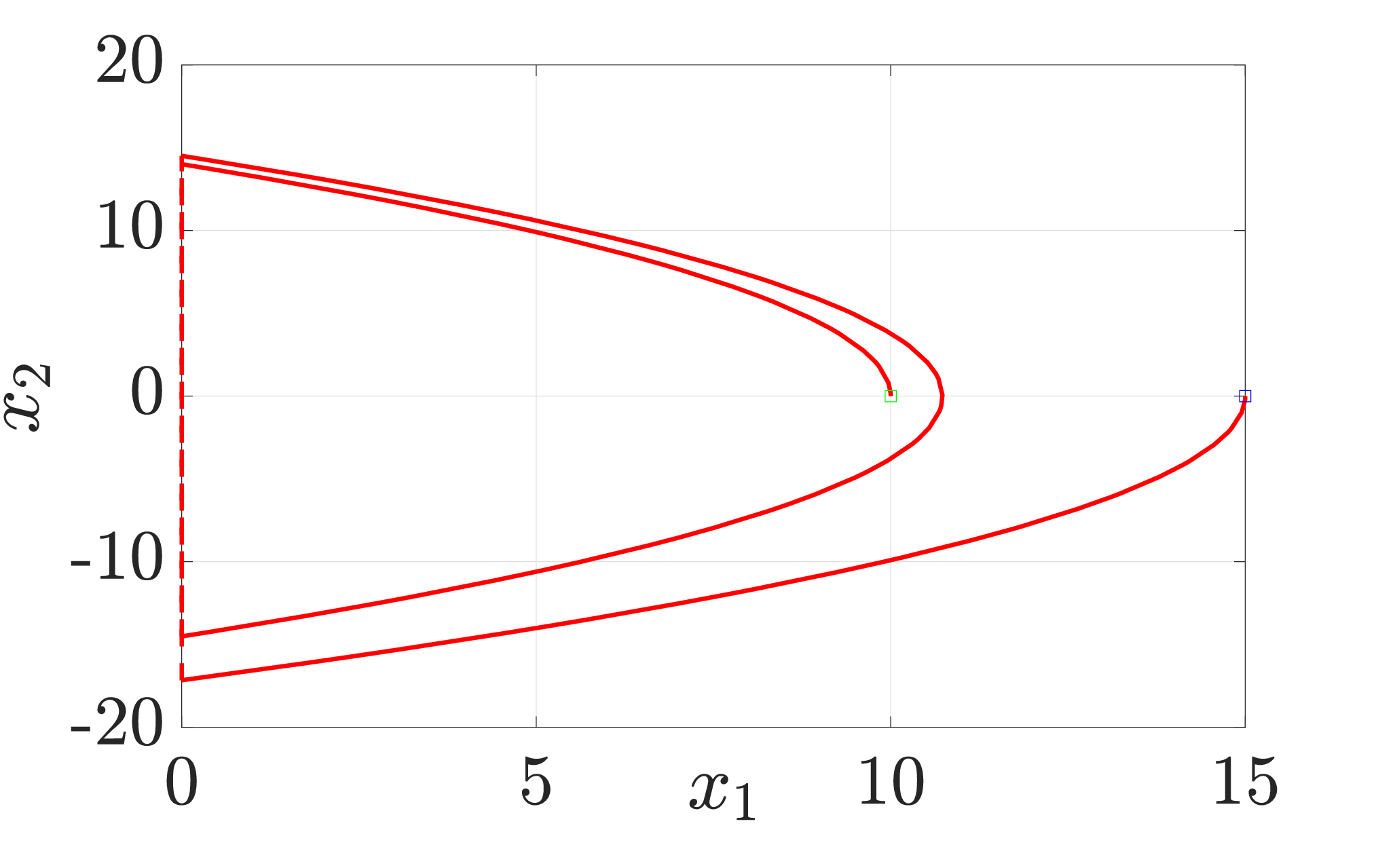}}
			}
			\caption{The actuated bouncing ball system in Example \ref{example:bouncingball}.}
	\end{figure}}
	Flow is allowed when the ball is above the surface. Hence, the flow set is
	\ifbool{conf}{$
		C := \{(x, u)\in \mathbb{R}^{2}\times \mathbb{R}: x_{1}\geq 0\}.
		$}{\begin{equation}
		\label{model:bouncingballflowset}
		C := \{(x, u)\in \mathbb{R}^{2}\times \mathbb{R}: x_{1}\geq 0\}.
		\end{equation}}
	At every impact, and with control input equal to zero, the velocity of the ball changes from negative to positive while the height remains the same. The dynamics at jumps of the actuated bouncing ball system is given as
	\ifbool{conf}{$
		x^{+} = \left[ \begin{matrix}
		x_{1} \\
		-\lambda x_{2}+u
		\end{matrix}\right] =: g(x, u)\quad (x, u)\in D
		$}{\begin{equation}
		x^{+} = \left[ \begin{matrix}
		x_{1} \\
		-\lambda x_{2}+u
		\end{matrix}\right] =: g(x, u)\qquad (x, u)\in D
		\label{conservationofmomentum}
		\end{equation}}
	where $u\geq 0$ is the input and $\lambda \in (0,1)$ is the coefficient of restitution. 
	Jumps are allowed when the ball is on the surface with nonpositive velocity. Hence, the jump set is 
	\ifbool{conf}{$
		D:= \{(x, u)\in \mathbb{R}^{2}\times \mathbb{R}: x_{1} = 0, x_{2} \leq 0, u\geq 0\}.
		$}{\begin{equation}
		\label{model:bouncingballjumpset}
		D:= \{(x, u)\in \mathbb{R}^{2}\times \mathbb{R}: x_{1} = 0, x_{2} \leq 0, u\geq 0\}.
		\end{equation}}


	An example of a motion planning problem for the actuated bouncing ball system is as follows: using a bounded input signal, find a solution pair to (\ref{model:generalhybridsystem}) when the bouncing ball is released at a certain height with zero velocity and such that it reaches a given target height with zero velocity. To complete this task, not only the values of the input, but also the hybrid time domain of the input need to be planned properly such that the ball can reach the desired target. One such motion planning problem is given by defining the initial state set as $X_{0} = \{(14, 0)\}$, the final state set as $X_{f} = \{(10, 0)\}$, the unsafe set as $X_{u} =\{(x, u)\in \mathbb{R}^{2}\times \mathbb{R}: u\in (-\infty, 0]\cup[5, \infty)\}$.  The motion planning problem $\mathcal{P}$ is given as $\mathcal{P} = (X_{0}, X_{f}, X_{u}, (C, f, D, g))$. We solve this motion planning problem later in this paper.
\end{example}

\begin{example}[Walking robot]\label{example:biped}
	\ifbool{conf}{For the details of the motion planning problem for the walking robot system, see \cite{wang_hyrrtconnect_2023}.}{The state $x$ of the compass model of a walking robot is composed of the angle vector $\theta$ and the velocity vector $\omega$ \cite{grizzle2001asymptotically}. 
	The angle vector $\theta$ contains the planted leg angle $\theta_{p}$, the swing leg angle $\theta_{s}$, and the torso angle $\theta_{t}$. The velocity vector $\omega$ contains the planted leg angular velocity $\omega_{p}$, the swing leg angular velocity $\omega_{s}$, and the torso angular velocity $\omega_{t}$. The input $u$ is the input torque, where $u_{p}$ is the torque applied on the planted leg from the ankle, $u_{s}$ is the torque applied on the swing leg from the hip, and $u_{t}$ is the torque applied on the torso from the hip.
	The continuous dynamics of $x = (\theta, \omega)$ are obtained from the Lagrangian method and are given by $\dot{\theta} = \omega, \dot{\omega} = D_{f}(\theta)^{-1}( - C_{f}(\theta, \omega)\omega - G_{f}(\theta) + Bu) = : \alpha(x, u)$,
	where $D_{f}$ and $C_{f}$ are the inertial and Coriolis matrices, respectively, and $B$ is the actuator  relationship matrix.
	
	In \cite{short2018hybrid}, the input torques that produce an acceleration  $a$ for a special state $x$ are determined by a function $\mu$, defined as $\mu(x, a) := B^{-1}(D_{f}(\theta)a + C_{f}(\theta, \omega)\omega + G_{f}(\theta)).$
	By applying $u = \mu(x, a)$ to $\dot{\omega} = \alpha(x, u)$, we obtain
	$
	\label{equation:omegaflowmap}
	\dot{\omega} = a.
	$
	Then, the flow map $f$ is defined as
	\ifbool{conf}{$
		\label{model:bipedflowmap}
		f(x, a) := \left[\begin{matrix}
		\omega\\
		a
		\end{matrix}\right] \quad (x, a)\in C.
		$}{$$
		\label{model:bipedflowmap}
		f(x, a) := \left[\begin{matrix}
		\omega\\
		a
		\end{matrix}\right] \quad (x, a)\in C.
		$$}
	Flow is allowed when only one leg is in contact with the ground. To determine if the biped has reached the end of a step, we define
	$
	h(x) := \phi_{s} - \theta_{p}$ for all $x\in \mathbb{R}^{6}
	$
	where $\phi_{s}$ denotes the step angle.
	The condition $h(x) \geq 0$ indicates that only one leg is in contact with the ground. Thus, the flow set is defined as 
	$
	\label{model:bipedflowset}
	C:= \{(x, a)\in \mathbb{R}^{6}\times \mathbb{R}^{3}: h(x)\geq 0\}.
	$
	Furthermore, a step occurs when the change of $h$ is such that $\theta_{p}$  is approaching $\phi_{s}$, and $h$ equals zero. Thus, the jump set $D$ is defined as
	$
	\label{model:bipedjumpset}
	D := \{(x, a)\in \mathbb{R}^{6}\times \mathbb{R}^{3}: h(x) = 0, \omega_{p} \geq 0\}.
	$
	\begin{figure}[htbp] 
		\centering
	\def\svgwidth{0.35\columnwidth}
	\import{./figures/}{bipedmodel.pdf_tex}

		\caption{The biped system in Example \ref{example:biped}. The angle vector $\theta$ contains the planted leg angle $\theta_{p}$, the swing leg angle $\theta_{s}$, and the torso angle $\theta_{t}$. The velocity vector $\omega$ contains the planted leg angular velocity $\omega_{p}$, the swing leg angular velocity $\omega_{s}$, and the torso angular velocity $\omega_{t}$. The input $u$ is the input torque, where $u_{p}$ is the torque applied on the planted leg from the ankle, $u_{s}$ is the torque applied on the swing leg from the hip, and $u_{t}$ is the torque applied on the torso from the hip.} \label{fig:bipedsystem} 
	\end{figure} 
	Following \cite{grizzle2001asymptotically}, when a step occurs, the swing leg becomes the planted leg, and the planted leg becomes the swing leg. The function $\Gamma$ is defined to swap angles and velocity variables as
	$
	\label{equation:thetajumpmap}
	\theta^{+} = \Gamma(\theta).
	$
	The angular velocities after a step are determined by a contact model denoted as
	$\Omega(x) := (\Omega_{p}(x), \Omega_{s}(x), \Omega_{t}(x))$, where $\Omega_{p}$, $\Omega_{s}$, and $\Omega_{t}$ are the angular velocity of the planted leg, swing leg, and torso, respectively. Then, the jump map $g$ is defined as
	\begin{equation}
	\label{equation:bipedjumpmap}
	g(x, a) := \left[\begin{matrix}
	\Gamma(\theta)\\
	\Omega(x)
	\end{matrix}\right]\quad \forall (x, a)\in D.
	\end{equation} For more information about the contact model, see \cite[Appendix A]{grizzle2001asymptotically}.
	
	An example of a motion planning problem for the walking robot system is as follows: using a bounded input signal, find a solution pair to (\ref{model:generalhybridsystem}) associated to the walking robot system completing a step of a walking circle. One way to characterize a walking cycle is to define the final state and the initial state as the states before and after a jump occurs, respectively.  One such motion planning problem is given by defining the final state set as $X_{f} = \{(\phi_{s}, -\phi_{s}, 0, 0.1, 0.1, 0)\}$, the initial state set as $X_{0} = \{x_{0}\in \mathbb{R}^{6}: x_{0} = g(x_{f}, 0), x_{f}\in X_{f}\}$ where the input argument of $g$ can be set arbitrarily because input does not affect the value of $g$; see (\ref{equation:bipedjumpmap}), and the unsafe set as $X_{u} = \{(x, a)\in \mathbb{R}^{6}\times \mathbb{R}^{3}: a_{1} \notin [a_{1}^{\min}, a_{1}^{\max}]\text{ or }  a_{2} \notin [a_{2}^{\min}, a_{2}^{\max}]\text{ or } a_{3} \notin [a_{3}^{\min}, a_{3}^{\max}]\text{ or } (x, a)\in D  \}$, where $a_{1}^{\min}$, $a_{2}^{\min}$, and $a_{3}^{\min}$ are the lower bounds of $a_{1}$, $a_{2}$, and  $a_{3}$, respectively, and $a_{1}^{\max}$, $a_{2}^{\max}$, and $a_{3}^{\max}$ are the upper bounds of $a_{1}$, $a_{2}$, and  $a_{3}$, respectively.}
We also solve this motion planning problem later in this paper.
\end{example}
\ifbool{conf}{\vspace{-0.2cm}}{}
\section{Algorithm Description}\label{section:hyrrtconnect}
\ifbool{conf}{\vspace{-0.2cm}}{}
\subsection{Overview}\label{section:algorithmoverview}
In this section, a bidirectional RRT-type motion planning algorithm for hybrid systems, called HyRRT-Connect, is proposed. HyRRT-Connect searches for a motion plan by incrementally constructing two search trees: one starts from the initial state set and propagates forward in hybrid time, while the other starts from the final state set and propagates backward in hybrid time. Upon detecting overlaps between the two search trees, a connection is established, subsequently yielding a motion plan, which is elaborated in Section \ref{section:solutionchecking}.
Each search tree is modeled by a directed tree. A directed tree $\mathcal{T}$ is a pair $\mathcal{T} = (V, E)$, where $V$ is a set whose elements are called vertices and $E$ is a set of paired vertices whose elements are called edges. \ifbool{conf}{}{The edges in the directed tree are directed, which means the pairs of vertices that represent edges are ordered. The set of edges $E$ is defined as
	$
	E \subseteq \{(v_{1}, v_{2}): v_{1}\in V, v_{2}\in V, v_{1}\neq v_{2}\}.
	$
	The edge $e = (v_{1}, v_{2})\in E$ represents an edge from $v_{1}$ to $v_{2}$.}
A path in $\mathcal{T} = (V, E)$ is a sequence of vertices
$
p = (v_{1}, v_{2}, ..., v_{k})
$ such that $(v_{i}, v_{i + 1})\in E$ for all $i= \{1, 2,..., k - 1\}$.

The search tree constructed forward in hybrid time is denoted as $\fw{\stree} = (\fw{V}, \fw{E})$ and the search tree constructed backward in hybrid time is denoted as $\bw{\stree} = (\bw{V}, \bw{E})$. For the consistency of the notation, we denote $\hs$ in (\ref{model:generalhybridsystem}) as $\fw{\hs} = (\fw{C}, \fw{f}, \fw{D}, \fw{g})$. Each vertex $v$ in $\fw{V}$ (respectively, $\bw{V}$) is associated with a state of $\fw{\hs}$ (respectively, the hybrid system that represents inverse dynamics of $\fw{\hs}$, denoted $\bw{\hs}$), denoted $\overline{x}_{v}$. Each edge $e$ in $\fw{E}$ (respectively, $\bw{E}$) is associated with a solution pair to $\fw{\hs}$ (respectively, $\bw{\hs}$), denoted $\overline{\psi}_{e}$, that connects the states associated with their endpoint vertices. 
The solution pair that the path $p = (v_{1}, v_{2}, ..., v_{k})$ represents is the concatenation of all the solutions associated with the edges therein, namely,
\ifbool{conf}{$
	\tilde{\psi}_{p} := \overline{\psi}_{(v_{1}, v_{2})}|\overline{\psi}_{(v_{2}, v_{3})}|\ ...\  |\overline{\psi}_{(v_{k-1}, v_{k})}	
	$}{\begin{equation}
	\label{equation:concatenationpath}
	\tilde{\psi}_{p} := \overline{\psi}_{(v_{1}, v_{2})}|\overline{\psi}_{(v_{2}, v_{3})}|\ ...\  |\overline{\psi}_{(v_{k-1}, v_{k})}	
	\end{equation}}
where $\tilde{\psi}_{p}$ denotes the solution pair associated with $p$. For the concatenation notion, see Definition \ref{definition:concatenation}. 

The proposed HyRRT-Connect algorithm requires a library of possible inputs to construct $\fw{\stree}$, denoted $\fw{\il} = (\fw{\il}_{C}, \fw{\il}_{D})$, and to construct $\bw{\stree}$, denoted $\bw{\il} = (\bw{\il}_{C}, \bw{\il}_{D})$. The input library $\fw{\il}$ (respectively, $\bw{\il}$) includes the input signals for the flows of $\fw{\hs}$ (respectively, $\bw{\hs}$), collected in $\fw{\il}_{C}$ (respectively, $\bw{\il}_{C}$), and the input values for the jumps of $\fw{\hs}$ (respectively, $\bw{\hs}$), collected in $\fw{\il}_{D}$ (respectively, $\bw{\il}_{D}$).

HyRRT-Connect addresses the motion planning problem $\mathcal{P} = (X_{0}, X_{f}, X_{u}, \ifbool{conf}{}{\newline}(\fw{C}, \fw{f}, \fw{D}, \fw{g}))$ using input libraries $\fw{\il}$ and $\bw{\il}$ through the following steps:
\begin{steps}
	\item Sample a finite number of points from $X_{0}$ (respectively, $X_{f}$) and initialize a search tree $\fw{\stree} = (\fw{V}, \fw{E})$ (respectively, $\bw{\stree} = (\bw{V}, \ifbool{conf}{\newline}{}\bw{E})$) by adding vertices associated with each sampling point.
	\item Incrementally construct $\fw{\stree}$ forward in hybrid time and $\bw{\stree}$ backward in hybrid time, executing both procedures in an interleaved manner\ifbool{conf}{\footnote{The parallel implementation is discussed in Section~\ref{section:parallelcomputation}.}}{\footnote{It is imperative to underscore that the information used in forward propagation remains unaffected by the backward propagation, and vice versa. This independence allows for the potential of executing both forward and backward propagation simultaneously, as opposed to an interleaved approach. Nonetheless, the nature of motion planning algorithm, which involves frequently joining the results computed in parallel to check overlaps between the forward and backward search trees, may prevent the computation improvement associated with parallel computation. More discussion and experiment results can be found in the forthcoming Section ~\ref{section:parallelcomputation}.}}.
	\item If an appropriate overlap between $\fw{\stree}$ and $\bw{\stree}$ is found, reverse the solution pair in $\bw{\stree}$, concatenate it to the solution pair in $\fw{\stree}$ and return the concatenation result.
\end{steps}
\ifbool{conf}{ }{An appropriate overlap between $\fw{\stree}$ and $\bw{\stree}$ implies the paths in both trees that can collectively be used to construct a motion plan to $\mathcal{P}$. Details on these overlaps are discussed in Section \ref{section:solutionchecking}.}

\subsection{Backward-in-time Hybrid System}
In the HyRRT-Connect algorithm, a hybrid system that represents backward-in-time dynamics of $\fw{\hs} = (\fw{C}, \fw{f}, \ifbool{conf}{\newline}{}\fw{D}, \fw{g})$, denoted $\bw{\hs} = (\bw{C}, \bw{f}, \bw{D},\ifbool{conf}{}{\newline} \bw{g})$, is required when propagating trajectories from $X_{f}$. The construction of $\mathcal{H}^{\text{bw}}$ is as follows.
\begin{definition}[Backward-in-time hybrid system]\label{definition:bwhs} Given a hybrid system $
	\fw{\hs} = (\fw{C}, \fw{f}, \fw{D}, \fw{g})$, the backward-in-time hybrid system of $\fw{\hs}$, denoted $\bw{\hs}$, is the hybrid system
	\begin{equation}
	\bw{\hs}: \left\{              
	\begin{aligned}               
	\dot{x} & =  \bw{f}(x, u)     &(x, u)\in \bw{C}\\                
	x^{+} & \in  \bw{g}(x, u)      &(x, u)\in \bw{D}\\                
	\end{aligned}   \right. 
	\label{model:generalhybridbackwardsystem}
	\end{equation}
	where
	\begin{enumerate}[label = \arabic*)]
		\item The backward-in-time flow set is constructed as
		\ifbool{conf}{$
			\bw{C}:= \fw{C}.
			$}{\begin{equation}
			\label{equation:backwardflowset}
			\bw{C}:= \fw{C}.
			\end{equation}}
		\item The backward-in-time flow map is constructed as
		\ifbool{conf}{$
			\bw{f}(x, u) :=  -\fw{f}(x, u)\ \text{for all } (x, u)\in \bw{C}.
			$}{\begin{equation}
			\label{equation:backwardflow}
			\begin{aligned}
			\bw{f}(x, u) :=  -\fw{f}(x, u) \quad\text{for all } (x, u)\in \bw{C}.
			\end{aligned}
			\end{equation} }
		
		\item The backward-in-time jump map is constructed as
		\ifbool{conf}{$
			\bw{g}(x, u) := \{z\in \mathbb{R}^{n}: x = \fw{g}(z, u),
			(z, u)\in \fw{D}\}\ \text{for all } (x, u)\in \mathbb{R}^{n}\times \mathbb{R}^{m}.
			$}{\begin{equation}
			\label{set:gbw}
			\begin{aligned}
			\bw{g}(x, u) &:= \{z\in \mathbb{R}^{n}: x = \fw{g}(z, u),\\
			&(z, u)\in \fw{D}\}\ \forall (x, u)\in \mathbb{R}^{n}\times \mathbb{R}^{m}.
			\end{aligned}
			\end{equation}}
		
		\item The backward-in-time jump set is constructed as
		\ifbool{conf}{$
			\bw{D} := \{(x, u)\in \mathbb{R}^{n}\times \mathbb{R}^{m}:
			\exists z\in\myreals[n]: x = \fw{g}(z, u), (z, u)\in \fw{D}\}.
			$}{\begin{equation}
			\label{set:dbw}
			\begin{aligned}
			\bw{D} &:= \{(x, u)\in \mathbb{R}^{n}\times \mathbb{R}^{m}:\\
			&\exists z\in\myreals[n]: x = \fw{g}(z, u), (z, u)\in \fw{D}\}.
			\end{aligned}
			\end{equation}}
	\end{enumerate}
\end{definition}
\ifbool{conf}{}{The above shows a general method of constructing the backward-in-time system given the forward-in-time hybrid system.}
While the jump map $\fw{g}$ of the forward-in-time system $\fw{\hs}$ is single-valued, the corresponding map $\bw{g}$ in $\bw{\hs}$ may not be, especially if $\bw{g}$ is not invertible. Therefore, a difference inclusion is used in (\ref{model:generalhybridbackwardsystem}). \ifbool{conf}{For an example illustrating Definition \ref{definition:bwhs} in Example \ref{example:bouncingball}, see \cite{wang_hyrrtconnect_2023}.}{We now illustrate the construction of the backward-in-time hybrid system for the actuated bouncing ball system \pn{in Example \ref{example:bouncingball}}. 

\begin{example}\label{example:bouncingballbw}
	(Actuated bouncing ball system in Example \ref{example:bouncingball}, revisited) The backward-in-time hybrid system of the actuated bouncing ball system is constructed as follows.
	\begin{itemize}
		\item From (\ref{equation:backwardflowset}), the backward-in-time flow set $C^{\text{bw}}$ is given by
		\ifbool{conf}{$
			\bw{C}:= \fw{C} =  \{(x, u)\in \mathbb{R}^{2}\times \mathbb{R}: x_{1} \geq 0\}.
			$}{\begin{equation}
			\label{equation:bouncingballbwflowset}
			\bw{C}:= \fw{C} =  \{(x, u)\in \mathbb{R}^{2}\times \mathbb{R}: x_{1} \geq 0\}.
			\end{equation}}
		\item From (\ref{equation:backwardflow}), the backward-in-time flow map $f^{\text{bw}}$ is given by
		\ifbool{conf}{$
			\bw{f}(x, u) := -\fw{f}(x, u) = \left[ \begin{matrix}
			-x_{2} \\
			\gamma
			\end{matrix}\right]\ \forall(x, u)\in \bw{C}.
			$}{\begin{equation}
			\label{equation:bouncingballbwflowmap}
			\bw{f}(x, u) := -\fw{f}(x, u) = \left[ \begin{matrix}
			-x_{2} \\
			\gamma
			\end{matrix}\right] \qquad \forall(x, u)\in \bw{C}.
			\end{equation}}
		
		\item From (\ref{set:dbw}), the backward-in-time jump set is given by
		\ifbool{conf}{\begin{equation}
			\label{equation:bouncingballbwjumpset}
			\bw{D} := \{(x, u)\in \mathbb{R}^{2}\times \mathbb{R}: x_{1} = 0, x_{2} \geq u, u\geq 0\}. 
			\end{equation}}{\begin{equation}
			\label{equation:bouncingballbwjumpset}
			\bw{D} := \{(x, u)\in \mathbb{R}^{2}\times \mathbb{R}: x_{1} = 0, x_{2} \geq u, u\geq 0\}. 
			\end{equation}}
		\item From (\ref{set:gbw}), since $\lambda\in (0, 1\pn{]}$, the backward-in-time jump map $g^{\text{bw}}$ is given by
		\ifbool{conf}{$
			\bw{g}(x, u) :=  \left[ \begin{matrix}
			x_{1} \\
			-\frac{x_{2}}{\lambda}+\frac{u}{\lambda}
			\end{matrix}\right]\newline \forall(x, u)\in \bw{D}.
			$}{\begin{equation}
			\label{equation:bouncingballbwjumpmap}
			\bw{g}(x, u) :=  \left[ \begin{matrix}
			x_{1} \\
			-\frac{x_{2}}{\lambda}+\frac{u}{\lambda}
			\end{matrix}\right]\ \forall(x, u)\in \bw{D}.
			\end{equation}}
	\end{itemize}
	Note \pn{that when $\lambda = 0$, $g$ is not invertible. In this case,} the backward-in-time jump map is given by
	\ifbool{conf}{$
		\bw{g}(x, u) :=  \{x_{1}\}\times\myreals_{\leq 0} \quad \forall(x, u)\in \bw{D}
		$}{$$
		\bw{g}(x, u) :=  \{x_{1}\}\times\myreals_{\leq 0} \qquad \forall(x, u)\in \bw{D}
		$$}
	where, \pn{as a difference to (\ref{equation:bouncingballbwjumpset}),} the backward-in-time jump set is given by
	\ifbool{conf}{$
		\bw{D} := \{(x, u)\in \mathbb{R}^{2}\times \mathbb{R}: x_{1} = 0, x_{2} = u \geq 0\}.
		$}{$$
		\bw{D} := \{(x, u)\in \mathbb{R}^{2}\times \mathbb{R}: x_{1} = 0, x_{2} = u \geq 0\}.
		$$}

\end{example}}
\vspace{-0.3cm}
\subsection{Construction of Motion Plans}\label{section:constructionmotionplan}
\vspace{-0.2cm}
To construct a motion plan, HyRRT-Connect reverses a solution pair associated with a path detected in $\mathcal{T}^{\text{bw}}$ and concatenates it with a solution pair associated with a path detected in $\mathcal{T}^{\text{fw}}$. The concatenation operation is defined in Definition \ref{definition:concatenation} and the reversal operation is introduced below. Next, Proposition \ref{theorem:concatenationsolution} shows that the concatenation result is a solution pair to $\fw{\hs}$ under mild conditions. 
\begin{proposition}
	\label{theorem:concatenationsolution}
	Given two solution pairs $\psi_{1} = (\phi_{1}, u_{1})$ and $\psi_{2} = (\phi_{2}, u_{2})$ to a hybrid system $\fw{\hs}$, their concatenation $\psi = (\phi, u) = (\phi_{1}|\phi_{2}, u_{1}|u_{2})$, denoted $\psi = \psi_{1}|\psi_{2}$,  is a solution pair to $\fw{\hs}$ if the following hold:
	\begin{enumerate}[label = \arabic*)]
		\item $\psi_{1} = (\phi_{1}, u_{1})$ is compact;
		\item $\phi_{1}(T, J) = \phi_{2}(0,0)$, where $(T, J) = \max \dom \psi_{1}$;
		\item If both $I_{\psi_{1}}^{J}$ and $I_{\psi_{2}}^{0}$ have nonempty interior, where $I_{\psi}^{j} = \{t: (t, j)\in \dom \psi\}$ and $(T, J) = \max \dom \psi_{1}$, then $\psi_{2}(0, 0)\in C$.
	\end{enumerate}
\end{proposition}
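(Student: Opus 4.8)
The plan is to verify directly that $\psi = (\phi, u) = (\phi_1|\phi_2, u_1|u_2)$ meets the three defining conditions of a solution pair in Definition \ref{definition:solution}. Hypothesis 1) guarantees that $(T,J) := \max \dom \psi_1$ exists, so the concatenation of Definition \ref{definition:concatenation} is well defined; because the same $(T,J)$ and shift are applied to both components, $\dom \phi = \dom u$ and $\dom \phi$ is again a hybrid time domain. First I would confirm that $\phi$ is a hybrid arc and $u$ a hybrid input: on every interval $I^{j}_{\phi}$ with $j \neq J$ the restriction coincides with that of $\phi_1$ (for $j < J$) or of $\phi_2$ (for $j > J$), so local absolute continuity of $\phi$ and measurability/essential boundedness of $u$ are inherited. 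The only interval requiring care is $I^{J}_{\phi}$, which is the union of the terminal level-$J$ interval of $\psi_1$ and the shifted initial level-$0$ interval of $\psi_2$, abutting at $t = T$; here hypothesis 2), $\phi_1(T,J) = \phi_2(0,0)$, forces the two pieces to agree at the join, so absolute continuity across the union follows from absolute continuity on each piece.

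Next I would check condition 1) of Definition \ref{definition:solution}. When $(T,J) \neq (0,0)$ we have $(\phi(0,0), u(0,0)) = (\phi_1(0,0), u_1(0,0)) \in \overline{C} \cup D$ since $\psi_1$ is a solution pair, while the degenerate case $(T,J) = (0,0)$ collapses $\psi$ to $\psi_2$ and is immediate. Condition 2) of Definition \ref{definition:solution} (flow) is inherited from $\psi_1$ for each level $j < J$ and from $\psi_2$ for each level $j > J$. The crux is level $j = J$: I would split $\interior I^{J}_{\phi}$ into the portion left of $T$ coming from $\psi_1$, the portion right of $T$ coming from $\psi_2$, and the single join point $t = T$. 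On the two open portions both $(\phi, u) \in C$ and the flow equation hold by the solution-pair property of $\psi_1$ and $\psi_2$, respectively, and the flow equation then holds for almost every $t \in I^{J}_{\phi}$ because $\{T\}$ is a null set. What remains is membership $(\phi(T,J), u(T,J)) \in C$; since the concatenation assigns $(\phi(T,J), u(T,J)) = (\phi_2(0,0), u_2(0,0)) = \psi_2(0,0)$, this is exactly what hypothesis 3) provides, and it is needed precisely when $T \in \interior I^{J}_{\phi}$, i.e.\ when both abutting intervals have nonempty interior.

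Then I would dispatch condition 3) of Definition \ref{definition:solution} (jumps). Any jump $(t,j) \to (t, j+1)$ contained in $\dom \phi_1$ or in the shifted $\dom \phi_2$ inherits $(\phi(t,j), u(t,j)) \in D$ and $\phi(t,j+1) = g(\phi(t,j), u(t,j))$ directly from $\psi_1$ or $\psi_2$, using hypothesis 2) to reconcile the value of $\phi$ at $(T,J)$ where needed. The one new possibility is a jump at the join, which occurs exactly when $\psi_2$ begins with a jump $(0,0) \to (0,1)$; there the required relations at $(T,J) \to (T, J+1)$ coincide verbatim with the first jump relation of $\psi_2$, via $\phi(T,J) = \phi_2(0,0)$ and $\phi(T, J+1) = \phi_2(0,1)$, so they hold automatically without recourse to hypothesis 3).

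I expect the main obstacle to be the bookkeeping at level $j = J$, where the two solution pairs meet: one must distinguish the case in which both abutting intervals are nondegenerate flow intervals --- the only case placing $t = T$ in $\interior I^{J}_{\phi}$, and hence the only case requiring hypothesis 3) to put $\psi_2(0,0)$ in $C$ --- from the cases in which at least one of $I_{\psi_1}^{J}$, $I_{\psi_2}^{0}$ reduces to a single point, where the level-$J$ flow and any join jump are governed entirely by a single piece. Making these degenerate sub-cases rigorous, including the fully trivial $\psi_1 = \{(0,0)\}$, is where the argument is most error-prone, although none presents a real difficulty once hypothesis 2) is used to guarantee that $\phi$ takes a consistent value at the join.
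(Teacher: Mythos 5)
Your proposal is correct and follows essentially the same route as the paper's proof: a direct verification of Definition \ref{definition:solution} via the level split $j<J$, $j>J$, $j=J$, using hypothesis 2) for absolute continuity at the join and to reconcile the value of $\phi$ at $(T,J)$ when $\psi_{1}$ ends with a jump, and hypothesis 3) exactly for the flow-set membership of $\psi_{2}(0,0)$ when $T$ lies in $\interior I^{J}_{\phi}$. The paper merely packages the same case analysis as a chain of lemmas (hybrid time domain, hybrid input, hybrid arc, then the flow and jump conditions), so the two arguments coincide in substance.
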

\ifbool{conf}{}{\begin{proof}
See Appendix \ref{appendix:proof_concatenation}.
\end{proof}}

\begin{remark}
	Item 1 in Proposition \ref{theorem:concatenationsolution} guarantees that $\psi_{2}$ can be concatenated to $\psi_{1}$. The concatenation operation defined in Definition \ref{definition:concatenation} suggests that if $\psi_{2}$ can be concatenated to $\psi_{1}$, $\psi_{1}$ is required to be compact. 
	Item 2 in Proposition \ref{theorem:concatenationsolution} guarantees that the concatenation $\psi$ satisfies the requirement of being absolutely continuous \ifbool{conf}{in the definition of hybrid arc; see \cite{wang_hyrrtconnect_2023},}{in Definition \ref{definition:hybridarc}} at hybrid time $(T, J)$, where $(T, J) = \max \dom \psi_{1}$.
	Item 3 in Proposition \ref{theorem:concatenationsolution} guarantees that the concatenation result $\psi$ satisfies item 2 in Definition \ref{definition:solution} at hybrid time $(T, J)$, where $(T, J) = \max \dom \psi_{1}$. Note that item 2 therein does not require that $\psi_{1}(T, J)\in C$ and $\psi_{2}(0,0)\in C$ since $T\notin \interior I_{\psi_{1}}^{J}$ and $0\notin \interior I_{\psi_{2}}^{0}$. However, $T$ may belong to the interior of $I_{\psi}^{J}$ after concatenation. Hence, item 3 guarantees that if $T$ belongs to the interior of $I_{\psi}^{J}$ after concatenation, $\psi$ still satisfies item 2 in Definition \ref{definition:solution}. 
	$\hfill \blacktriangle$
\end{remark}
The reversal operation to reverse the solution pair to $\bw{\hs}$, for its concatenation to a solution to $\fw{\hs}$, is defined next.
\begin{definition}[Reversal of a solution pair]
	\label{definition:reversedhybrdarc}
	Given a compact solution pair $(\phi, u)$ to $\fw{\hs} = (\fw{C}, \fw{f}, \fw{D}, \fw{g})$, where $\phi: \dom \phi \to \mathbb{R}^{n}$, $u: \dom u \to \mathbb{R}^{m}$, and $(T, J) = \max \dom (\phi, u)$, the pair $(\phi', u')$ is the reversal of $(\phi, u)$, where $\phi': \dom \phi' \to \mathbb{R}^{n}$ with $\dom \phi' \subset \mathbb{R}_{\geq 0}\times \mathbb{N}$ and $u': \dom u' \to \mathbb{R}^{m}$ with $\dom u' = \dom \phi'$,  if the following hold:
	\begin{enumerate}[label = \arabic*)]
		\item  The function $\phi'$ is defined as
		\begin{enumerate}[label = \alph*)]
			\item $\dom \phi' = \{(T, J)\} - \dom \phi$, where the minus sign denotes Minkowski difference; 
			\item $\phi'(t, j) = \phi(T-t, J - j)$ for all $(t, j)\in \dom \phi'$.
		\end{enumerate}
		\item The function $u'$ is defined as
		\begin{enumerate}[label = \alph*)]
			\item $\dom u' = \{(T, J)\} - \dom u$, where the minus sign denotes Minkowski difference; 
			\item For all $j\in \mathbb{N}$ such that $I^{j} = \{t: (t, j)\in \dom u'\}$ has nonempty interior, 
			\begin{enumerate}[label = \roman*)]
				\item For all $t\in \interior I^{j}$,
				$
				u'(t, j) = u(T-t, J - j);
				$
				\item If $I^{0}$ has nonempty interior, then $u'(0, 0)\in \mathbb{R}^{m}$ is such that 
				$
				(\phi'(0, 0), u'(0, 0))\in \overline{\fw{C}};
				$
				\item For all $t\in \partial I^{j}$ such that $(t, j + 1)\notin \dom u'$ and $(t, j) \neq (0, 0)$,
				$
				u'(t, j)\in \mathbb{R}^{m}.
				$
			\end{enumerate}
			\item For all $(t, j)\in \dom u'$ such that $(t, j + 1)\in \dom u'$,
			$
			u'(t, j) = u(T- t, J - j -1).
			$
		\end{enumerate}
	\end{enumerate}
\end{definition}
Proposition \ref{theorem:reversedarcbwsystem} shows that the reversal of the solution pair to a hybrid system is a solution pair to its backward-in-time hybrid system. 
\begin{proposition}
	\label{theorem:reversedarcbwsystem}
	Given a hybrid system $\fw{\hs}$ and its backward-in-time system $\bw{\hs}$, if $\psi = (\phi, u)$ is a compact solution pair to $\fw{\hs}$, the reversal $\psi' = (\phi', u')$ of $\psi = (\phi, u)$ is a compact solution pair to $\hs^{\text{bw}}$.
\end{proposition}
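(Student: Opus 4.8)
The plan is to verify directly that the reversal $\psi' = (\phi', u')$ satisfies the three defining conditions of a solution pair (Definition \ref{definition:solution}) for $\bw{\hs}$, together with compactness. First I would record the bookkeeping induced by the reversal. Writing $\dom \phi = \bigcup_{j=0}^{J}([t_j, t_{j+1}]\times\{j\})$ with $0 = t_0 \le \dots \le t_{J+1} = T$ and $(T,J) = \max\dom\phi$, the set $\dom\phi' = \{(T,J)\} - \dom\phi$ relabels each $(t,j)$ as $(T-t, J-j)$; a short computation shows its interval at level $j'$ is $[T - t_{J-j'+1}, T - t_{J-j'}]$, and these endpoints chain up consistently from $0$ to $T$, so $\dom\phi'$ is again a compact hybrid time domain with $J$ jumps. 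Since reflection and translation preserve compactness and $\dom u' = \dom\phi'$ by construction, $\psi'$ is compact. Throughout, I would use the correspondence that a reversed hybrid time $(t,j')$ matches the original time $(\tau,k)$ via $\tau = T-t$ and, for jumps, $k = J - j' - 1$.

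For the flow condition (item 2 of Definition \ref{definition:solution}), I would fix $j'$ with $I^{j'}_{\phi'}$ of nonempty interior and set $j = J - j'$, so $I^j_\phi$ also has nonempty interior. On the interior, Definition \ref{definition:reversedhybrdarc}(1b) and (2b-i) give $\phi'(t,j') = \phi(T-t, j)$ and $u'(t,j') = u(T-t, j)$. For $t \in \interior I^{j'}_{\phi'}$ we have $T-t \in \interior I^j_\phi$, whence $(\phi(T-t,j), u(T-t,j)) \in \fw{C} = \bw{C}$ by the original solution and the construction $\bw{C} = \fw{C}$ in Definition \ref{definition:bwhs}; thus $(\phi'(t,j'), u'(t,j')) \in \bw{C}$. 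Absolute continuity is preserved under the affine reflection $t \mapsto T-t$, and the chain rule yields $\frac{d}{dt}\phi'(t,j') = -\dot\phi(T-t,j) = -\fw{f}(\phi'(t,j'), u'(t,j')) = \bw{f}(\phi'(t,j'), u'(t,j'))$ for almost every $t$, using $\bw{f} = -\fw{f}$ from Definition \ref{definition:bwhs}.

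The heart of the argument is the jump condition (item 3) together with the initial condition (item 1), which is exactly where the construction of $\bw{\hs}$ pays off. If $(t,j')$ and $(t,j'+1)$ both lie in $\dom\phi'$, then in the original solution there is a jump from $(\tau,k)$ to $(\tau,k+1)$ with $\tau = T-t$, $k = J-j'-1$; that is, $(\phi(\tau,k), u(\tau,k)) \in \fw{D}$ and $\phi(\tau,k+1) = \fw{g}(\phi(\tau,k), u(\tau,k))$. The reversal satisfies $\phi'(t,j') = \phi(\tau,k+1)$, $\phi'(t,j'+1) = \phi(\tau,k)$, and---crucially---Definition \ref{definition:reversedhybrdarc}(2c) assigns $u'(t,j') = u(\tau,k)$, i.e. the backward jump reuses the original \emph{pre}-jump input. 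Taking the witness $z = \phi(\tau,k)$ in the definitions of $\bw{D}$ and $\bw{g}$ (Definition \ref{definition:bwhs}) then gives $(\phi'(t,j'), u'(t,j')) \in \bw{D}$ and $\phi(\tau,k) = \phi'(t,j'+1) \in \bw{g}(\phi'(t,j'), u'(t,j'))$, as required. The same identity settles item 1 when the first reversed event is a jump, since the pair $(\phi'(0,0), u'(0,0)) = (\fw{g}(\phi(T,J-1), u(T,J-1)),\, u(T,J-1))$ then lies in $\bw{D}$; when instead $I^0_{\phi'}$ has nonempty interior, Definition \ref{definition:reversedhybrdarc}(2b-ii) forces $(\phi'(0,0), u'(0,0)) \in \overline{\fw{C}} = \overline{\bw{C}}$.

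I expect the main obstacle to be purely the index bookkeeping: keeping straight that the input attached to a reversed jump must be the original pre-jump value $u(\tau,k)$ rather than the post-jump value, since this is precisely the witness that makes the set-valued $\bw{g}$ and the set $\bw{D}$ fire. The analytic content---preservation of absolute continuity and the sign flip of the derivative under $t \mapsto T-t$---is routine once the reflection is in place, and the degenerate single-point domain ($T = J = 0$) can be checked trivially.
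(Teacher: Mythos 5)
Your proposal follows essentially the same route as the paper's proof: verifying Definition \ref{definition:solution} condition-by-condition, using the Minkowski-difference bookkeeping to show $\dom\psi'$ is a compact hybrid time domain, $\bw{C}=\fw{C}$ and $\bw{f}=-\fw{f}$ with the chain rule for flows, and the pre-jump input $u(\tau,k)$ as the witness that makes $\bw{D}$ and the set-valued $\bw{g}$ fire at jumps, including the two cases (flow vs.\ jump) for the initial point $\psi'(0,0)$. The only step you gloss over that the paper verifies at length is that $u'$ is itself a hybrid input, i.e.\ that $t\mapsto u'(t,j)$ is Lebesgue measurable and locally essentially bounded on each interval; this is routine under the reflection $t\mapsto T-t$ together with modification at finitely many endpoint times, but it is part of what must be checked.
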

\ifbool{conf}{}{\begin{proof}
See Appendix \ref{appendix:proof_reverse}.
\end{proof}}

Proposition \ref{theorem:concatenationsolution} and Proposition \ref{theorem:reversedarcbwsystem} validate the results of the concatenation and reversal operations, respectively. The following assumption integrates the conditions in Proposition \ref{theorem:concatenationsolution} and Proposition \ref{theorem:reversedarcbwsystem} and is imposed on the solution pairs that are used to construct motion plans. 
\begin{assumption}
	\label{assumption:theoreticalcondition}
	Given a solution pair $\psi_{1} = (\phi_{1}, u_{1})$ to a hybrid system $\fw{\hs} = (\fw{C}, \fw{f}, \fw{D}, \fw{g})$ and a solution pair $\psi_{2} = (\phi_{2}, u_{2})$ to the backward-in-time hybrid system $\bw{\hs}$ associated to $\fw{\hs}$, the following hold:
	\begin{enumerate}[label = \arabic*)]
		\item $\psi_{1} $ and $\psi_{2} $ are compact;
		\item $\phi_{1}(T_{1}, J_{1}) = \phi_{2}(T_{2}, J_{2})$, where $(T_{1}, J_{1}) = \max \dom \psi_{1}$ and $(T_{2}, J_{2}) = \max \dom \psi_{2}$;
		\item If both $I_{\psi_{1}}^{J_{1}}$ and $I_{\psi_{2}}^{J_{2}}$ have nonempty interior, where $I^{j}_{\psi} =\{t: (t, j)\in \dom \psi\}$, $(T_{1}, J_{1}) = \max \dom \psi_{1}$, and $(T_{2}, J_{2}) = \max$ $\dom \psi_{2}$, then $\psi_{2}(T_{2}, J_{2}) \in C$.
	\end{enumerate}
\end{assumption}
\begin{remark}
	Given a hybrid system $\fw{\hs}$, its backward-in-time system $\bw{\hs}$, a solution pair $\psi_{1}$ to $\fw{\hs}$, and a solution pair $\psi_{2}$ to $\bw{\hs}$, Assumption \ref{assumption:theoreticalcondition} is imposed on $\psi_{1}$ and $\psi_{2}$ to guarantee that the concatenation of the reversal of $\psi_{2}$ to $\psi_{1}$ is a solution pair to $\fw{\hs}$. 
	Assumption \ref{assumption:theoreticalcondition} guarantees that the conditions needed to apply Proposition \ref{theorem:reversedarcbwsystem} and Proposition  \ref{theorem:concatenationsolution} hold. Note that conditions that guarantee the existence of nontrivial solutions have been proposed  in \cite[Proposition 3.4]{chai2018forward}. If $\xi \in X_{0}$ is such that
	$\xi \in D'$, where $D'$ is defined in \ifbool{conf}{(\ref{equation:Cprime})}{(\ref{equation:Dprime})}, or there exist $\epsilon > 0$, an absolutely continuous function $z:[0, \epsilon] \to \mathbb{R}^{n}$ with $z(0) = \xi$, and a Lebesgue measurable and locally essentially bounded function
	$\tilde{u} : [0, \epsilon] \to U_{C}$ such that $(z(t),\tilde{u}(t))\in \fw{C}$ for all $t\in (0,\epsilon)$ and $\frac{\text{d}}{\text{d} t}z(t) = \fw{f}(z(t), \tilde{u}(t))$ for almost all $t \in [0, \epsilon]$, where\footnote{Given a flow set $\fw{C}\subset \mathbb{R}^{n}\times \mathbb{R}^{m}$, the set-valued maps $\Psi^{u}_{c}: \mathbb{R}^{n} \to U_{C}$ is defined for each $x\in \mathbb{R}^{n}$ as $\Psi^{u}_{c}(x):=\{ u \in U_{C}: (x, u)\in \fw{C}\}$.} $\tilde{u}(t)\in \Psi^{u}_{c}(z(t))$ for every $t\in [0, \epsilon]$,
	then the existence of nontrivial solution pairs is guaranteed from $\xi$. 
	Items 2 and 3 in Assumption \ref{assumption:theoreticalcondition} relate the final states and their ``last" interval of flow of the given solution pairs.\EndRemark
\end{remark}

The following result validates that the result constructed by the solution pairs satisfying Assumption \ref{assumption:theoreticalcondition} is a solution pair to $\fw{\hs}$. 
\begin{lemma}
	\label{lemma:reverseconcatenation}
	Given a hybrid system $\fw{\hs}$ and its backward-in-time hybrid system $\bw{\hs}$, if $\psi_{1}$ is a solution pair to $\fw{\hs}$ and $\psi_{2}$ is a solution pair to $\bw{\hs}$ such that $\psi_{1}$ and $\psi_{2}$ satisfy Assumption \ref{assumption:theoreticalcondition},  then the concatenation $\psi = \psi_{1}|\psi_{2}'$ is a solution pair to $\fw{\hs}$, where $\psi'_{2}$ is the reversal of $\psi_{2}$.
\end{lemma}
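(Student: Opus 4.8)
The plan is to prove the claim in two stages. First I would establish that the reversal $\psi_2'$ of the backward solution pair $\psi_2$ is itself a solution pair to $\fw{\hs}$; then I would invoke Proposition \ref{theorem:concatenationsolution} to conclude that the concatenation $\psi = \psi_1|\psi_2'$ is a solution pair to $\fw{\hs}$. The role of Assumption \ref{assumption:theoreticalcondition} is exactly to guarantee that, once $\psi_2$ has been reversed, the three hypotheses of Proposition \ref{theorem:concatenationsolution} hold for the pair $(\psi_1, \psi_2')$; the reversal in Definition \ref{definition:reversedhybrdarc} is precisely what converts the conditions of Assumption \ref{assumption:theoreticalcondition}, stated at the \emph{final} hybrid time of $\psi_2$, into conditions at the \emph{initial} hybrid time of $\psi_2'$.

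For the first stage I would argue that the reversal of any compact solution pair to $\bw{\hs}$ is a compact solution pair to $\fw{\hs}$. This is the counterpart of Proposition \ref{theorem:reversedarcbwsystem}, with the roles of $\fw{\hs}$ and $\bw{\hs}$ interchanged, and its proof follows the same lines: reversal preserves compactness and hybrid-time-domain structure; on each flow interval the time reflection $\phi_2'(t,j) = \phi_2(T_2 - t, J_2 - j)$ turns $\dot\phi_2 = \bw{f}(\phi_2, u_2) = -\fw{f}(\phi_2, u_2)$ into $\dot\phi_2' = \fw{f}(\phi_2', u_2')$ almost everywhere, while $\bw{C} = \fw{C}$ secures the flow condition of Definition \ref{definition:solution}; and at each jump the relation $\phi_2(t,j+1) \in \bw{g}(\phi_2(t,j), u_2(t,j))$ with $(\phi_2(t,j), u_2(t,j)) \in \bw{D}$ unwinds, via the construction of $\bw{g}$ and $\bw{D}$ in Definition \ref{definition:bwhs}, into the forward jump relation $\phi_2'(\cdot, \cdot+1) = \fw{g}(\phi_2'(\cdot, \cdot), u_2'(\cdot, \cdot))$ with the jump pair in $\fw{D}$. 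Here the input assignment $u'(t,j) = u(T-t, J-j-1)$ at jumps in Definition \ref{definition:reversedhybrdarc} is exactly what aligns the reversed jump with $\fw{g}$, and the freedom in $u_2'$ at isolated flow-boundary points does not affect any of these conditions, since flow is governed by the derivative almost everywhere and jumps use only the state--input value at the jump time.

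For the second stage I would verify the three items of Proposition \ref{theorem:concatenationsolution} for the pair $(\psi_1, \psi_2')$. Item 1 (compactness of $\psi_1$) is item 1 of Assumption \ref{assumption:theoreticalcondition}. For item 2, the reversal gives $\phi_2'(0,0) = \phi_2(T_2, J_2)$, so item 2 of Assumption \ref{assumption:theoreticalcondition}, namely $\phi_1(T_1, J_1) = \phi_2(T_2, J_2)$, yields $\phi_1(T_1, J_1) = \phi_2'(0,0)$ with $(T_1, J_1) = \max \dom \psi_1$. For item 3, note that under reversal the last flow interval $I_{\psi_2}^{J_2}$ of $\psi_2$ becomes the first flow interval $I_{\psi_2'}^{0}$ of $\psi_2'$, so their interiors are nonempty simultaneously; in that case item 3 of Assumption \ref{assumption:theoreticalcondition} gives $(\phi_2(T_2, J_2), u_2(T_2, J_2)) \in \fw{C}$. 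Since $\phi_2'(0,0) = \phi_2(T_2, J_2)$ and Definition \ref{definition:reversedhybrdarc} leaves $u_2'(0,0)$ free subject only to $(\phi_2'(0,0), u_2'(0,0)) \in \overline{\fw{C}}$, I would select $u_2'(0,0) = u_2(T_2, J_2)$; this admissible choice yields $\psi_2'(0,0) = (\phi_2(T_2, J_2), u_2(T_2, J_2)) \in \fw{C}$, which is precisely the requirement $\psi_2'(0,0) \in C$ of item 3. Altering the input at the single boundary point $(0,0)$ does not disturb the first-stage conclusion that $\psi_2'$ is a solution pair to $\fw{\hs}$. Having checked all three hypotheses, Proposition \ref{theorem:concatenationsolution} applies and $\psi = \psi_1|\psi_2'$ is a solution pair to $\fw{\hs}$.

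The main obstacle I anticipate is the first stage, i.e., justifying that reversing a solution to $\bw{\hs}$ returns a solution to $\fw{\hs}$. One cannot merely cite Proposition \ref{theorem:reversedarcbwsystem}, because it sends solutions of a forward system to its backward-in-time system, and the backward-in-time construction of Definition \ref{definition:bwhs} presumes a \emph{single-valued} jump map, whereas $\bw{g}$ is in general set-valued; consequently the identity $(\bw{\hs})^{\mathrm{bw}} = \fw{\hs}$ is not immediate. The crux is therefore the jump relation: the key observation is that $x \in \bw{g}(z,u)$ with $(z,u) \in \bw{D}$ is, by the very definitions of $\bw{g}$ and $\bw{D}$, equivalent to $z = \fw{g}(x,u)$ with $(x,u) \in \fw{D}$, so that the reversed jumps of a $\bw{\hs}$-solution satisfy the forward jump map and forward jump set. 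Establishing this equivalence carefully is what makes the reversal map solutions of $\bw{\hs}$ back to solutions of $\fw{\hs}$, and it is the part of the argument requiring the most care.
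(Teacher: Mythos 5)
Your proof is correct and follows the same two-stage decomposition as the paper's own proof: first establish that the reversal $\psi_{2}'$ is a solution pair to $\fw{\hs}$, then verify the three hypotheses of Proposition \ref{theorem:concatenationsolution} using Assumption \ref{assumption:theoreticalcondition}. The difference lies in how stage one is discharged, and here your version is genuinely stronger. The paper simply cites Proposition \ref{theorem:reversedarcbwsystem} to conclude that $\psi_{2}'$ is a solution pair to $\fw{\hs}$, even though that proposition, as stated, maps solutions of a forward system to its backward-in-time system --- it is invoked in the opposite direction, implicitly using $(\bw{\hs})^{\mathrm{bw}} = \fw{\hs}$, an identity that Definition \ref{definition:bwhs} does not literally provide since $\bw{g}$ may be set-valued while the construction there starts from a single-valued jump map. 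You identified exactly this gap and supplied the missing argument: the equivalence, built into Definition \ref{definition:bwhs}, between $x \in \bw{g}(z,u)$ with $(z,u)\in\bw{D}$ and $z = \fw{g}(x,u)$ with $(x,u)\in\fw{D}$, together with $\bw{C}=\fw{C}$ and $\bw{f}=-\fw{f}$, which lets the mirror of the proof of Proposition \ref{theorem:reversedarcbwsystem} go through. You are also more careful on a second point: the paper asserts $\psi_{2}(T_{2},J_{2})=\psi_{2}'(0,0)$ as an identity of state--input pairs, whereas Definition \ref{definition:reversedhybrdarc} leaves $u_{2}'(0,0)$ free subject only to $(\phi_{2}'(0,0),u_{2}'(0,0))\in\overline{\fw{C}}$; your explicit choice $u_{2}'(0,0)=u_{2}(T_{2},J_{2})$ --- admissible precisely because item 3 of Assumption \ref{assumption:theoreticalcondition} places $\psi_{2}(T_{2},J_{2})\in C$ --- is what actually delivers item 3 of Proposition \ref{theorem:concatenationsolution}. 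What the paper's route buys is brevity; what yours buys is an argument that survives the set-valuedness of $\bw{g}$ and the nonuniqueness of the reversed input at $(0,0)$, both of which the paper glosses over.
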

\ifbool{conf}{}{\begin{proof}
	According to Definition \ref{definition:reversedhybrdarc}, the reversal operation can be performed on $\psi_{2}$ when $\psi_{2}$ is compact. The first condition stipulated in Assumption \ref{assumption:theoreticalcondition} ensures that $\psi_{2}$ is compact. As per Proposition \ref{theorem:reversedarcbwsystem}, the reversed trajectory $\psi'_{2}$ of $\psi_{2}$ constitutes a solution pair to $\fw{\hs}$.
	
	Next, we show that each condition in Proposition \ref{theorem:concatenationsolution} are satisfied.
	\begin{enumerate}[label = \arabic*)]
			\item In Assumption \ref{assumption:theoreticalcondition}, the first condition demonstrates the compactness of $\psi_{1}$. Consequently, this ensures the fulfillment of the first condition in Proposition \ref{theorem:concatenationsolution}.
			\item It is noted that the second condition of Assumption \ref{assumption:theoreticalcondition} implies $\phi_{1}(T_{1}, J_{1}) = \phi_{2}(T_{2}, J_{2})$. Given that $\phi'_{2}(0, 0) =  \phi_{2}(T_{2}, J_{2})$,  it follows that the equality $\phi_{1}(T_{1}, J_{1}) = \phi'_{2}(0, 0)$ holds. Hence, this establishes that $\psi_{1}$ and $\psi'_{2}$ meet the second condition specified in Proposition  \ref{theorem:concatenationsolution}.
			\item Given that $I_{\psi_{2}}^{J_{2}} = I_{\psi'_{2}}^{0}$ and $\psi_{2}(T_{2}, J_{2}) = \psi'_{2}(0, 0)$, the third condition in Assumption \ref{assumption:theoreticalcondition}  indicates that $\psi_{1}$ and $\psi'_{2}$ meet the requirements of the third condition in Proposition \ref{theorem:concatenationsolution}.
	\end{enumerate}
	Given that the conditions in Proposition \ref{theorem:concatenationsolution} are met, Proposition \ref{theorem:concatenationsolution} ensures that the concatenation of $\psi'_{2}$ to $\psi_{1}$ is a solution pair to ~$\fw{\hs}$.
\end{proof}}
Lemma \ref{lemma:reverseconcatenation} is exploited by our forthcoming HyRRT-Connect algorithm when detecting overlaps between $\fw{\stree}$ and $\bw{\stree}$.
\subsection{HyRRT-Connect Algorithm}
The proposed algorithm is given in Algorithm \ref{algo:bihyrrt}. The inputs of Algorithm \ref{algo:bihyrrt} are the problem $\mathcal{P} = (X_{0}, X_{f}, X_{u}, (\fw{C}, \ifbool{conf}{\newline}{}\fw{f}, \fw{D}, \fw{g}))$, the backward-in-time hybrid system $\bw{\hs}$ obtained from (\ref{model:generalhybridbackwardsystem}), the input libraries $\fw{\il}$ and  $\bw{\il}$, two parameters $\fw{p}_{n}\in (0, 1)$ and $\bw{p}_{n}\in (0, 1)$, which tune the probability of proceeding with the flow regime or the jump regime during the forward and backward construction, respectively, an upper bound $K\in \mathbb{N}_{>0}$ for the number of iterations to execute, and four tunable sets $\fw{X}_{c}\supset \overline{C^{\text{fw}'}}$, $\fw{X}_{d}\supset \overline{D^{\text{fw}'}}$, $\bw{X}_{c}\supset \overline{C^{\text{bw}'}}$ and $\bw{X}_{d}\supset \overline{D^{\text{bw}'}}$ where $C^{\text{fw}'}$, $C^{\text{bw}'}$, $D^{\text{fw}'}$ and $D^{\text{bw}'}$ are defined as in  (\ref{equation:Cprime})\ifbool{conf}{}{ and (\ref{equation:Dprime})},  which act as constraints in finding a closest vertex to $x_{rand}$. \textbf{Step} 1 in Section \ref{section:algorithmoverview} corresponds to the function calls $\fw{\stree}.\texttt{init}$ and $\bw{\stree}.\texttt{init}$ in line 1 of Algorithm \ref{algo:bihyrrt}. The construction of $\fw{\stree}$ in \textbf{Step} 2 is implemented in lines 3 - 10. The construction of $\bw{\stree}$ in \textbf{Step} 2 is implemented in lines 11 - 18. The solution checking in \textbf{Step} 3 is executed depending on the return of the function call $\texttt{extend}$ and will be further discussed in\ifbool{conf}{}{\footnote{The solution checking process in \textbf{Step} 3 is not reflected in Algorithm \ref{algo:bihyrrt}. This omission is intentional, as practitioners might devise varied termination conditions for the HyRRT-Connect algorithm. }} Section \ref{section:solutionchecking}.
\begin{algorithm}[htbp]\small
	\caption{\small HyRRT-Connect algorithm}
	\label{algo:bihyrrt}
	\hspace*{\algorithmicindent} \textbf{Input}: $X_{0}, X_{f}, X_{u}, \fw{\hs} = (\fw{C}, \fw{f}, \fw{D}, \fw{g}), \bw{\hs} \newline= (\bw{C}, \bw{f}, \bw{D}, \bw{g}), (\mathcal{U}_{C}, \mathcal{U}_{D}), \fw{p}_{n},  \bw{p}_{n} \in (0, 1)$, $K\in \mathbb{N}_{>0}$, $\fw{X}_{c}\supset \overline{C^{\text{fw}'}}$, $\fw{X}_{d}\supset \overline{D^{\text{fw}'}}$, $\bw{X}_{c}\supset \overline{C^{\text{bw}'}}$ and $\bw{X}_{d}\supset \overline{D^{\text{bw}'}}$.
	\begin{algorithmic}[1]
		\State $\fw{\stree}.\texttt{init}(X_{0})$, $\bw{\stree}.\texttt{init}(X_{f})$
		\For{$k = 1$ to $K$}
		\State randomly select a real number $\fw{r}$ from $[0, 1]$.
		\If{$\fw{r}\leq \fw{p}_{n}$}
		\State $\fw{x}_{rand}\leftarrow \texttt{random\_state}(\overline{C^{\text{fw}'}})$.
		\State $\texttt{extend}(\fw{\stree}, \fw{x}_{rand}, (\fw{\il}_{C}, \fw{\il}_{D}), \fw{\hs}, X_{u}, \fw{X}_{c})$.
		\Else
		\State $\fw{x}_{rand}\leftarrow \texttt{random\_state}(D^{\text{fw}'})$.
		\State $\texttt{extend}(\fw{\stree}, \fw{x}_{rand}, (\fw{\il}_{C}, \fw{\il}_{D}), \fw{\hs}, X_{u}, \fw{X}_{d})$.
		\EndIf
		\State \hspace{-0.3cm}randomly select a real number $\bw{r}$ from $[0, 1]$.
		\If{$\bw{r}\leq \bw{p}_{n}$}
		\State $\bw{x}_{rand}\leftarrow \texttt{random\_state}(\overline{C^{\text{bw}'}})$.
		\State $\texttt{extend}(\bw{\stree}, \bw{x}_{rand}, (\bw{\il}_{C}, \bw{\il}_{D}), \bw{\hs}, X_{u}, \bw{X}_{c})$.
		\Else
		\State $\bw{x}_{rand}\leftarrow \texttt{random\_state}(D^{\text{bw}'})$.
		\State $\texttt{extend}(\bw{\stree}, \bw{x}_{rand}, (\bw{\il}_{C}, \bw{\il}_{D}), \bw{\hs}, X_{u}, \bw{X}_{d})$.
		\EndIf
		\EndFor
	\end{algorithmic}
\end{algorithm}
\begin{algorithm}[htbp]
	\caption{\small Extend function}\small
	\label{algo:extend}
	\begin{algorithmic}[1]
		\Function{extend}{$(\mathcal{T}, x, (\mathcal{U}_{C}, \mathcal{U}_{D}), \mathcal{H}, X_{u}, X_{*})$}
		\State $v_{cur}\leftarrow \texttt{nearest\_neighbor}(x, \mathcal{T}, \mathcal{H}, X_{*})$;
		\State $(\texttt{is\_a\_new\_vertex\_generated}, x_{new}, \psi_{new}) \leftarrow\texttt{new\_state}\ifbool{conf}{\newline}{}(v_{cur}, (\mathcal{U}_{C}, \mathcal{U}_{D}),\ifbool{conf}{}{\newline} \mathcal{H}, X_{u})$
		\If {$\texttt{is\_a\_new\_vertex\_generated} = true$}
		\State $v_{new} \leftarrow \mathcal{T}.\texttt{add\_vertex}(x_{new})$;
		\State $\mathcal{T}.\texttt{add\_edge}(v_{cur}, v_{new}, \psi_{new})$;
		\State \Return $\texttt{Advanced}$;
		\EndIf
		\State \Return $\texttt{Trapped}$;
		\EndFunction
	\end{algorithmic}
\end{algorithm}
\ifbool{conf}{Due to space limitations, for the definition of the function calls in Algorithm \ref{algo:bihyrrt} and Algorithm \ref{algo:extend}, see  \cite{wang_hyrrtconnect_2023}.}{Each function in Algorithm \ref{algo:bihyrrt} is defined next.}
\ifbool{conf}{}{\subsubsection{$\mathcal{T}.\texttt{init}(X)$:} 
	The $\mathcal{T}.\texttt{init}$ function initializes the search tree $\mathcal{T} = (V, E)$ by randomly selecting points from set $X$, which can be either $X_{0}$ or $X_{f}$.  For each sampling point $x_{0}\in X$, a corresponding vertex $v_{0}$ is added to $V$, while no edges are added to $E$ at this stage.
	\subsubsection{$x_{rand}$$\leftarrow$$\texttt{random\_state}(S)$:} 
	The function call $\texttt{random\_state}$ randomly selects a point from the set $S\subset \mathbb{R}^{n}$. \ifbool{conf}{}{Rather than to select from $\overline{C'}\cup D'$, it is designed to select points from $\overline{C'}$ and $D'$ separately depending on the value of $r$ . The reason is that if $\overline{C'}$ (or $D'$) has zero measure while $D'$ (respectively, $\overline{C'}$) does not, the probability that the point selected from $\overline{C'}\cup D'$ lies in $\overline{C'}$ (respectively, $D'$) is zero, which would prevent establishing probabilistic completeness.}

\subsubsection{$v_{cur}$$\leftarrow$$\texttt{nearest\_neighbor}$$(x_{rand}, $$\mathcal{T}, $$\mathcal{H}, X^{\Delta}_{*})$:} 
The function call $\texttt{nearest\_neighbor}$ searches for a vertex $v_{cur}$ in the search tree $\mathcal{T} = (V, E)$ such that its associated state value has minimal distance to $x_{rand}$. This function is implemented as solving the following optimization problem over $X^{\Delta}_{\star}$, where $\star$ is either $c$ or $d$ and $\Delta$ is either $\texttt{fw}$ or $\texttt{bw}$.
\begin{problem}
	\label{problem:nearestneighbor}
	Given a hybrid system $\mathcal{H} = (C, f, D, g)$, $x_{rand}\in \myreals[n]$, and a search tree $\mathcal{T} = (V, E)$, solve
	$$
	\begin{aligned}
	\argmin_{v\in V}& \quad |\overline{x}_{v} -  x_{rand}|\\
	\textrm{s.t.}& \quad\overline{x}_{v} \in X^{\Delta}_{\star}.
	\end{aligned}
	$$
\end{problem}
The data of Problem \ref{problem:nearestneighbor} comes from the arguments of the $\texttt{nearest\_neighbor}$ function call. This optimization problem can be solved by traversing all the vertices in $V$.
\subsubsection{$(\texttt{is\_a\_new\_vertex\_generated}, x_{new}, \psi_{new} ) \leftarrow \texttt{new\_state}(v_{cur}, (\mathcal{U}_{C}, \mathcal{U}_{D}),\ifbool{conf}{}{\newline} \mathcal{H} = (C, f, D, g) , X_{u})$:} 	\label{section:newstate}
If $\overline{x}_{v_{cur}}\in \overline{C'}\backslash D'$ ($\overline{x}_{v_{cur}}$$\in$$D'\backslash \overline{C'}$), the function call $\texttt{new\_state}$ generates a new solution pair $\psi_{new}$ to hybrid system $\mathcal{H}$ starting from $\overline{x}_{v_{cur}}$ by applying a input signal $\tilde{u}$ (respectively, an input value $u_{D}$) randomly selected from $\mathcal{U}_{C}$ (respectively, $\mathcal{U}_{D}$). 
If $\overline{x}_{v_{cur}}$$\in\overline{C'}\cap D'$, then this function generates $\psi_{new}$ by randomly selecting flows or jump. The final state of $\psi_{new}$ is denoted as $x_{new}$.
\ifbool{conf}{}{Note that the choices of inputs are random. Some RRT variants choose the optimal input that drives $x_{new}$ closest to $x_{rand}$. However, \cite{kunz2015kinodynamic} proves that such a choice makes the RRT algorithm probabilistically incomplete. After $\psi_{new}$ and $x_{new}$ are generated, the function $\texttt{new\_state}$ checks if there exists $(t, j)\in \dom \psi_{new}$ such that $\psi_{new}(t, j)\in X_{u}$. If so, then $\psi_{new}$ intersects with the unsafe set and $\texttt{new\_state}$ returns $\texttt{false}$. Otherwise, this function returns $\texttt{true}$.}
After $\psi_{new}$ and $x_{new}$ are generated, the function $\texttt{new\_state}$ evaluates if $\psi_{new}$ is trivial. If it is,  indicating no exploration of new space, addition to $\mathcal{T}$ is unnecessary, setting  $\texttt{is\_a\_new\_vertex\_generated}\leftarrow \texttt{false}$. The function call $\texttt{new\_state}$ then returns. Else, the function $\texttt{new\_state}$ finds $(t, j)\in \dom \psi_{new}$ such that $\psi_{new}(t, j)\in X_{u}$, implying intersection with the unsafe set, then it sets $\texttt{is\_a\_new\_vertex\_generated}\leftarrow \texttt{false}$. If neither condition is met, $\texttt{is\_a\_new\_vertex\_generated}\leftarrow \texttt{true}$. }
\ifbool{conf}{}{
\subsubsection{$v_{new}\leftarrow\mathcal{T}.\texttt{add\_vertex}(x_{new})$ and  $\mathcal{T}.\texttt{add\_edge}(v_{cur}, v_{new}, \psi_{new})$:} 
The function call $\mathcal{T}.\texttt{add\_vertex}(x_{new})$ adds a new vertex $v_{new}$ associated with $x_{new}$ to $\mathcal{T}$ and returns $v_{new}$. The function call $\mathcal{T}.\texttt{add\_edge}(v_{cur}, v_{new}, \psi_{new})$ adds a new edge $e_{new} = (v_{cur}, v_{new})$ associated with $\psi_{new}$  to $\mathcal{T}$. }
\section{Motion Plan Identification and Reconstruction}\label{section:solutionchecking}
The following two scenarios are identified where a motion plan can be constructed by utilizing one path from $\fw{\stree}$ and another from $\bw{\stree}$:
\begin{enumerate}[label=S\arabic*)]
	\item A vertex in $\fw{\stree}$ is associated with the same state in the flow set as some vertex in $\bw{\stree}$.
	\item A vertex in $\fw{\stree}$ is associated with a state such that a forward-in-hybrid time jump from such state results in the state associated with some vertex in $\bw{\stree}$, or conversely, a vertex in $\bw{\stree}$ is associated with a state such that a backward-in-hybrid time jump from such state results in the state associated with some vertex in $\fw{\stree}$.
\end{enumerate}
\ifbool{conf}{}{These scenarios provide solution pairs that can be used to construct a motion plan by reversing a solution pair in $\bw{\stree}$ and concatenating its reversal to a solution pair in $\fw{\stree}$, as guaranteed by Lemma \ref{lemma:reverseconcatenation}.} In the HyRRT-Connect algorithm, each of these scenarios is evaluated whenever an $\texttt{Advanced}$ signal is returned by the $\texttt{extend}$ function. \ifbool{conf}{}{However, the random selection of the \pn{inputs} prevents the exact satisfaction of S1 in Assumption \ref{assumption:theoreticalcondition}.} Neglecting approximation errors due to numerical computation, it is typically possible to solve for an exact input at a jump from one state to an other, as required in S2. However, due to the random selection of the inputs  and the family of signals used, satisfying S1 is not typically possible. This may lead to a discontinuity along the flow in the resulting motion plan. A reconstruction process is introduced below to address this issue. \ifbool{conf}{}{\pn{This process} propagates forward in hybrid time from the state associated with the vertex identified in $\fw{\stree}$. Next, we \pn{present an implementation of identifying two paths from $\fw{\stree}$ and $\bw{\stree}$ in} S1 and S2, respectively.}
\subsection{Same State Associated with Vertices in $\fw{\stree}$ and $\bw{\stree}$}
\ifbool{conf}{\vspace{-0.4cm}}{}
\label{section:checksolution_stateequal}
In S1, HyRRT-Connect identifies if there exists a path
\begin{equation}\label{equation:fwpath}
\begin{aligned}\fw{p} &:= ((\fw{v}_{0}, \fw{v}_{1}), (\fw{v}_{1}, \fw{v}_{2}), ..., (\fw{v}_{m - 1}, \fw{v}_{m})) \\
&=: (\fw{e}_{0}, \fw{e}_{1}, ..., \fw{e}_{m - 1})
\end{aligned}
\end{equation}
in $\fw{\stree}$, where $m\in\nnumbers$, and a path 
\begin{equation}\label{equation:bwpath}
\begin{aligned}
\bw{p} &:= ((\bw{v}_{0}, \bw{v}_{1}), (\bw{v}_{1}, \bw{v}_{2}), ..., (\bw{v}_{n - 1}, \bw{v}_{n}))\\
& = : (\bw{e}_{0}, \bw{e}_{1}, ..., \bw{e}_{n - 1})
\end{aligned}
\end{equation}
in $\bw{\stree}$, where $n\in\nnumbers$, satisfying the following conditions:
\begin{enumerate}[label=C\arabic*)]
	\item $\overline{x}_{\fw{v}_{0}} \in X_{0}$,
	\item for $i \in \{0, 1, ..., m - 2\}$, if $\overline{\psi}_{\fw{e}_{i}}$ and $\overline{\psi}_{\fw{e}_{i + 1}}$ are both purely continuous, then $\overline{\psi}_{\fw{e}_{i + 1}}(0, 0)\in \fw{C}$,
	\item $\overline{x}_{\bw{v}_{0}} \in X_{f}$,
	\item for $i \in \{0, 1, ..., n - 2\}$, if $\overline{\psi}_{\bw{e}_{i}}$ and $\overline{\psi}_{\bw{e}_{i + 1}}$ are both purely continuous, then $\overline{\psi}_{\bw{e}_{i + 1}}(0, 0)\in \bw{C}$,
	\item $\overline{x}_{\fw{v}_{m}} = \overline{x}_{\bw{v}_{n}},$
	\item if $\overline{\psi}_{\fw{e}_{m - 1}}$ and $\overline{\psi}_{\bw{e}_{n - 1}}$ are both purely continuous, 
	then $\overline{\psi}_{\bw{e}_{n - 1}}(\bw{T}, 0) \in \fw{C}$ where $(\bw{T}, 0) = \max\dom \overline{\psi}_{\bw{e}_{n - 1}}$.
\end{enumerate}
If HyRRT-Connect is able to find a path $\fw{p}$ in $\fw{\stree}$ and a path $\bw{p}$ in $\bw{\stree}$ satisfying C1-C6, then a motion plan to $\mathcal{P}$ can be constructed by $\fw{\psi}|\bwr{\psi}$, where, for notation simplicity, $\fw{\psi} = (\fw{\mystatet}, \fw{\myinputt}) := \tilde{\psi}_{\fw{p}}$ denotes the solution pair associated with the path $\fw{p}$ in (\ref{equation:fwpath}) and is referred to as \emph{forward partial motion plan}, $\bw{\psi} = (\bw{\mystatet}, \bw{\myinputt}):= \tilde{\psi}_{\bw{p}}$ denotes the solution pair associated with the path $\bw{p}$ in (\ref{equation:bwpath})  and is referred to as \emph{backward partial motion plan}, and $\bwr{\psi}$ denotes the reversal of $\bw{\psi}$. The result $\fw{\psi}|\bwr{\psi}$ is guaranteed to satisfy each item in Problem \ref{problem:motionplanning} as follows:
\begin{enumerate}
	\item By C1, it follows that $\fw{\psi}|\bwr{\psi}$ starts from $X_{0}$. Namely, item 1 in Problem \ref{problem:motionplanning} is satisfied.
	\item Due to C2 (respectively, C4), by iterative applying Proposition \ref{theorem:concatenationsolution} to each pair of $\overline{\psi}_{\fw{e}_{i}}$ and $\overline{\psi}_{\fw{e}_{i + 1}}$ (respectively, $\overline{\psi}_{\bw{e}_{i}}$ and $\overline{\psi}_{\bw{e}_{i + 1}}$) where $i\in \{0, 1, ..., m - 2\}$ (respectively, $i\in \{0, 1, ..., n - 2\}$), it follows that $\fw{\psi}$ (respectively, $\bw{\psi}$) is a solution pair to $\fw{\hs}$ (respectively, $\bw{\hs}$). Furthermore, given C5 and C6, Lemma~\ref{lemma:reverseconcatenation} establishes that $\fw{\psi}|\bwr{\psi}$ is a solution pair to $\fw{\hs}$.

	\item C3 ensures that $\fw{\psi}|\bwr{\psi}$ ends within $X_{f}$. This confirms the satisfaction of item 3 in Problem \ref{problem:motionplanning}.
	\item For any edge $e\in\fw{p}\cup\bw{p}$,  the trajectory $\overline{\psi}_{e}$ avoids intersecting the unsafe set as a result of the exclusion of solution pairs that intersect the unsafe set in the function call $\texttt{new\_state}$. Therefore, item 4 in Problem~\ref{problem:motionplanning} is satisfied.
\end{enumerate}
Since each requirement in Problem \ref{problem:motionplanning} is satisfied, it is established that $\fw{\psi}|\bwr{\psi}$ is a motion plan to $\mathcal{P}$. 

	In practice, as guaranteeing C5 above is not possible in most hybrid systems, given $\delta > 0$ representing the tolerance associated with this condition, we implement C5 as
	\begin{equation}
	\label{equation:tolerance}
	|\overline{x}_{\fw{v}_{m}} - \overline{x}_{\bw{v}_{n}}|\leq	\delta.
	\end{equation} leading to a potential discontinuity during the flow.
	\subsection{Reconstruction Process}
	To smoothen and control the discontinuity associated with (\ref{equation:tolerance}), we propose a reconstruction process. 
		Given the hybrid input $\bw{\myinputt}$ of $\bw{\psi}$ identified in S1, which is backward in hybrid time, the reconstruction process involves simulating a hybrid arc, denoted $\simu{\mystatet}$, such that it starts from the final state of $\fw{\mystatet}$, flows when $\bwr{\myinputt}$ flows, jumps when $\bwr{\myinputt}$ jumps, and applies the input $(t, j)\mapsto\bwr{\myinputt}(t, j)$ where $\bwr{\myinputt}$ denotes the reversal of $\bw{\myinputt}$; see item 2 in Definition \ref{definition:reversedhybrdarc} for the reversal of a hybrid input. We generate $\simu{\mystatet}$ via the following hybrid system, denoted $\hs_{\bwr{\myinputt}}$, with state $x\in \myreals[n]$ and dynamics:
\begin{equation}
\recon{\hs}: \left\{              
\begin{aligned}               
\dot{x} & = \recon{f}(x, \bwr{\myinputt}(t, j))     &(t, j)\in \recon{C}\\                
x^{+} & =  \recon{g}(x, \bwr{\myinputt}(t, j))      &(t, j)\in \recon{D}\\                
\end{aligned}   \right. 
\label{model:reconhybridsystem}
\end{equation}where
\begin{enumerate}
\item $\recon{D} := \{(t, j)\in \dom \bwr{\myinputt}: (t, j + 1)\in  \dom \bwr{\myinputt}\}$;
\item $\recon{C} := \overline{\dom \bwr{\myinputt}\backslash \recon{D}}$;
\item $\recon{g}(x, u) := g(x, u)$ for all\footnote{The flow map $f$ and the jump map $g$ in (\ref{model:generalhybridsystem}) are defined on the domain $\mathbb{R}^{n}\times\mathbb{R}^{m}$.} $(x, u)\in \mathbb{R}^{n}\times\mathbb{R}^{m}$;
\item $\recon{f}(x, u) := f(x, u)$ for all $(x, u)\in \mathbb{R}^{n}\times\mathbb{R}^{m}$.
\end{enumerate}
In addition to satisfying the hybrid dynamics in (\ref{model:reconhybridsystem}), we also require that the reconstruction result $\simu{\mystatet}$ satisfies the following conditions:
	\begin{enumerate}[label= R\arabic*)]
		\item $\simu{\mystatet}(0, 0) = \fw{\mystatet}(\fw{T}, \fw{J})$, where $\fw{\mystatet}$ is the state trajectory of $\fw{\psi}$ identified in S1 and $(\fw{T}, \fw{J}) = \max\dom \fw{\mystatet}$;
		\item $\simu{\mystatet}$ is a maximal solution to $\recon{\hs}$ such that $\dom \simu{\mystatet} = \dom \bwr{\myinputt}$.
	\end{enumerate}
	\begin{remark}
		The definitions of $\recon{C}$ and $\recon{D}$ indicate that $\simu{\mystatet}$ follows the flow or jump of $\bwr{\myinputt}$. R1 ensures that the reconstructed motion plan begins at the final state of the forward partial motion plan, effectively eliminating any discontinuity. Given that R2 ensures that $\simu{\mystatet}$ is maximal, it follows that
		\ifbool{conf}{$
			\dom \simu{\mystatet} = \dom \bwr{\myinputt} = \dom \bwr{\mystatet}.
			$}{\begin{equation}\label{equation:equalhtd}
			\dom \simu{\mystatet} = \dom \bwr{\myinputt} = \dom \bwr{\mystatet}.
			\end{equation}}
	\end{remark}
	\subsubsection{Convergence of $\simu{\mystatet}$ to $X_{f}$} We first show the dependency between the difference     $|\simu{\mystatet}(\simu{T}, \simu{J}) - \bw{\mystatet}(0, 0)|$ and the tolerance $\delta$ in (\ref{equation:tolerance}) where $(\simu{T}, \simu{J}) = \max\dom \simu{\mystatet}$.
	The following assumption is imposed on the flow map $f$ of the hybrid system $\mathcal{H}$ in (\ref{model:generalhybridsystem}).
	\begin{assumption}
		\label{assumption:flowlipschitz}
		The flow map $f$ is Lipschitz continuous. In particular, there exist $K^{f}_{x}, K^{f}_{u}\in \mathbb{R}_{>0}$ such that, for each $(x_{0}, x_{1}, u_{0}, u_{1}),$ such that $(x_{0}, u_{0}) \in C$, $(x_{1}, u_{0}) \in C$, and $(x_{0}, u_{1}) \in C$,
		\ifbool{conf}{$
			|f(x_{0}, u_{0}) - f(x_{1}, u_{0})|\leq K^{f}_{x}|x_{0} - x_{1}|,
			|f(x_{0}, u_{0}) - f(x_{0}, u_{1})|\leq K^{f}_{u}|u_{0} - u_{1}|.
			$}{$$
			\begin{aligned}
			|f(x_{0}, u_{0}) - f(x_{1}, u_{0})|&\leq K^{f}_{x}|x_{0} - x_{1}|\\
			|f(x_{0}, u_{0}) - f(x_{0}, u_{1})|&\leq K^{f}_{u}|u_{0} - u_{1}|.
			\end{aligned}
			$$}
		
	\end{assumption}
\ifbool{conf}{}{The following lemma is a slight modification \pn{of} \cite[Lemma 2]{kleinbort2018probabilistic}.
	\begin{lemma}\label{lemma:flowbound}
		Given a hybrid system $\hs$ satisfying Assumption \ref{assumption:flowlipschitz}, if there exists a purely continuous solution pair $\bw{\psi} = (\bw{\mystatet}, \bw{\myinputt})$ to $\bw{\hs}$, then the reconstructed solution $\simu{\mystatet}$, which is a solution to \pn{$\hs_{\bwr{\myinputt}}$} satisfying R1 and R2 \pn{where $\bwr{\myinputt}$ is the reversal of $\bw{\myinputt}$}, satisfies the following properties: 
		\begin{enumerate}[label= P\arabic*)]
			\item $\dom \simu{\mystatet} = \dom \bwr{\myinputt}$; thus, $\simu{\mystatet}$ is purely continuous;
			\item if $|\simu{\mystatet}(0, 0) - \bwr{\mystatet}(0, 0)| \leq \delta$, then $|\simu{\mystatet}(T, 0) - \bwr{\mystatet}(T, 0)| \leq \exp{(K^{f}_{x}T)}\delta$, where $(T, 0) = \pn{\max \dom \simu{\mystatet} = \max \dom \bwr{\mystatet}}$, \pn{and $\max \dom \simu{\mystatet} = \max \dom \bwr{\mystatet}$ is valid because $\dom \simu{\mystatet} = \dom \bwr{\mystatet}$ in P1}.
		\end{enumerate}
	\end{lemma}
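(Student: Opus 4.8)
The plan is to handle P1 and P2 separately: P1 is a short argument about the structure of the hybrid time domain, while P2 is a Grönwall-type estimate along the flow.

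For P1, I would first note that since $\bw{\psi}$ is purely continuous, $\dom \bw{\myinputt}$ contains no jumps, and hence by item 2 of Definition \ref{definition:reversedhybrdarc} the reversed domain $\dom \bwr{\myinputt} = \{(\bw{T}, 0)\} - \dom \bw{\myinputt}$ is also jump-free. Consequently the set $\recon{D} = \{(t, j)\in \dom \bwr{\myinputt} : (t, j+1)\in \dom \bwr{\myinputt}\}$ is empty and $\recon{C} = \overline{\dom \bwr{\myinputt}}$, so $\recon{\hs}$ in (\ref{model:reconhybridsystem}) can only flow. By R2, $\simu{\mystatet}$ is a maximal solution with $\dom \simu{\mystatet} = \dom \bwr{\myinputt}$, which is jump-free, so $\simu{\mystatet}$ is purely continuous. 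This settles P1 and is the easy part.

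For P2, the key preliminary step is to show that $\simu{\mystatet}$ and $\bwr{\mystatet}$ obey the \emph{same} flow differential equation driven by the \emph{same} input on $[0,T]\times\{0\}$. That $\simu{\mystatet}$ satisfies $\frac{\text{d}}{\text{d}t}\simu{\mystatet}(t,0) = f(\simu{\mystatet}(t,0), \bwr{\myinputt}(t,0))$ for almost all $t$ follows directly from $\recon{f} = f$ in (\ref{model:reconhybridsystem}). For $\bwr{\mystatet}$, I would use item 1b of Definition \ref{definition:reversedhybrdarc}, namely $\bwr{\mystatet}(t,0) = \bw{\mystatet}(\bw{T}-t, 0)$, so by the chain rule $\frac{\text{d}}{\text{d}t}\bwr{\mystatet}(t,0) = -\bw{f}(\bw{\mystatet}(\bw{T}-t,0), \bw{\myinputt}(\bw{T}-t,0))$; combining this with $\bw{f} = -\fw{f} = -f$ from (\ref{equation:backwardflow}) and the identities $\bwr{\mystatet}(t,0) = \bw{\mystatet}(\bw{T}-t,0)$, $\bwr{\myinputt}(t,0) = \bw{\myinputt}(\bw{T}-t,0)$ reduces it to $f(\bwr{\mystatet}(t,0), \bwr{\myinputt}(t,0))$. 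Hence both trajectories solve the identical ODE and differ only in their initial condition.

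With this in hand, I would write the integrated form of the difference,
\[
\simu{\mystatet}(t, 0) - \bwr{\mystatet}(t, 0) = \bigl(\simu{\mystatet}(0, 0) - \bwr{\mystatet}(0, 0)\bigr) + \int_{0}^{t} \bigl(f(\simu{\mystatet}(s, 0), \bwr{\myinputt}(s, 0)) - f(\bwr{\mystatet}(s, 0), \bwr{\myinputt}(s, 0))\bigr)\, \text{d}s,
\]
take Euclidean norms, and apply Assumption \ref{assumption:flowlipschitz}. Because the input argument $\bwr{\myinputt}(s,0)$ is identical in both terms, only the state-Lipschitz constant $K^{f}_{x}$ enters, giving $|\simu{\mystatet}(t,0) - \bwr{\mystatet}(t,0)| \leq \delta + K^{f}_{x}\int_{0}^{t} |\simu{\mystatet}(s,0) - \bwr{\mystatet}(s,0)|\, \text{d}s$ after using the hypothesis on the initial gap. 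Grönwall's inequality then yields $|\simu{\mystatet}(t,0) - \bwr{\mystatet}(t,0)| \leq \delta\exp(K^{f}_{x}t)$, and evaluating at $t = T$ (where $(T,0) = \max\dom\simu{\mystatet} = \max\dom\bwr{\mystatet}$, valid by P1 and the reversal definition) gives P2.

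The main obstacle is the bookkeeping that $\bwr{\mystatet}$ solves the forward flow equation: this requires carefully combining the time reversal in Definition \ref{definition:reversedhybrdarc} with the sign flip $\bw{f} = -\fw{f}$ in Definition \ref{definition:bwhs}, and confirming that both $(\simu{\mystatet}(s,0), \bwr{\myinputt}(s,0))$ and $(\bwr{\mystatet}(s,0), \bwr{\myinputt}(s,0))$ stay in $C$ along the flow so that Assumption \ref{assumption:flowlipschitz} is applicable on the whole interval. Once that symmetry is secured, the remaining estimate is exactly the standard Grönwall argument adapted from \cite[Lemma 2]{kleinbort2018probabilistic}.
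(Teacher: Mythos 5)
Your proof is correct, and the structure matches the paper's: P1 from the fact that reversing a purely continuous pair yields a jump-free domain which, via R2, is inherited by $\simu{\mystatet}$; P2 from a Gr\"onwall-type estimate. The difference is in how P2 is discharged. The paper does not carry out the estimate itself: it invokes \cite[Lemma 2]{kleinbort2018probabilistic} (reproduced in its appendix as Lemma \ref{lemma:2in6}) as a black box, relying on the fact that both trajectories use the same input so that the $\Delta u$ term there vanishes. You instead inline the argument: you first verify, via the chain rule and the sign flip $\bw{f} = -\fw{f}$ from Definition \ref{definition:bwhs}, that $\bwr{\mystatet}$ satisfies the forward flow equation $\frac{\text{d}}{\text{d}t}\bwr{\mystatet}(t,0) = f(\bwr{\mystatet}(t,0), \bwr{\myinputt}(t,0))$ (in effect re-proving the flow case of Proposition \ref{theorem:reversedarcbwsystem}, which the paper uses implicitly), and then run the integral-form-plus-Gr\"onwall estimate directly. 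Your route is longer but buys two things: it is self-contained, and it genuinely covers the situation at hand, since the cited Lemma 2 is stated for controls held \emph{constant} on $[0,T]$, whereas $\bwr{\myinputt}$ is merely measurable and time-varying; your direct estimate needs only that the two trajectories share the same input signal, which is exactly the case here. Your closing caveat is also well placed: Assumption \ref{assumption:flowlipschitz} states the Lipschitz bound only for pairs in $C$, and since $\recon{\hs}$ constrains hybrid time rather than the state, $(\simu{\mystatet}(s,0), \bwr{\myinputt}(s,0))\in C$ is not automatic; the paper's citation-based proof glosses over the same point, so this is a shared (minor) gap rather than a defect of your argument relative to the paper's.
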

		\begin{proof}
			Given that $\bw{\psi}$ is purely continuous, $\bwr{\psi}$ is purely continuous and, by (\ref{equation:equalhtd}), P1 is established immediately. By the Lipschitz continuity of the flow map $f$ in Assumption \ref{assumption:flowlipschitz}, \pn{since $\simu{\mystatet}$ is generated by (\ref{model:reconhybridsystem})}, \cite[Lemma 2]{kleinbort2018probabilistic} (also presented in the appendix as Lemma \ref{lemma:2in6}) establishes that $|\simu{\mystatet}(T, 0) - \mystatet(T, 0)| \leq \exp{(K^{f}_{x}T)}\delta$.
\end{proof}}
The following Lipschitz assumption is imposed on the jump map $g$ of the hybrid system $\hs$ in (\ref{model:generalhybridsystem}).
\begin{assumption}
	\label{assumption:pcjumpmap}
	The jump map $g$ is such that there exist $K^{g}_{x}, K^{g}_{u}\in \mathbb{R}_{>0}$ such that, for each $(x_{0}, u_{0}) \in D$ and each $(x_{1}, u_{1}) \in D$,
	\ifbool{conf}{$
		|g(x_{0}, u_{0}) - g(x_{1}, u_{1})|\leq K^{g}_{x}|x_{0} - x_{1}| + K^{g}_{u}|u_{0} - u_{1}|.
		$}{$$
		|g(x_{0}, u_{0}) - g(x_{1}, u_{1})|\leq K^{g}_{x}|x_{0} - x_{1}| + K^{g}_{u}|u_{0} - u_{1}|.
		$$}
	
\end{assumption}
\ifbool{conf}{}{\begin{lemma}\label{lemma:jumpbound}
		Given a hybrid system $\hs$ satisfying Assumption \ref{assumption:pcjumpmap}, if there exists a purely discrete solution pair $\bw{\psi} = (\bw{\mystatet}, \bw{\myinputt})$ to $\bw{\hs}$, then the reconstructed solution $\simu{\mystatet}$, which is a solution to \pn{$\hs_{\bwr{\myinputt}}$} satisfying R1 and R2 \pn{where $\bwr{\myinputt}$ is the reversal of $\bw{\myinputt}$}, satisfies the following properties: 
		\begin{enumerate}[label= P\arabic*)]
			\item $\dom \simu{\mystatet} = \dom \bwr{\myinputt}$; thus, $\simu{\mystatet}$ is purely discrete;
			\item \pn{if $|\simu{\mystatet}(0, 0) - \bwr{\mystatet}(0, 0)| \leq \delta$, then} $|\simu{\mystatet}(0, J) - \bwr{\mystatet}(0, J)| \leq \exp(J\ln(K_{x}^{g}))\delta$, where $(0, J) = \pn{\max \dom \simu{\mystatet} = \max \dom \bwr{\mystatet}}$, \pn{and $\max \dom \simu{\mystatet} = \max \dom \bwr{\mystatet}$ is valid because $\dom \simu{\mystatet} = \dom \bwr{\mystatet}$ in P1}.
		\end{enumerate}
	\end{lemma}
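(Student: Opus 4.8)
The plan is to mirror the structure of the proof of Lemma~\ref{lemma:flowbound}, replacing the Gr\"onwall-type flow estimate by a discrete geometric recursion over the jump index. Property P1 I would dispatch immediately: since $\bw{\psi}$ is purely discrete, so is its reversal $\bwr{\psi}$, and the identity $\dom \simu{\mystatet} = \dom \bwr{\myinputt} = \dom \bwr{\mystatet}$ from (\ref{equation:equalhtd}) forces $\simu{\mystatet}$ to have a purely discrete hybrid time domain as well. Hence $\max \dom \simu{\mystatet} = \max \dom \bwr{\mystatet} = (0, J)$ for some $J \in \mathbb{N}$, and both trajectories consist of the $J+1$ samples $\{(0,j)\}_{j=0}^{J}$.

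For P2, the key fact I would establish first is that, at every jump, $\simu{\mystatet}$ and $\bwr{\mystatet}$ evolve under the \emph{same} jump map $g$ and, crucially, under the \emph{same} input $\bwr{\myinputt}$. For $\simu{\mystatet}$ this is immediate from R2 together with the construction of $\recon{\hs}$ in (\ref{model:reconhybridsystem}), where $\recon{g} = g$ and the applied input is $\bwr{\myinputt}$; thus $\simu{\mystatet}(0, j+1) = g(\simu{\mystatet}(0, j), \bwr{\myinputt}(0, j))$ at each jump. For $\bwr{\mystatet}$ I would verify, using the backward jump map definition (\ref{set:gbw}) and the input reversal rule in Definition~\ref{definition:reversedhybrdarc}, that the reversed backward arc satisfies $\bwr{\mystatet}(0, j+1) = g(\bwr{\mystatet}(0, j), \bwr{\myinputt}(0, j))$: unwinding the inclusion $\bw{\mystatet}(0, J-j-1) \in \bw{g}(\bw{\mystatet}(0, J-j), \bw{\myinputt}(0, J-j-1))$ recovers exactly $\bw{\mystatet}(0, J-j-1) = \fw{g}(\bw{\mystatet}(0, J-j), \bw{\myinputt}(0, J-j-1))$, which, after re-indexing, is the forward jump relation for $\bwr{\mystatet}$ with input $\bwr{\myinputt}$. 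This is the dual of Proposition~\ref{theorem:reversedarcbwsystem}, and the re-indexing bookkeeping between the $\bw{}$ and $\bwr{}$ domains is the step I expect to require the most care.

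With both arcs sharing the same input at each jump, the input-dependent term $K^{g}_{u}|u_{0} - u_{1}|$ in Assumption~\ref{assumption:pcjumpmap} vanishes, and I obtain the one-step estimate
\begin{equation*}
|\simu{\mystatet}(0, j+1) - \bwr{\mystatet}(0, j+1)| = |g(\simu{\mystatet}(0, j), \bwr{\myinputt}(0, j)) - g(\bwr{\mystatet}(0, j), \bwr{\myinputt}(0, j))| \leq K^{g}_{x} |\simu{\mystatet}(0, j) - \bwr{\mystatet}(0, j)|.
\end{equation*}
Iterating this geometric estimate from $j = 0$ to $j = J$, with the hypothesis $|\simu{\mystatet}(0, 0) - \bwr{\mystatet}(0, 0)| \leq \delta$ serving as the base case, yields $|\simu{\mystatet}(0, J) - \bwr{\mystatet}(0, J)| \leq (K^{g}_{x})^{J} \delta$. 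Rewriting $(K^{g}_{x})^{J} = \exp(J \ln(K^{g}_{x}))$ gives P2 exactly. The analytic content is entirely in the vanishing of the input term, which is precisely what the matched-input design of the reconstruction process buys us; everything else is the discrete analogue of the estimate used in Lemma~\ref{lemma:flowbound}.
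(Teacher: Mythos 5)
Your proof is correct and takes essentially the same approach as the paper's: P1 follows from pure discreteness of the reversal together with the domain identity, and P2 from iterating the Lipschitz estimate of Assumption \ref{assumption:pcjumpmap} with matched inputs so that the $K^{g}_{u}$ term drops out, giving $(K^{g}_{x})^{J}\delta = \exp(J\ln(K^{g}_{x}))\delta$; indeed, you make explicit the matched-input cancellation and the forward jump relation for $\bwr{\mystatet}$ that the paper's proof leaves implicit. One minor bookkeeping slip: the backward jump relation is $\bw{\mystatet}(0, J-j) \in \bw{g}(\bw{\mystatet}(0, J-j-1), \bw{\myinputt}(0, J-j-1))$ rather than the inclusion you wrote (its state arguments are transposed), though the relation you extract from it, $\bwr{\mystatet}(0, j+1) = g(\bwr{\mystatet}(0, j), \bwr{\myinputt}(0, j))$, is exactly right, so the argument goes through unchanged.
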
	
		\begin{proof}
			Given that $\bw{\psi}$ is purely discrete, $\bwr{\psi}$ is purely discrete and P1 is immediately established by (\ref{equation:equalhtd}). Furthermore, \pn{since $\simu{\mystatet}$ is generated by (\ref{model:reconhybridsystem})}, in accordance with Assumption \ref{assumption:pcjumpmap}, we derive the following inequality: 
			$$
			\begin{aligned}
			|\simu{\mystatet}(0, J) - \mystatet(0, J)| &\leq \exp(\ln K_{x}^{g})|\simu{\mystatet}(0, J - 1) - \mystatet(0, J - 1)| \\
			&\leq \exp(2\ln K_{x}^{g})|\simu{\mystatet}(0, J - 2) - \mystatet(0, J - 2)|\\
			&\leq ... \\
			&\leq \exp(J\ln K_{x}^{g})|\simu{\mystatet}(0, 0) - \mystatet(0, 0)| = \exp(J\ln K_{x}^{g})\delta,
			\end{aligned}
			$$ thus establishing P2.
\end{proof}}
Next, we show that the final state of the reconstructed motion plan $\simu{\mystatet}$ converges to $\bw{\mystatet}(0, 0)\in X_{f}$ as the tolerance $\delta$ in (\ref{equation:tolerance}) approaches zero.  \ifbool{conf}{See \cite{wang_hyrrtconnect_2023} for a detailed proof.}{}
\begin{theorem}\label{theorem:toleranceexist}
	Suppose Assumptions \ref{assumption:flowlipschitz} and \ref{assumption:pcjumpmap} are satisfied, and there exist a solution pair $\fw{\psi} = (\fw{\mystatet}, \fw{\myinputt})$ to $\fw{\hs}$ and a solution pair $\bw{\psi} = (\bw{\mystatet}, \bw{\myinputt})$ to $\bw{\hs}$ identified in S1. For each $\epsilon > 0$, there exists a tolerance $\delta > 0$ in (\ref{equation:tolerance}) such that $
	|\fw{\mystatet}(\fw{T}, \fw{J}) - \bw{\mystatet}(\bw{T}, \bw{J})| \leq \delta$
	leads to
	$
	|\simu{\mystatet}(\simu{T}, \simu{J}) - \bw{\mystatet}(0, 0)| \leq \epsilon
	$
	where $(\fw{T}, \fw{J}) = \max \dom \fw{\mystatet}$, $(\bw{T}, \bw{J}) = \max \dom \bw{\mystatet}$, $\simu{\mystatet}$ is a solution to $\hs_{\bwr{\myinputt}}$ following R1 and R2, and $(\simu{T}, \simu{J}) = \max \dom \simu{\mystatet}$.
\end{theorem}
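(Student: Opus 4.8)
The plan is to reduce the theorem to a continuous-dependence-on-initial-conditions estimate for two hybrid arcs that share the \emph{same} hybrid time domain and are driven by the \emph{same} input, and then to pick $\delta$ small enough that the resulting (finite, fixed) amplification factor times $\delta$ is below $\epsilon$. The two arcs to compare are $\simu{\mystatet}$ and the reversal $\bwr{\mystatet}$ of the backward partial motion plan. First I would observe that both evolve under the forward data $(\fw{f}, \fw{g})$ with the common input $\bwr{\myinputt}$: by construction $\simu{\mystatet}$ solves $\recon{\hs}$ in (\ref{model:reconhybridsystem}), whose flow and jump maps are exactly $\fw{f}$ and $\fw{g}$; and $\bwr{\mystatet}$, being the reversal of a solution to $\bw{\hs}$, satisfies during flow $\tfrac{\mathrm{d}}{\mathrm{d}t}\bwr{\mystatet} = -\bw{f}(\bwr{\mystatet}, \bwr{\myinputt}) = \fw{f}(\bwr{\mystatet}, \bwr{\myinputt})$ (using $\bw{f} = -\fw{f}$) and at jumps obeys the forward relation $\bwr{\mystatet}(t,j{+}1) = \fw{g}(\bwr{\mystatet}(t,j), \bwr{\myinputt}(t,j))$, as corroborated by Proposition \ref{theorem:reversedarcbwsystem}. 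By R2, $\dom\simu{\mystatet} = \dom\bwr{\myinputt} = \dom\bwr{\mystatet}$, so $(\simu{T}, \simu{J}) = (\bw{T}, \bw{J})$ and the two arcs flow and jump in lockstep.

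Next I would line up the boundary data. By R1 and item 1 of Definition \ref{definition:reversedhybrdarc}, $\simu{\mystatet}(0,0) = \fw{\mystatet}(\fw{T}, \fw{J})$ and $\bwr{\mystatet}(0,0) = \bw{\mystatet}(\bw{T}, \bw{J})$, so the hypothesis $|\fw{\mystatet}(\fw{T}, \fw{J}) - \bw{\mystatet}(\bw{T}, \bw{J})| \leq \delta$ from (\ref{equation:tolerance}) is precisely the bound on the \emph{initial} gap, $|\simu{\mystatet}(0,0) - \bwr{\mystatet}(0,0)| \leq \delta$. At the other end, $\bwr{\mystatet}(\bw{T}, \bw{J}) = \bw{\mystatet}(0,0)$, so the quantity to be estimated, $|\simu{\mystatet}(\simu{T}, \simu{J}) - \bw{\mystatet}(0,0)|$, equals the \emph{terminal} gap $|\simu{\mystatet}(\bw{T}, \bw{J}) - \bwr{\mystatet}(\bw{T}, \bw{J})|$ between the two arcs.

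Then I would propagate the gap regime-by-regime through the fixed, finite domain. On each flow interval both arcs solve $\dot x = \fw{f}(x, \bwr{\myinputt}(\cdot, j))$ with the common input, so the $K^{f}_{u}$ term in Assumption \ref{assumption:flowlipschitz} cancels and a Gr\"onwall estimate multiplies the gap by at most $\exp(K^{f}_{x}\ell_j)$, where $\ell_j$ is the interval length. At each jump both arcs apply $\fw{g}(\cdot, \bwr{\myinputt})$ with the same input, so the $K^{g}_{u}$ term in Assumption \ref{assumption:pcjumpmap} cancels and the gap is multiplied by at most $K^{g}_{x}$. Composing over the $\simu{J}$ jumps and the flow intervals whose lengths sum to $\simu{T}$ gives
\[
|\simu{\mystatet}(\simu{T}, \simu{J}) - \bwr{\mystatet}(\simu{T}, \simu{J})| \leq \exp(K^{f}_{x}\simu{T})\,(K^{g}_{x})^{\simu{J}}\,\delta =: M\delta,
\]
where $M$ depends only on the fixed data $(\simu{T}, \simu{J})$ and the Lipschitz constants. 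Given $\epsilon > 0$, choosing any $\delta \in (0, \epsilon/M]$ then yields $|\simu{\mystatet}(\simu{T}, \simu{J}) - \bw{\mystatet}(0,0)| \leq M\delta \leq \epsilon$, which is the claim.

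The main obstacle I anticipate is not the Gr\"onwall/Lipschitz bookkeeping but the structural setup that makes it applicable: rigorously justifying that $\simu{\mystatet}$ and $\bwr{\mystatet}$ share an identical hybrid time domain and flow and jump at the same hybrid times, so that the gap can be carried across each regime with matched inputs. In particular, one must check that at each jump the set-valued $\bw{g}$ collapses to the single forward branch $\fw{g}(\cdot, \bwr{\myinputt})$ that $\bwr{\mystatet}$ actually takes, so Assumption \ref{assumption:pcjumpmap} applies with $u_0 = u_1$. A secondary point is that the available per-regime estimates (Lemmas \ref{lemma:flowbound} and \ref{lemma:jumpbound}) cover only the purely continuous and purely discrete cases; the composition argument above is the bridge that handles the general mixed trajectory, and I would organize it as an induction over the ordered list of flow intervals and jumps of $\dom\bwr{\myinputt}$.
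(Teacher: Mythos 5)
Your proposal is correct and takes essentially the same route as the paper's proof: both identify the hypothesis as the initial gap between $\simu{\mystatet}$ and $\bwr{\mystatet}$ (which share a hybrid time domain and input by R1--R2 and Proposition \ref{theorem:reversedarcbwsystem}), propagate that gap by iterating the per-regime flow (Gr\"onwall) and jump (Lipschitz) estimates --- the paper's Lemmas \ref{lemma:flowbound} and \ref{lemma:jumpbound} --- across the finite domain, and then choose $\delta = \epsilon/M$ with $M = \exp(K_{x}^{f}\simu{T} + \simu{J}\ln K_{x}^{g})$. Incidentally, your amplification factor is the corrected form of the paper's, which appears to contain a typo writing $K_{x}^{f}$ in place of $K_{x}^{g}$ in the jump term.
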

\ifbool{conf}{}{\begin{proof}
	Given that $\bw{\psi} = (\bw{\mystatet}, \bw{\myinputt})$ is a solution pair to $\bw{\hs}$, Proposition ~\ref{theorem:reversedarcbwsystem} guarantees that $\bwr{\psi} = (\bwr{\mystatet}, \bwr{\myinputt})$ is a solution pair to $\fw{\hs}$. By (\ref{equation:equalhtd}), it follows that $\dom \simu{\mystatet} = \dom \bwr{\mystatet}$.
		Since $|\fw{\mystatet}(\fw{T}, \fw{J}) - \bw{\mystatet}(\bw{T}, \bw{J})|  = |\simu{\mystatet}(0, 0) - \bwr{\mystatet}(0, 0)| \leq \delta$, by iteratively applying Lemma \ref{lemma:flowbound} and Lemma \ref{lemma:jumpbound} throughout $\dom \simu{\mystatet} = \dom \bwr{\mystatet}$, it follows that 
		$$
		\begin{aligned}
		|\simu{\mystatet}(\simu{T}, \simu{J}) - \bw{\mystatet}(0, 0)| = |\simu{\mystatet}(\simu{T}, \simu{J}) - \bwr{\mystatet}(\simu{T}, \simu{J})|
		\leq \exp(K_{x}^{f}\simu{T} + \simu{J}\ln(K_{x}^{f}))\delta, 
		\end{aligned}$$
		which establishes the existence of \pn{$\delta > 0$. In particular, $\delta$ can be taken to be equal to} $\frac{\epsilon}{\exp(K_{x}^{f}\simu{T} + \simu{J}\ln(K_{x}^{f}))}$.
\end{proof}}
Furthermore, if $\bw{\mystatet}(0, 0)$ is not on the boundary of $X_{f}$, the following result shows there is a tolerance ensuring that $\simu{\mystatet}$ concludes within~$X_{f}$.
\begin{corollary}\label{corollary:convergencexf}
	Suppose Assumptions \ref{assumption:flowlipschitz} and \ref{assumption:pcjumpmap} are satisfied, and there exist a  solution pair $\fw{\psi} = (\fw{\mystatet}, \fw{\myinputt})$ to $\fw{\hs}$, and a solution pair $\bw{\psi} = (\bw{\mystatet}, \bw{\myinputt})$ to $\bw{\hs}$ identified in S1, and some $\epsilon' > 0$ such that $\bw{\mystatet}(0, 0) + \epsilon' \mathbb{B} \subset X_{f}$. Then, there exists a tolerance $\delta > 0$ in (\ref{equation:tolerance}) such that
	$
	|\fw{\mystatet}(\fw{T}, \fw{J}) - \bw{\mystatet}(\bw{T}, \bw{J})| \leq \delta
	$
	leads to 
	$
	\simu{\mystatet}(\simu{T}, \simu{J})\in X_{f}
	$
	where $(\fw{T}, \fw{J}) = \max \dom \fw{\mystatet}$, $(\bw{T}, \bw{J}) = \max \dom \bw{\mystatet}$, $\simu{\mystatet}$ is  a solution to $\hs_{\bwr{\myinputt}}$ following R1 and R2, and $(\simu{T}, \simu{J}) = \max \dom \simu{\mystatet}$.
\end{corollary}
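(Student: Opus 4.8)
The plan is to obtain this corollary as an immediate consequence of Theorem \ref{theorem:toleranceexist}, which already furnishes, for every prescribed accuracy, a tolerance forcing the endpoint $\simu{\mystatet}(\simu{T}, \simu{J})$ of the reconstructed solution to lie arbitrarily close to $\bw{\mystatet}(0,0)$. The additional hypothesis that $\bw{\mystatet}(0,0) + \epsilon' \mathbb{B} \subset X_{f}$ simply converts this metric closeness into set membership: if the endpoint is within distance $\epsilon'$ of $\bw{\mystatet}(0,0)$, then it sits inside the ball, hence inside $X_{f}$.

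Concretely, first I would invoke Theorem \ref{theorem:toleranceexist} with the choice $\epsilon = \epsilon'$. Since Assumptions \ref{assumption:flowlipschitz} and \ref{assumption:pcjumpmap} hold and the solution pairs $\fw{\psi}$ and $\bw{\psi}$ identified in S1 are available, the theorem produces a tolerance $\delta > 0$ such that $|\fw{\mystatet}(\fw{T}, \fw{J}) - \bw{\mystatet}(\bw{T}, \bw{J})| \leq \delta$ implies $|\simu{\mystatet}(\simu{T}, \simu{J}) - \bw{\mystatet}(0, 0)| \leq \epsilon'$, where $\simu{\mystatet}$ is the solution to $\hs_{\bwr{\myinputt}}$ satisfying R1 and R2. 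Next, I would translate this bound into membership in the ball: because $\mathbb{B}$ denotes the closed unit ball, the inequality $|\simu{\mystatet}(\simu{T}, \simu{J}) - \bw{\mystatet}(0,0)| \leq \epsilon'$ is exactly the statement $\simu{\mystatet}(\simu{T}, \simu{J}) \in \bw{\mystatet}(0,0) + \epsilon' \mathbb{B}$. Finally, the containment hypothesis $\bw{\mystatet}(0,0) + \epsilon' \mathbb{B} \subset X_{f}$ yields $\simu{\mystatet}(\simu{T}, \simu{J}) \in X_{f}$, which is the claim.

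There is no genuine obstacle here; the result is essentially a repackaging of Theorem \ref{theorem:toleranceexist}. The only point warranting care is the alignment between the non-strict inequality delivered by the theorem and the closed ball $\mathbb{B}$ appearing in the hypothesis: because the theorem's bound is $\leq \epsilon'$ (rather than $< \epsilon'$) and $\mathbb{B}$ is closed, the endpoint is guaranteed to lie in $\bw{\mystatet}(0,0) + \epsilon' \mathbb{B}$, so no shrinking of $\epsilon'$ is required. Were the ball open or the inequality strict, one would instead apply the theorem with a slightly smaller accuracy, e.g.\ $\epsilon = \epsilon'/2$, to recover the conclusion. I would also note explicitly that the role of the assumption $\bw{\mystatet}(0,0) + \epsilon' \mathbb{B} \subset X_{f}$ is precisely to rule out the case in which $\bw{\mystatet}(0,0)$ lies on $\partial X_{f}$, where convergence to $\bw{\mystatet}(0,0)$ alone would not guarantee membership in $X_{f}$.
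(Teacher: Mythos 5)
Your proposal is correct and matches the paper's own proof exactly: both invoke Theorem \ref{theorem:toleranceexist} with $\epsilon = \epsilon'$ to obtain the tolerance $\delta$, and then use the hypothesis $\bw{\mystatet}(0,0) + \epsilon'\mathbb{B} \subset X_{f}$ to convert the bound $|\simu{\mystatet}(\simu{T}, \simu{J}) - \bw{\mystatet}(0,0)| \leq \epsilon'$ into membership in $X_{f}$. Your additional remark on the compatibility of the non-strict inequality with the closed ball $\mathbb{B}$ is a sound observation, though the paper leaves it implicit.
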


\begin{proof}
	By selecting $\epsilon = \epsilon'$, Theorem \ref{theorem:toleranceexist} ensures the existence of some $\delta> 0$ such that $|\simu{\mystatet}(\simu{T}, \simu{J}) - \bw{\mystatet}(0, 0)| \leq \epsilon'$. By $\bw{\mystatet}(0, 0) + \epsilon'\mathbb{B} \subset X_{f}$, it is established that $\simu{\mystatet}(\simu{T}, \simu{J})\in X_{f}$.
\end{proof}
Then, by replacing $\bw{\mystatet}$ with $\simu{\mystatet}$ and concatenating the reconstructed pair $\simu{\psi} := (\simu{\mystatet}, \bwr{\myinputt})$ to $\fw{\psi} = (\fw{\mystatet}, \fw{\myinputt})$, HyRRT-Connect generates the motion plan $\fw{\psi}|\simu{\psi}$, where the discontinuity associated with (\ref{equation:tolerance}) is removed. Note that the tolerance $\delta$ in (\ref{equation:tolerance}) is adjustable. Setting $\delta$ to a smaller value brings the endpoint of $\simu{\mystatet}$ closer to $X_{f}$, However, it also reduces the possibility of finding a motion plan, thereby increasing the time expected to find forward and backward partial motion plans.
\ifbool{conf}{}{\begin{remark}
		Connecting two points via flow typically involves 
		\pn{solving} a two point boundary value problem constrained by the flow set. \pn{Solving such problems is difficult.} This is the reason why we do not consider to actively connect two paths in $\fw{\stree}$ and $\bw{\stree}$ via flow.
\end{remark}}
\subsection{Connecting Forward and Backward Search Trees via~Jump}\label{section:jumpconnection}
In S2, HyRRT-Connect checks the existence of $\fw{p}$ in (\ref{equation:fwpath}) and $\bw{p}$ in (\ref{equation:bwpath}) which, in addition to meeting C1-C4 in Section \ref{section:checksolution_stateequal}, results in a solution to the following constrained equation, denoted $u^{*}$, provided one exists\footnote{It is indeed possible that all the motion plans are purely continuous. In this case, no solution to (\ref{problem:jumpequation}) would be found since no jumps exist in every motion plan.}:
\begin{equation}\label{problem:jumpequation}
\overline{x}_{\bw{v}_{n}} = g(\overline{x}_{\fw{v}_{m}}, u^{*}),\quad(\overline{x}_{\fw{v}_{m}}, u^{*})\in \fw{D}.
\end{equation} 
The constrained equation above can be solved analytically for certain hybrid systems such as the one in Example~\ref{example:bouncingball} and numerically \cite{boyd2004convex} in general\ifbool{conf}{.}{ using numerical techniques like root-finding methods \cite{burden2011numerical} or optimization methods \cite{boyd2004convex}.}
A solution to (\ref{problem:jumpequation}) implies that $\overline{x}_{\fw{v}_{m}}$ and $\overline{x}_{\bw{v}_{n}}$ can be connected by applying $u^{*}$ at a jump from $\overline{x}_{\fw{v}_{m}}$ to $\overline{x}_{\bw{v}_{n}}$. Hence, a motion plan is constructed by concatenating $\fw{\psi}$, a single jump from $\overline{x}_{\fw{v}_{m}}$ to $\overline{x}_{\bw{v}_{n}}$, and $\bwr{\psi}$.
\ifbool{conf}{This approach constructs a motion plan before detecting overlaps between 
	$\fw{\stree}$ and $\bw{\stree}$ in S1, improving efficiency and preventing the discontinuity introduced by (\ref{equation:tolerance}) through a jump.}{\begin{remark}
		This approach enables the construction of a motion plan prior to the detection of any overlaps between $\fw{\stree}$ and $\bw{\stree}$, potentially leading to improved computational efficiency, \pn{as} illustrated in the forthcoming Section \ref{section:simulation}. Furthermore, as this connection is established through a jump, it prevents the discontinuity during the flow introduced in (\ref{equation:tolerance}).
\end{remark}}

%


%
\ifbool{conf}{\vspace{-0.25cm}}{}
\section{Software Tool and Simulation Results}\label{section:simulation}
\ifbool{conf}{\vspace{-0.3cm}}{}
Algorithm \ref{algo:bihyrrt} leads to a software tool\footnote{Code at \hyperlink{https://github.com/HybridSystemsLab/HyRRTConnect}{https://github.com/HybridSystemsLab/HyRRTConnect.git}.} to solve Problem~\ref{problem:motionplanning}. \ifbool{conf}{}{This software only requires the inputs listed in Algorithm \ref{algo:bihyrrt}.} Next, we illustrate the HyRRT-Connect algorithm and this tool in Example \ref{example:bouncingball} and Example \ref{example:biped}.
\begin{example} (Actuated bouncing ball system in Example ~\ref{example:bouncingball}, revisited)
	We initially showcase the simulation results of the HyRRT-Connect algorithm without the functionality of connecting via jumps discussed in Section \ref{section:jumpconnection}.
	We consider the case where HyRRT-Connect precisely connects the forward and backward partial motion plans. This is demonstrated by deliberately setting the initial state set as $X_{0} = \{(14, 0)\}$ and the final state set as $X_{f} = \{(0, -16.58)\}$. In this case, no tolerance is applied, and thus, no reconstruction process is employed. 
    \ifbool{conf}{}{Instead, strict equality in C5 in Section \ref{section:checksolution_stateequal} is used to identify the motion plan.}
	The motion plan detected under these settings is depicted in Figure \ref{fig:examplebbexactflow}, where the forward and backward partial motion plans identified in S1 are depicted by the green and magenta lines, respectively.
However, for most scenarios, such as $X_{0} = \{(14, 0)\}$ and $X_{f} = \{(10, 0)\}$ in Example \ref{example:bouncingball}, if we require strict equality without allowing any tolerance, then HyRRT-Connect fails to return a motion plans in almost all the runs. This demonstrates the necessity of allowing a certain degree of tolerance in HyRRT-Connect. The simulation results, allowing a tolerance of $\delta = 0.2$, are shown in Figure \ref{fig:examplebbinexactflow}. A discontinuity during the flow between the forward and backward partial motion plans is observed, as depicted in the red circle in Figure \ref{fig:examplebbinexactflow}. This discontinuity is addressed through the reconstruction process, as is shown in Figure \ref{fig:examplebbinexactrecon}. A deviation between the endpoint of the reconstructed motion plan and the final state set is also observed in Figure \ref{fig:examplebbinexactrecon}, which, according to Theorem \ref{theorem:toleranceexist}, is bounded. 

Next, we proceed to perform simulation results of HyRRT-Connect showcasing its full functionalities, including the ability to connect partial motion plans via jumps. Figure ~\ref{fig:examplehySST} shows this situation. This feature enables HyRRT-Connect to avoid discontinuities during the flow, as it computes exact solutions at jumps to connect forward and backward partial motion plans.
Furthermore, we compare the computational performance of the proposed HyRRT-Connect algorithm, its variant Bi-HyRRT (where the function to connect partial motion plans via jumps is disabled), and HyRRT given in \cite{wang2022rapidly}.
 Conducted on a $3.5$GHz Intel Core i7 processor using MATLAB, each algorithm is run $20$ times on the same problem. HyRRT-Connect on average creates $78.8$ vertices in $0.27$ seconds, Bi-HyRRT $186.5$ vertices in $0.76$ seconds, and HyRRT $457.4$ vertices in $3.93$ seconds. Compared to HyRRT, both HyRRT-Connect and Bi-HyRRT show considerable improvements in computational efficiency. Notably, HyRRT-Connect, with its jump-connecting capability, achieves a $64.5\%$ reduction in computation time and $57.7\%$ fewer vertices than Bi-HyRRT, demonstrating the benefits of jump connections.
\end{example}
\ifbool{conf}{
	\begin{figure}[htbp]
		 		\centering
		\subfigure[Precise connection during the flow is achieved.\label{fig:examplebbexactflow}]{\includegraphics[width=0.48\columnwidth]{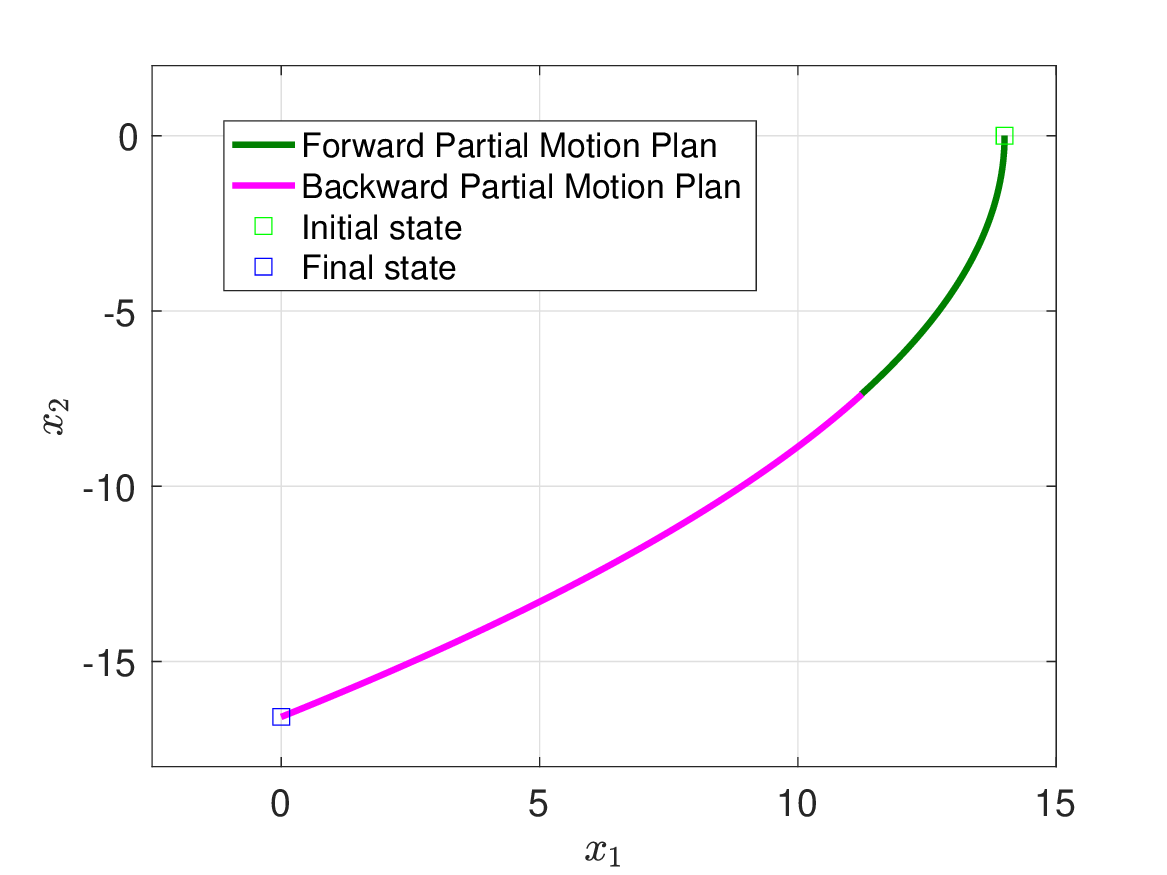}}		
		\subfigure[A discontinuity during the flow in red circle.\label{fig:examplebbinexactflow}]{\includegraphics[width=0.48\columnwidth]{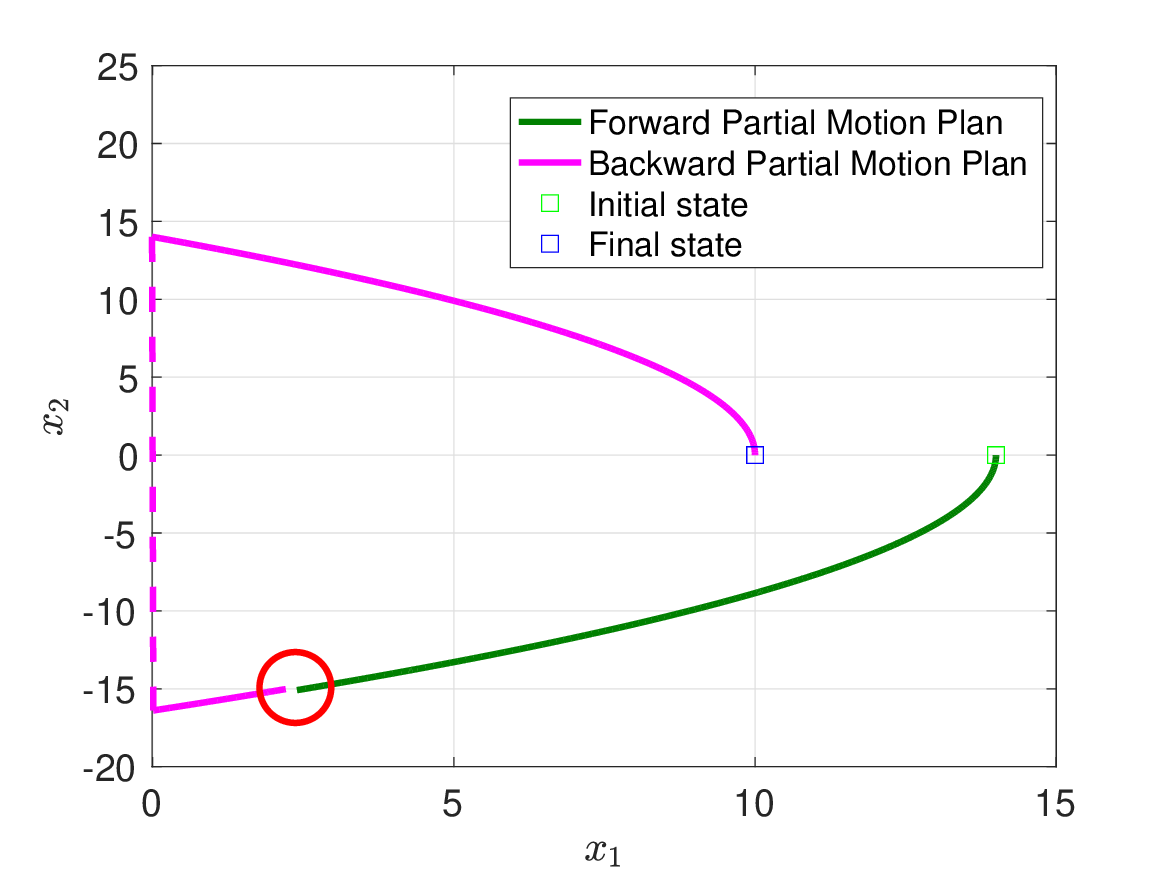}}
		\subfigure[The backward partial motion plan is reconstructed.\label{fig:examplebbinexactrecon}]{\includegraphics[width=0.48\columnwidth]{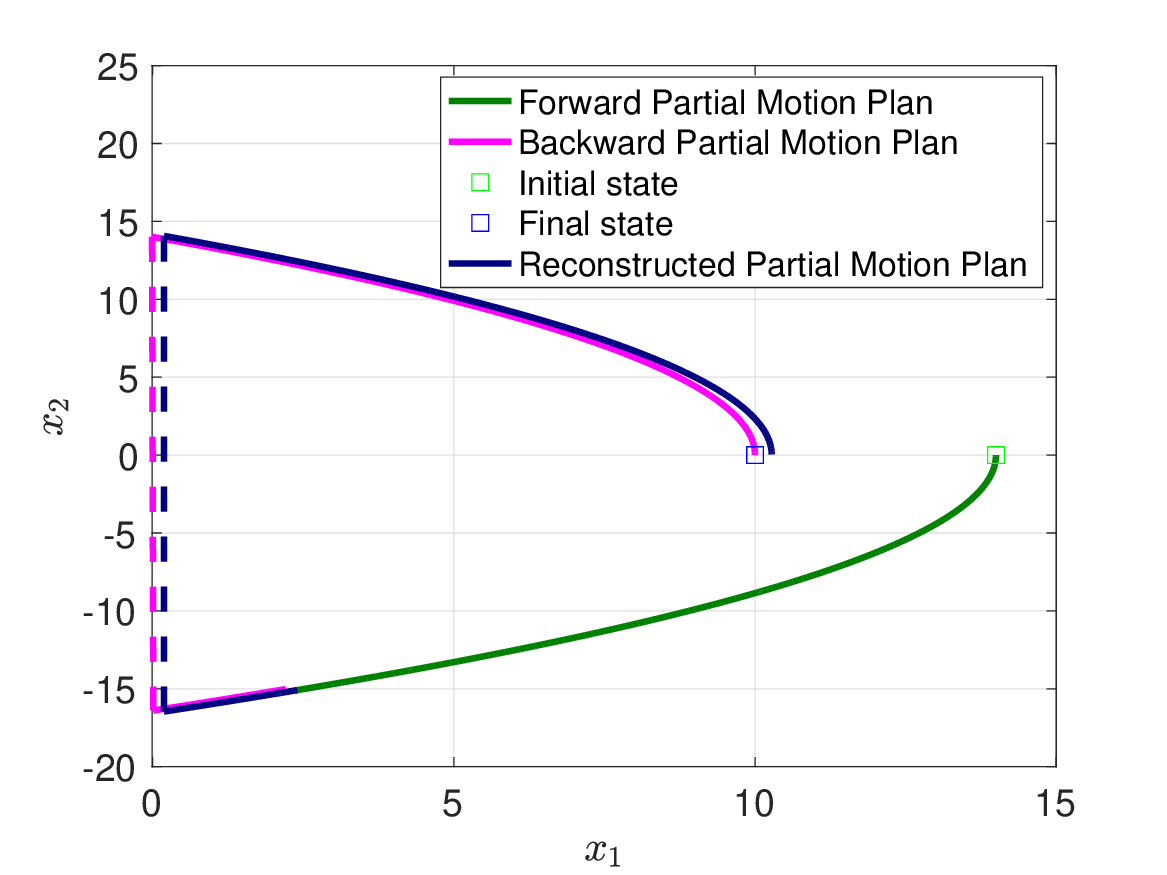}}
		\subfigure[HyRRT-Connect\label{fig:examplehySST}]{\includegraphics[width=0.48\columnwidth]{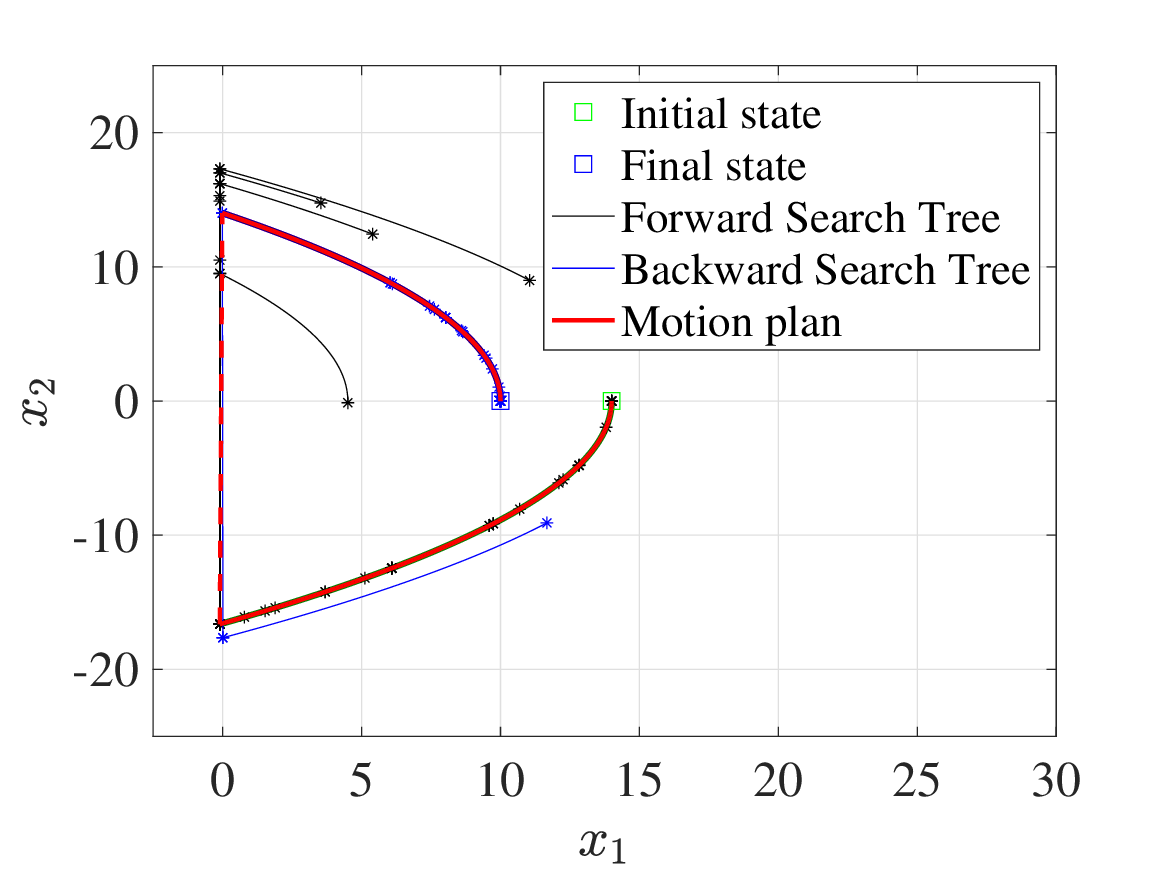}}
			\ifbool{conf}{\vspace{-0.4cm}}{}
		\caption{Motion plans for the actuated bouncing ball example.}
	\end{figure}
}{
	\begin{figure}[htbp]
		\centering
		\includegraphics[width=0.7\columnwidth]{figures/hyrrt_connect_bb_flowexact.eps}
		\caption{Motion plans for actuated bouncing ball example solved by HyRRT-Connect, where precise connection during the flow is achieved.}\label{fig:examplebbexactflow}
	\end{figure}
	\begin{figure}[htbp]
		\centering
		\includegraphics[width=0.7\columnwidth]{figures/hyrrt_connect_bb_flowinexact.eps}
		\caption{Motion plans for actuated bouncing ball example solved by HyRRT-Connect, where  a discontinuity during the flow, as is depicted in red circle, is observed.}\label{fig:examplebbinexactflow}
	\end{figure}
	\begin{figure}[htbp]
		\centering
		\includegraphics[width=0.7\columnwidth]{figures/hyrrt_connect_bb_flowrecon.eps}
		\caption{Motion plans for actuated bouncing ball example solved by HyRRT-Connect, where the backward partial motion plan is reconstructed.}\label{fig:examplebbinexactrecon}
	\end{figure}
	\begin{figure}[htbp]
		\centering
		\subfigure[HyRRT-Connect\label{fig:examplehySST}]{\includegraphics[width=0.48\columnwidth]{figures/bihyrrtconnect_bb.eps}}
		\subfigure[HyRRT\label{fig:examplehyRRT}]{\includegraphics[width=0.48\columnwidth]{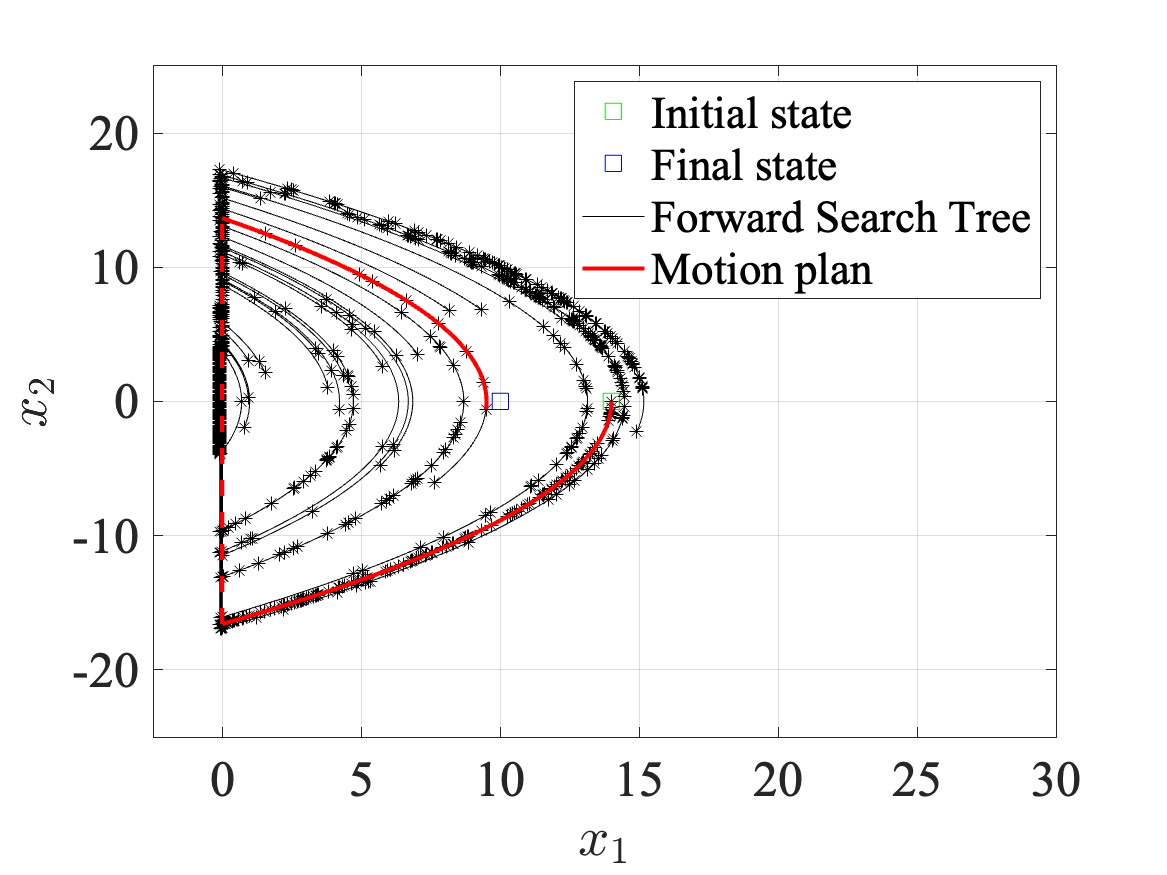}}
		\caption{Motion plans for the actuated bouncing ball example solved by HyRRT-Connect and HyRRT in \cite{wang2022rapidly}\label{fig:examplebb}.}
\end{figure}}
\begin{example}(Walking robot system in Example ~\ref{example:biped}, revisited)
	The simulation results demonstrate that HyRRT-Connect successfully finds a motion plan for the high-dimensional walking robot system with a tolerance $\delta$ of $0.3$. The forward search tree $\fw{\stree}$, with its partial motion plan shown in green, is displayed in Figure \ref{fig:examplebipedfw}. Similarly, the backward search tree $\bw{\stree}$, with its partial motion plan in magenta, is shown in Figure \ref{fig:examplebipedbw}. 
	\ifbool{conf}{}{These simulations were performed in MATLAB on a 3.5 GHz Intel Core i7 processor.} Running HyRRT-Connect and HyRRT $20$ times each for the same problem, HyRRT-Connect generates $470.2$ vertices and takes $19.8$ seconds, while HyRRT generates $2357.1$ vertices and takes $71.5$ seconds. This indicates a significant $72.3\%$ improvement in computation time and $80.1\%$ in vertex creation for HyRRT-Connect compared to HyRRT, highlighting the efficiency of bidirectional exploration.
\end{example}

\ifbool{conf}{\vspace{-0.3cm}}{}
\section{Discussion on Parallel Implementation}\label{section:parallelcomputation}
After each new vertex is added to the search tree, the HyRRT-Connect algorithm frequently halts and restarts parallel computations to check for overlaps between the forward and backward search trees. This process can prevent the potential computational performance improvement from parallelization. Our simulations using MATLAB's $\texttt{parpool}$ for parallel computation showed no improvement compared to an interleaved approach. In fact, the parallel implementation took significantly longer — about $2.47$ seconds for the actuated bouncing ball system, compared to $0.27$ second with interleaving, and $167.8$ seconds for the walking robot system versus only $19.8$ seconds. 
It is important to note that the time required for halting and restarting parallel computation is contingent upon factors like the specific parallel computation software toolbox used, the hardware platform, and other implementation details.
 Consequently, the conclusions regarding performance may differ with varying implementations.
%
%
%
\ifbool{conf}{\begin{figure}[htbp]
		\centering
		\subfigure[Forward search tree and partial motion plan.\label{fig:examplebipedfw}]{\includegraphics[width=0.48\columnwidth]{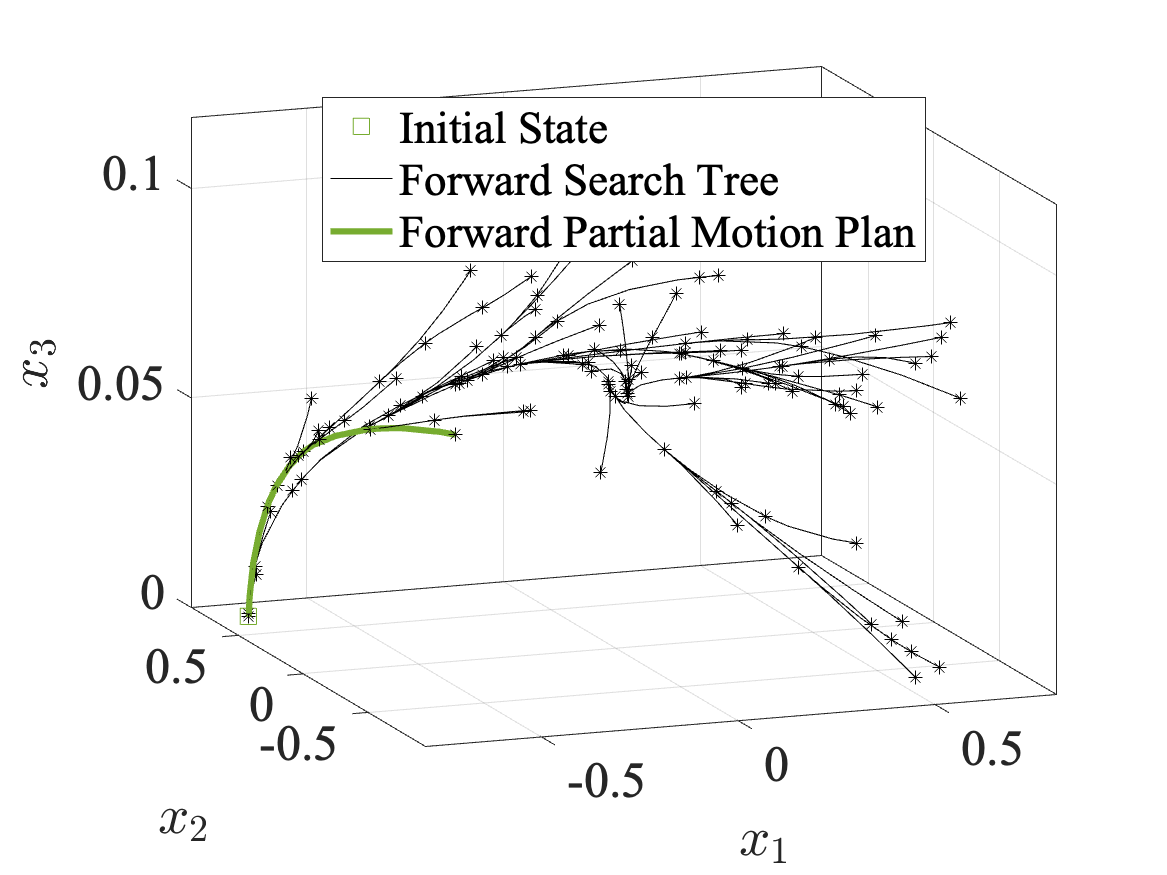}}
		\subfigure[Backward search tree and partial motion plan.\label{fig:examplebipedbw}]{\includegraphics[width=0.48\columnwidth]{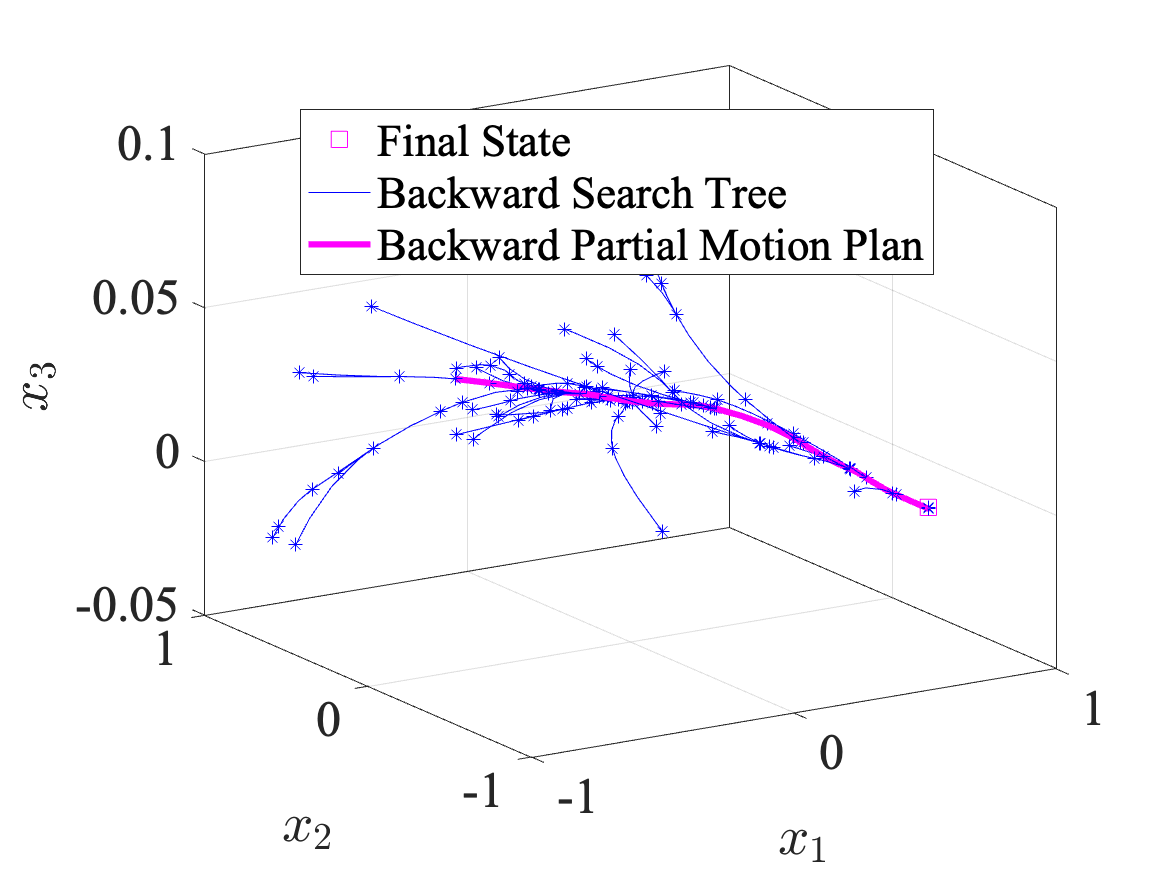}}
					\caption{Selected states of the partial motion plans for the walking robot system.}
	\end{figure}
}{
	\begin{figure}[htbp]
		\centering
		\includegraphics[width=0.7\columnwidth]{figures/forwardsearchtree.eps}
		\caption{Forward search tree and the forward partial motion plan.}\label{fig:examplebipedfw}
	\end{figure}
	\begin{figure}[htbp]
		\centering
		\includegraphics[width=0.7\columnwidth]{figures/backwardsearchtree.eps}
		\caption{Backward search tree and the backward partial motion plan.}\label{fig:examplebipedbw}
	\end{figure}
	\begin{figure}[htbp]
		\centering
		\includegraphics[width=0.7\columnwidth]{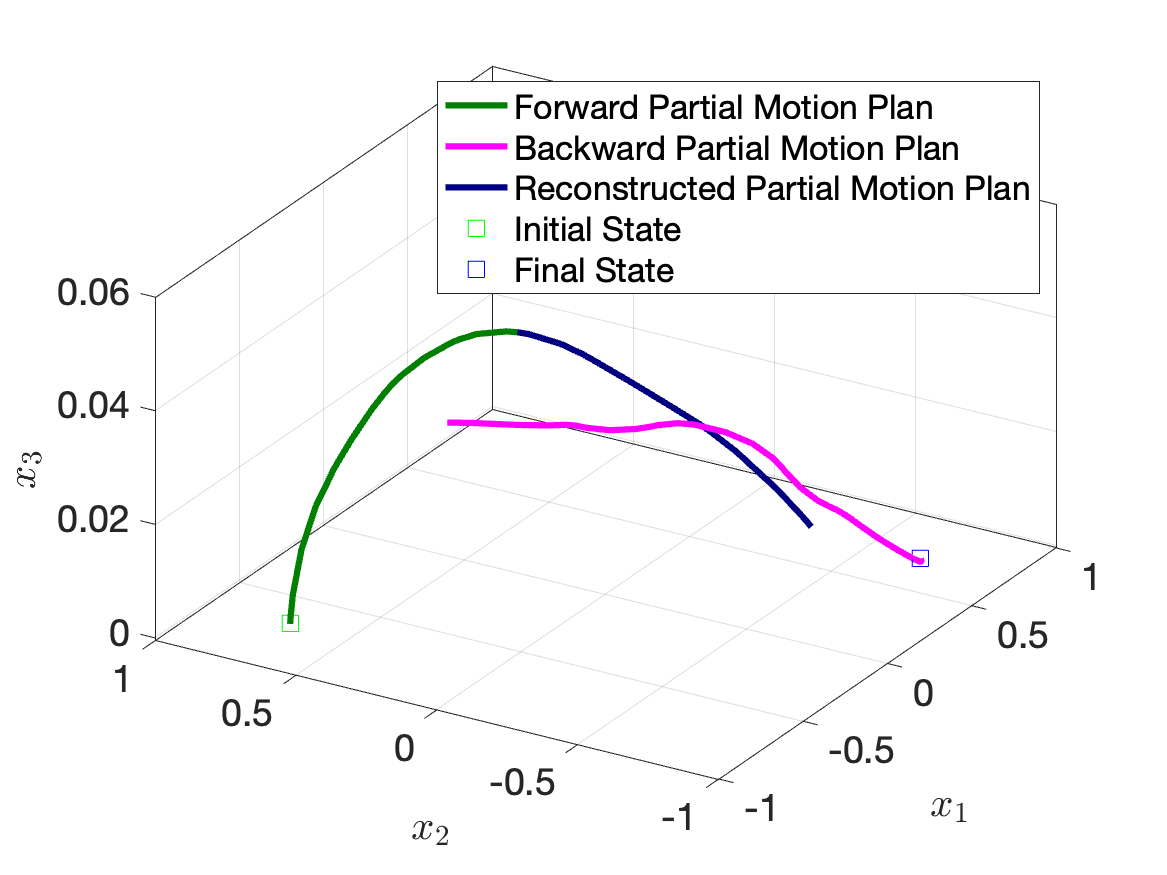}
		\caption{Selected states of the forward and backward partial motion plan generated by HyRRT-Connect for the walking robot system.}\label{fig:examplebiped}
\end{figure}}

%
\section{Conclusion}

In this paper, we present HyRRT-Connect, a bidirectional algorithm designed to solve the motion planning problems for hybrid systems. This algorithm includes a novel backward-in-time hybrid system formulation, validated by reversing and concatenating functions on the hybrid time domain. To tackle potential flow discontinuities between forward and backward motion plans, we introduce a reconstruction process. In addition to smoothening the discontinuity, this process ensures convergence to the final state set as discontinuity tolerance decreases. The effectiveness and computational improvement of HyRRT-Connect are exemplified through applications to an actuated bouncing ball and a walking robot.
	\ifbool{conf}{\vspace{-0.26cm}}{}
\ifbool{conf}{}{\bibliographystyle{plain}}
\bibliographystyle{ifacconf}
\bibliography{references}
\ifbool{conf}{}{
\appendix
\section{Proof of Proposition \ref{theorem:concatenationsolution}}\label{appendix:proof_concatenation}
We show that $\psi = (\mystatet,  \myinputt) = (\mystatet_{1}|\mystatet_{2}, \myinputt_{1}|\myinputt_{2})$ satisfies the conditions in Definition \ref{definition:solution}, as follows.
\begin{lemma}\label{lemma:concatenationHTD}
	Given two functions, denoted $\psi_{1}$ and $\psi_{2}$ and defined on hybrid time domains, the domain of their concatenation $\psi = \psi_{1}|\psi_{2}$, is also a hybrid time domain.
\end{lemma}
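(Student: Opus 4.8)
The plan is to verify Definition \ref{definition:hybridtimedomain} directly for the set $E := \dom \psi = \dom \psi_{1} \cup (\dom \psi_{2} + \{(T,J)\})$, where $(T,J) = \max \dom \psi_{1}$, as prescribed by item 1 of Definition \ref{definition:concatenation}. Write $E_{1} := \dom \psi_{1}$ and $E_{2} := \dom \psi_{2}$. Since the concatenation is well defined, $\psi_{1}$ is compact, so $E_{1}$ is a compact hybrid time domain and admits a single decomposition $E_{1} = \bigcup_{j=0}^{J}([t_{j}, t_{j+1}]\times\{j\})$ with $0 = t_{0} \le t_{1} \le \dots \le t_{J+1} = T$.

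First I would record two structural facts. (i) Any nonempty hybrid time domain contains $(0,0)$: applying the defining property of Definition \ref{definition:hybridtimedomain} at any of its points forces the level-$0$ interval to start at $t_{0} = 0$. Hence the shifted set $E_{2} + \{(T,J)\}$ has $(T,J)$ as its minimal element, which coincides with $\max E_{1}$. (ii) The shifted set lives entirely on jump levels $\ge J$, and on level $J$ it occupies only times $\ge T$; conversely $E_{1}$ lives on levels $\le J$ and on level $J$ occupies only times $\le T$. Thus the two pieces of $E$ overlap exactly at the point $(T,J)$ and glue cleanly there.

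The core of the argument is then a case analysis on an arbitrary $(T', J') \in E$. If $(T',J') \in E_{1}$, then $J' \le J$, and because the shifted set contributes nothing on levels $< J$ and only the single time $T$ on level $J$ (which already lies in $E_{1}$), the set $E \cap ([0,T']\times\{0,\dots,J'\})$ equals $E_{1} \cap ([0,T']\times\{0,\dots,J'\})$, which has the required form since $E_{1}$ is a hybrid time domain. If instead $(T',J') = (t'' + T,\, j'' + J)$ for some $(t'', j'') \in E_{2}$, I would take the decomposition $E_{2} \cap ([0,t'']\times\{0,\dots,j''\}) = \bigcup_{j=0}^{j''}([s_{j}, s_{j+1}]\times\{j\})$ with $0 = s_{0} \le \dots \le s_{j''+1} = t''$, and build the merged time sequence $t_{0}, t_{1}, \dots, t_{J},\, s_{1} + T, s_{2} + T, \dots, s_{j''+1} + T$ indexed over levels $0, \dots, J + j'' = J'$. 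The key computation is at level $J$, where $E_{1}$ contributes $[t_{J}, T]$ and the shifted set contributes $[s_{0} + T, s_{1} + T] = [T, s_{1} + T]$; since $t_{J+1} = T = s_{0} + T$, these combine into the single interval $[t_{J}, s_{1} + T]$, so the merged sequence is nondecreasing and yields exactly the decomposition demanded by Definition \ref{definition:hybridtimedomain}.

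The main obstacle is the bookkeeping at the junction level $J$: one must confirm that the two half-intervals $[t_{J}, T]$ and $[T, s_{1} + T]$ abut rather than overlap on a set of positive length or leave a gap, and that the concatenated sequence of times remains monotone across the splice. Fact (i) --- that $(0,0)\in E_{2}$, so that $s_{0} = 0$ and the shifted level-$J$ interval begins precisely at $T$ --- is exactly what makes this gluing work, and verifying it cleanly (together with the sub-case $J' = J$, where $T' = T$ and the shifted contribution reduces to the already-present point $(T,J)$) is where the care is needed. The remaining levels are routine, since on them $E$ agrees with either $E_{1}$ or a pure shift of $E_{2}$.
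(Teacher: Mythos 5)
Your proposal is correct and follows essentially the same route as the paper's proof: a case split on whether $(T',J')$ lies in $\dom\psi_{1}$ or in the shifted copy of $\dom\psi_{2}$, with the first case reducing to the decomposition of $\dom\psi_{1}$ and the second handled by splicing the two time sequences at level $J$ using $t_{J+1}=T=s_{0}+T$. If anything, your treatment of the junction is slightly more careful than the paper's, which writes the shifted set's intersection with $[0,T']\times\{0,\dots,J'\}$ as empty in the first case, whereas you correctly observe it can contain the point $(T,J)$, already absorbed into $\dom\psi_{1}$'s decomposition.
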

\begin{proof}
	Because $\dom \psi_{1}$ and $\dom \psi_{2}$ are hybrid time domains, according to Definition \ref{definition:hybridtimedomain}, for each $(T_{1}, J_{1})\in \dom \psi_{1}$ and $(T_{2}, J_{2})\in \dom \psi_{2}$,
	$$
	\dom \psi_{1} \cap ([0, T_{1}]\times \{0,1,..., J_{1}\}) = \cup_{j = 0}^{J_{1} }([t^{1}_{j}, t^{1}_{j + 1}], j)
	$$
	for some finite sequence of times $0=t^{1}_{0}\leq t^{1}_{1}\leq t^{1}_{2}\leq ...\leq t^{1}_{J_{1} + 1}  = T_{1}$ and 
	$$
	\dom \psi_{2} \cap ([0, T_{2}]\times \{0,1,..., J_{2}\}) = \cup_{j = 0}^{J_{2} }([t^{2}_{j}, t^{2}_{j + 1}], j)
	$$
	for some finite sequence of times $0=t^{2}_{0}\leq t^{2}_{1}\leq t^{2}_{2}\leq ...\leq t^{2}_{J_{2} + 1} = T_{2}$.
	
	Since $\dom \psi = \dom \psi_{1} \cup (\dom \psi_{2} + \{(T, J)\})$ where $(T, J) = \max \dom \psi_{1}$, then for each $(T', J')\in \dom \psi$,
	\begin{equation}\label{equation:concatenation_proof1}
		\begin{aligned}
		&\dom \psi \cap ([0, T']\times \{0,1,..., J'\}) \\
		&= \dom \psi_{1} \cup (\dom \psi_{2} + \{(T, J)\})\\
		& \cap ([0, T']\times \{0,1,..., J'\}) \\
		&= ( \dom \psi_{1}  \cap ([0, T']\times \{0,1,..., J'\})) \cup \\
		&(\dom \psi_{2} + \{(T, J)\})\cap ([0, T']\times \{0,1,..., J'\}) \\
		\end{aligned}
	\end{equation}
	
	Since $(T', J')\in \dom \psi$ and $\dom \psi = \dom \psi_{1} \cup (\dom \psi_{2} + \{(T, J)\})$, either of the following hold
	\begin{enumerate}
		\item $(T', J') \in \dom \psi_{1}$ such that $T'\leq T$ and $J'\leq J$;
		\item $(T', J') \in  (\dom \psi_{2} + \{(T, J)\})$ such that $T'\geq T$ and $J'\geq J$;
	\end{enumerate}
	
	For the first item, (\ref{equation:concatenation_proof1}) can be written as
	$$
	\begin{aligned}
	&\dom \psi \cap ([0, T']\times \{0,1,..., J'\})\\
	&= ( \dom \psi_{1}  \cap ([0, T']\times \{0,1,..., J'\})) \cup \\
	&(\dom \psi_{2} + \{(T, J)\})\cap ([0, T']\times \{0,1,..., J'\}) \\
	& = \cup_{j = 0}^{J'}([t^{1}_{j}, t^{1}_{j + 1}], j) \cup \emptyset = \cup_{j = 0}^{J'}([t^{1}_{j}, t^{1}_{j + 1}], j)
	\end{aligned}
	$$
	for some finite sequence of times $0=t^{1}_{0}\leq t^{1}_{1}\leq t^{1}_{2}\leq ...\leq t^{1}_{J'+ 1} = T'$. 
	
	For the second item, (\ref{equation:concatenation_proof1}) can be written as
	\begin{equation}
	\begin{aligned}
	&\dom \psi \cap ([0, T']\times \{0,1,..., J'\})\\
	&= ( \dom \psi_{1}  \cap ([0, T']\times \{0,1,..., J'\})) \cup \\
	&(\dom \psi_{2} + \{(T, J)\})\cap ([0, T']\times \{0,1,..., J'\}) \\
	& = ( \dom \psi_{1}  \cap ([0, T]\times \{0,1,..., J\})) \cup \\
	&(\dom \psi_{2} + \{(T, J)\})\cap ([T, T']\times \{J,J + 1,..., J'\}) \\
	& = ( \dom \psi_{1}  \cap ([0, T]\times \{0,1,..., J\})) \cup \\
	&(\dom \psi_{2}\cap ([0, T' - T]\times \{0,1,..., J' - J\}) + \{(T, J)\})\\
	& = \cup_{j = 0}^{J}([t^{1}_{j}, t^{1}_{j + 1}], j) \cup (\cup_{j = 0}^{J' - J}([t^{2}_{j}, t^{2}_{j + 1}], j) + \{(T, J)\})\\
	& = \cup_{j = 0}^{J}([t^{1}_{j}, t^{1}_{j + 1}], j) \cup (\cup_{j = 0}^{J' - J}([t^{2}_{j} + T, t^{2}_{j + 1} + T], j + J))\\
	\end{aligned}
	\end{equation}
	for some finite sequence of times $0=t^{1}_{0}\leq t^{1}_{1}\leq t^{1}_{2}\leq ...\leq t^{1}_{J + 1} = T$ and $T=t^{2}_{0} + T\leq t^{2}_{1} + T\leq t^{2}_{2}+ T\leq ...\leq t^{2}_{J' - J + 1} + T= T' $. Hence, a finite sequence $\{t_{i}\}_{i = 0}^{J'}$ can be constructed as follows such that $0=t_{0}\leq t_{1}\leq t_{2}\leq ...\leq t_{J'} = T'$.
	$$
	t_{i} = \left\{\begin{aligned}
	t^{1}_{i}\qquad &i\leq J\\
	t^{2}_{i - J} + T\qquad &J + 1\leq i \leq J'\\
	\end{aligned}\right.
	$$
	
	Therefore, there always exists a finite sequence of $\{t_{i}\}_{i = 0}^{J'}$ such that for each $(T', J')\in \dom \psi$,
	$$
	\dom \psi \cap ([0, T']\times \{0,1,..., J'\}) =  \cup_{j = 0}^{J'}([t_{j}, t_{j + 1}], j). 
	$$ Hence, $\dom \psi$ is a hybrid time domain.  \ifbool{conf}{\qed}{}
\end{proof}
\begin{definition}(Lebesgue measurable set and Lebesgue measurable function)\label{definition:lebesguemeasurable}
	A set $S\subset \myreals$ is said to be \emph{Lebesgue measurable}  if it has positive Lebesgue measure $\mu(S)$, where $\mu(S) = \sum_{k} (b_{k} - a_{k})$ and $(a_{k}, b_{k})$'s are all of the disjoint open intervals in $S$. Then $s: S\to \myreals[n_{s}]$ is said to be  a \emph{Lebesgue measurable function} if for every open set $\mathcal{U}\subset \myreals[n_{s}]$ then set $\{r\in S: s(r) \in \mathcal{U}\}$ is Lebesgue measurable.
\end{definition}
\begin{lemma}\label{lemma:concatenation_input_Lebesguemeasurable}
	Given two hybrid inputs, denoted $\myinputt_{1}$ and $\myinputt_{2}$, their concatenation $\myinputt = \myinputt_{1}|\myinputt_{2}$, is such that for each $j\in \nnumbers$, $t\mapsto \myinputt(t,j)$ is Lebesgue measurable on the interval $I^{j}_{\myinputt}:=\{t:(t, j)\in \dom \myinputt\}$.
\end{lemma}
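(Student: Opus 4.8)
The plan is to fix an arbitrary level $j \in \nnumbers$ and describe $t \mapsto \myinputt(t,j)$ on $I^{j}_{\myinputt}$ explicitly, organizing the analysis by the position of $j$ relative to $J$, where $(T, J) = \max \dom \myinputt_{1}$. By Definition \ref{definition:concatenation}, $\dom \myinputt = \dom \myinputt_{1} \cup (\dom \myinputt_{2} + \{(T, J)\})$, and the shifted copy $\dom \myinputt_{2} + \{(T, J)\}$ contributes only at levels $j \geq J$. First I would settle the two non-overlapping cases. For $j < J$ we have $I^{j}_{\myinputt} = I^{j}_{\myinputt_{1}}$ and $\myinputt(\cdot, j) = \myinputt_{1}(\cdot, j)$, which is Lebesgue measurable because $\myinputt_{1}$ is a hybrid input (Definition \ref{definition:hybridinput}). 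For $j > J$ we have $I^{j}_{\myinputt} = I^{j - J}_{\myinputt_{2}} + T$ and $\myinputt(t, j) = \myinputt_{2}(t - T, j - J)$; here I would invoke translation invariance of Lebesgue measurability, since $t \mapsto t - T$ is an affine homeomorphism (hence measure preserving), so the composition of the measurable function $s \mapsto \myinputt_{2}(s, j - J)$ with this translation stays measurable on the translated interval.

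The crux is the junction level $j = J$, where both inputs contribute. Using the hybrid time domain structure from Lemma \ref{lemma:concatenationHTD}, I would write $I^{J}_{\myinputt} = I^{J}_{\myinputt_{1}} \cup (I^{0}_{\myinputt_{2}} + T)$, an interval of the form $[t^{1}_{J}, T] \cup [T, T + t^{2}_{1}]$ whose two pieces meet only at the single point $t = T$. Reading Definition \ref{definition:concatenation} componentwise, $\myinputt(\cdot, J)$ equals $\myinputt_{1}(\cdot, J)$ on $[t^{1}_{J}, T)$, equals $\myinputt_{2}(0, 0)$ at $t = T$ (the point $(T, J)$ is excluded from the first branch and belongs to $\dom \myinputt_{2} + \{(T, J)\}$), and equals $\myinputt_{2}(\cdot - T, 0)$ on $(T, T + t^{2}_{1}]$. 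I would then decompose $I^{J}_{\myinputt}$ into these three disjoint measurable pieces and check measurability on each: the first is the restriction of the measurable function $\myinputt_{1}(\cdot, J)$ to a measurable subinterval, the middle is a single point, and the last is a translate of $\myinputt_{2}(\cdot, 0)$.

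To finish, I would glue the pieces with the standard fact that if a set is a finite disjoint union of measurable sets and a function is measurable on each piece, then it is measurable on the union: for any open $\mathcal{U}$, the preimage $\{t \in I^{J}_{\myinputt} : \myinputt(t, J) \in \mathcal{U}\}$ is the finite union of the corresponding preimages on the pieces, each of which is Lebesgue measurable in the sense of Definition \ref{definition:lebesguemeasurable}, so the union is too. Combining the three level cases then yields measurability of $t \mapsto \myinputt(t, j)$ for every $j \in \nnumbers$, which is exactly the claim.

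I expect the main obstacle to be the bookkeeping at $j = J$ rather than any analytic difficulty: one must correctly assign the value at the junction $t = T$ to the second branch of the concatenation, and confirm that the two contributing intervals overlap in at most the null set $\{T\}$ so that the piecewise description is consistent. The measure-theoretic ingredients themselves — restriction to measurable subsets, translation invariance, and finite gluing — are all standard and require no delicate estimates.
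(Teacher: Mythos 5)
Your proposal is correct, and its skeleton matches the paper's proof: both split into the cases $j<J$, $j>J$, and $j=J$, dispose of the first two by inheriting measurability from $\myinputt_{1}$ and from $\myinputt_{2}$ via translation, and concentrate the work at the junction level $j=J$. The difference lies in how that junction is handled. The paper writes $\myinputt(\cdot,J)$ as a two-branch function on $I^{J}_{\myinputt_{1}}\setminus\{T\}$ and $I^{0}_{\myinputt_{2}}+\{T\}$, intersects a given open set $\mathcal{U}$ with the ranges of the two branches to form $\widetilde{\mathcal{U}}_{1},\widetilde{\mathcal{U}}_{2}$, and then argues via a superset inclusion ($\supset$) that the preimage $\{t\in I^{J}_{\myinputt}:\myinputt(t,J)\in\mathcal{U}\}$ is measurable; as stated this step is logically loose, since containing a measurable set does not make a set measurable (and the sets $\widetilde{\mathcal{U}}_{i}$ need not be open). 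Your route instead partitions $I^{J}_{\myinputt}$ into the three disjoint measurable pieces $[t^{1}_{J},T)$, $\{T\}$, and $(T,T+t^{2}_{1}]$, writes the preimage of $\mathcal{U}$ as the exact finite union of the preimages on these pieces (restriction of $\myinputt_{1}(\cdot,J)$, a singleton, and a translate of $\myinputt_{2}(\cdot,0)$, respectively), and glues. This is a genuine improvement in rigor at precisely the point where the paper is weakest: the equality-based decomposition replaces the paper's inclusion argument, and your explicit assignment of the value at $t=T$ to the second branch (forced by Definition \ref{definition:concatenation}) makes the piecewise description unambiguous. Both arguments buy the same lemma, but yours would survive refereeing of the junction case without repair.
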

\begin{proof}
	We show that for all $j\in \mathbb{N}$, the function $t\mapsto \myinputt(t, j)$ satisfies conditions in Definition \ref{definition:lebesguemeasurable} on the interval $I^{j}_{\myinputt}:=\{t:(t, j)\in \dom \myinputt\}$. 
	
	According to Definition \ref{definition:concatenation}, $\dom \psi = \dom \psi_{1} \cup (\dom \psi_{2} + \{(T, J)\})$ where $(T, J) = \max \dom \psi_{1}$. For each $j\in \nnumbers$, there could be three cases:
	1) $j < J$;
	2) $j > J$;
	3) $j = J$.
	Then we show in each case,  $t\mapsto \myinputt(t,j)$ is Lebesgue measurable.
	\begin{enumerate}[label=\Roman*]
		\item Consider the case when $j < J$. Since $\myinputt_{1}$ is a hybrid input, the functions $t\mapsto \myinputt_{1}(t, j)$ is Lebesgue measurable on the interval $I^{j}_{\myinputt_{1}}:=\{t:(t, j)\in \dom \myinputt_{1}\}$.  Definition \ref{definition:concatenation} implies that $\myinputt(t, j) = \myinputt_{1}(t, j)$ holds for all $(t, j)\in \dom \myinputt_{1}$ such that $j < J$. Then $t\mapsto \myinputt(t, j) = \myinputt_{1}(t, j) $ is Lebesgue measurable on the interval $I^{j}_{\psi} = I^{j}_{\psi_{1}}$ if $j < J$. 
		\item Consider the case when $j > J$. Since $\myinputt_{2}$ is a hybrid input, then $t\mapsto \myinputt_{2}(t, j)$ is Lebesgue measurable on the interval $I^{j}_{\psi_{2}}$. 
		Definition \ref{definition:concatenation} implies that  $\myinputt(t, j) = \myinputt_{2}(t - T, j - J)$ for all $(t, j)\in \dom \myinputt_{2} + \{(T, J)\}$ such that $j > J$. Then $t\mapsto \myinputt(t, j) = \myinputt_{2}(t - T, j - J)$ is Lebesgue measurable on the interval $I^{j}_{\psi} = I^{j - J}_{\psi_{2}} + \{T\}$ if $j > J$.
		\item Consider the case when $j = J$. According to Definition \ref{definition:concatenation}, 
		$$
		\myinputt(t, J) = \left\{
		\begin{aligned}
		&\myinputt_{1}(t, J) &t\in I^{J}_{\psi_{1}}\backslash \{T\}\\
		&\myinputt_{2}(t - T,0) &t \in I^{0}_{\psi_{2}} + \{T\}.\\
		\end{aligned}
		\right.
		$$
		Note that $t\mapsto \myinputt_{1}(t, J)$ is Lebesgue measurable on the interval $I_{\myinputt_{1}}^{J}$ and $t\mapsto \myinputt_{2}(t, 0)$ is Lebesgue measurable on the interval $I_{\myinputt_{2}}^{0}$. According to Definition \ref{definition:lebesguemeasurable}, 
		\begin{enumerate}
			\item for every open set $\mathcal{U}_{1}\subset \{\myinputt_{1}(t, J)\in \myreals[m]: t \in I_{\myinputt_{1}}^{J}\}$, the set $\{t\in I_{\myinputt_{1}}^{J}: \myinputt_{1}(t, J) \in \mathcal{U}_{1}\}$ is Lebesgue measurable;
			\item for every open set $\mathcal{U}_{2}\subset \{\myinputt_{2}(t, 0)\in \myreals[m]: t \in I_{\myinputt_{2}}^{0}\}$, the set $\{t\in I_{\myinputt_{2}}^{0}: \myinputt_{2}(t, 0) \in \mathcal{U}_{2}\}$ is Lebesgue measurable. 
		\end{enumerate} 
		Then, for any open set $\mathcal{U}\subset \{\myinputt(t, J)\in \myreals[m]: t \in I_{\myinputt}^{J}\}$, define $\widetilde{\mathcal{U}}_{1} := \mathcal{U}\cap \{\myinputt_{1}(t, J)\in \myreals[m]: t \in I_{\myinputt_{1}}^{J}\}$ and $\widetilde{\mathcal{U}}_{2} := \mathcal{U}\cap \{\myinputt_{2}(t, 0)\in \myreals[m]: t \in I_{\myinputt_{2}}^{0}\}$. Note that 
		$$
		\begin{aligned}
		&\{\myinputt(t, J)\in \myreals[m]: t \in I_{\myinputt}^{J}\} = \\
		& \{\myinputt_{1}(t, J)\in \myreals[m]: t \in I_{\myinputt_{1}}^{J}\}\cup  \{\myinputt_{2}(t, 0)\in \myreals[m]: t \in I_{\myinputt_{2}}^{0}\}
		\end{aligned}$$
		Therefore, at least one of $\widetilde{\mathcal{U}}_{1} $ and $\widetilde{\mathcal{U}}_{2}$ is non-empty. Then the set $\{t\in I_{\myinputt}^{J}: \myinputt(t, J) \in \mathcal{U}\} \supset \{t\in I_{\myinputt_{1}}^{J}: \myinputt_{1}(t, J) \in \widetilde{\mathcal{U}}_{1}\} \cup \{t\in I_{\myinputt_{2}}^{0}: \myinputt_{2}(t, 0) \in \widetilde{\mathcal{U}}_{2}\} + \{T\}$. Note that at least one of the sets $\{t\in I_{\myinputt_{1}}^{J}: \myinputt_{1}(t, J) \in \widetilde{\mathcal{U}}_{1}\}$ and $\{t\in I_{\myinputt_{2}}^{0}: \myinputt_{2}(t, 0) \in \widetilde{\mathcal{U}}_{2}\}$ is Lebesgue measurable. Therefore, the set $\{t\in I_{\myinputt}^{J}: \myinputt(t, J) \in \mathcal{U}\} $  is Lebesgue measurable.
	\end{enumerate}
	Therefore, $t\mapsto \myinputt(t, j)$ is Lebesgue measurable on the interval $I_{\myinputt}^{j}$ for all $j\in \mathbb{N}$. \ifbool{conf}{\qed}{}
\end{proof}
\begin{definition} (Locally essentially bounded function)\label{definition:locallybounded}
	A function $s: S\to \myreals[n_{s}]$ is said to be \emph{locally essentially bounded} if for any $r\in S$ there exists an open neighborhood $\mathcal{U}$ of $r$ such that $s$ is bounded almost everywhere\footnote{A property is said to hold almost everywhere or for almost all points in a set $S$ if the set of elements of $S$ at which the property does not hold has zero Lebesgue measure.} on $\mathcal{U}$; i.e., there exists $c\geq 0$ such that $|s(r)|\leq c$ for almost all $r\in \mathcal{U}\cap S$.
\end{definition}
\begin{lemma}\label{lemma:concatenation_input_locallyessentiallybounded}
	Given two hybrid inputs, denoted $\myinputt_{1}$ and $\myinputt_{2}$, their concatenation $\myinputt = \myinputt_{1}|\myinputt_{2}$, is such that for each $j\in \nnumbers$, $t\mapsto \myinputt(t,j)$ is locally essentially bounded on the interval $I^{j}_{\myinputt}:=\{t:(t, j)\in \dom \myinputt\}$.
\end{lemma}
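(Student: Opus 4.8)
The plan is to mirror the case analysis used in the proof of Lemma~\ref{lemma:concatenation_input_Lebesguemeasurable}, splitting on the position of $j$ relative to $J$, where $(T,J) = \max\dom\myinputt_{1}$. Recall from Definition~\ref{definition:concatenation} that $\dom\myinputt = \dom\myinputt_{1}\cup(\dom\myinputt_{2}+\{(T,J)\})$ and that $\myinputt(t,j)=\myinputt_{1}(t,j)$ on $\dom\myinputt_{1}\backslash\{(T,J)\}$ while $\myinputt(t,j)=\myinputt_{2}(t-T,j-J)$ on $\dom\myinputt_{2}+\{(T,J)\}$. Fix $j\in\nnumbers$; the three cases are $j<J$, $j>J$, and $j=J$, and in each case I must exhibit, for every point of $I^{j}_{\myinputt}$, an open neighborhood on which $t\mapsto\myinputt(t,j)$ is essentially bounded.

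The two extreme cases are routine and will be dispatched quickly. When $j<J$ we have $I^{j}_{\myinputt}=I^{j}_{\myinputt_{1}}$ and $\myinputt(\cdot,j)=\myinputt_{1}(\cdot,j)$ there, so local essential boundedness is inherited directly from the fact that $\myinputt_{1}$ is a hybrid input. When $j>J$ we have $I^{j}_{\myinputt}=I^{j-J}_{\myinputt_{2}}+\{T\}$ and $\myinputt(t,j)=\myinputt_{2}(t-T,j-J)$, so the property is inherited from $\myinputt_{2}$ after observing that the translation $t\mapsto t-T$ is a homeomorphism carrying neighborhoods to neighborhoods and preserving Lebesgue-null sets, hence preserving local essential boundedness.

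The case $j=J$ is where the work lies, since there $I^{J}_{\myinputt}=I^{J}_{\myinputt_{1}}\cup(I^{0}_{\myinputt_{2}}+\{T\})$ and $\myinputt(\cdot,J)$ is stitched together from $\myinputt_{1}(\cdot,J)$ and $\myinputt_{2}(\cdot-T,0)$. For any point $t\ne T$ in the interior of either constituent interval, a small enough neighborhood stays inside that single interval, and essential boundedness follows from the corresponding hybrid input. The \textbf{main obstacle} is the junction point $t=T$, where both pieces meet. Here I would invoke local essential boundedness of $\myinputt_{1}(\cdot,J)$ at $T$ to obtain a neighborhood $\mathcal{U}_{1}$ of $T$ and a constant $c_{1}\ge 0$ with $|\myinputt_{1}(t,J)|\le c_{1}$ for almost all $t\in\mathcal{U}_{1}\cap I^{J}_{\myinputt_{1}}$, and local essential boundedness of $\myinputt_{2}(\cdot,0)$ at $0$ to obtain a neighborhood $\mathcal{V}$ of $0$ and a constant $c_{2}\ge 0$ with $|\myinputt_{2}(s,0)|\le c_{2}$ for almost all $s\in\mathcal{V}\cap I^{0}_{\myinputt_{2}}$.

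To finish the junction case, I would take the open neighborhood $\mathcal{U}:=\mathcal{U}_{1}\cap(\mathcal{V}+\{T\})$ of $T$ and the constant $c:=\max\{c_{1},c_{2}\}$. For almost all $t\in\mathcal{U}\cap I^{J}_{\myinputt}$ with $t<T$ we have $\myinputt(t,J)=\myinputt_{1}(t,J)$ and hence $|\myinputt(t,J)|\le c_{1}\le c$, while for almost all $t\in\mathcal{U}\cap I^{J}_{\myinputt}$ with $t>T$ we have $t-T\in\mathcal{V}\cap I^{0}_{\myinputt_{2}}$ and hence $|\myinputt(t,J)|=|\myinputt_{2}(t-T,0)|\le c_{2}\le c$; the single point $t=T$ is irrelevant to an ``almost all'' claim. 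Thus $\myinputt(\cdot,J)$ is essentially bounded on $\mathcal{U}\cap I^{J}_{\myinputt}$, which combined with the interior points establishes local essential boundedness on $I^{J}_{\myinputt}$. Assembling the three cases proves that $t\mapsto\myinputt(t,j)$ is locally essentially bounded on $I^{j}_{\myinputt}$ for every $j\in\nnumbers$.
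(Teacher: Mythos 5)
Your proof is correct and follows essentially the same route as the paper's: the same three-case split on $j<J$, $j>J$, $j=J$ with boundedness inherited from $\myinputt_{1}$ and $\myinputt_{2}$ in the first two cases. If anything, your handling of the junction point $t=T$ in the case $j=J$ (intersecting a neighborhood $\mathcal{U}_{1}$ from $\myinputt_{1}$ with a translated neighborhood $\mathcal{V}+\{T\}$ from $\myinputt_{2}$ and taking $c=\max\{c_{1},c_{2}\}$) is more careful than the paper's, which treats points of $I^{J}_{\myinputt_{1}}$ and of $I^{0}_{\myinputt_{2}}+\{T\}$ separately and implicitly claims a bound on all of $\mathcal{U}_{1}\cap I^{J}_{\myinputt}$ without explicitly combining the two essential bounds across $T$.
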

\begin{proof}
	We show that for all $j\in \mathbb{N}$, the function $t\mapsto \myinputt(t, j)$ satisfies conditions in Definition \ref{definition:locallybounded} on the interval $I^{j}_{\myinputt}:=\{t:(t, j)\in \dom \myinputt\}$. 
	
	According to Definition \ref{definition:concatenation}, $\dom \psi = \dom \myinputt_{1} \cup (\dom \myinputt_{2} + \{(T, J)\})$ where $(T, J) = \max \dom \myinputt_{1}$. For each $j\in \nnumbers$, there could be three cases:
		1) $j < J$;
		2) $j > J$;
		3) $j = J$.
	Then we show in each case,  $t\mapsto \myinputt(t,j)$ is locally essentially bounded.
	\begin{enumerate}[label=\Roman*]
		\item Consider the case when $j < J$. Since $\myinputt_{1}$ is a hybrid input, the functions $t\mapsto \myinputt_{1}(t, j)$ is locally essentially bounded on the interval $I^{j}_{\myinputt_{1}}:=\{t:(t, j)\in \dom \myinputt_{1}\}$.  Definition \ref{definition:concatenation} implies that $\myinputt(t, j) = \myinputt_{1}(t, j)$ holds for all $(t, j)\in \dom \myinputt_{1}$ such that $j < J$. Then $t\mapsto \myinputt(t, j) = \myinputt_{1}(t, j) $ is locally essentially bounded on the interval $I^{j}_{\psi} = I^{j}_{\psi_{1}}$ if $j < J$. 
		\item Consider the case when $j > J$. Since $\myinputt_{2}$ is a hybrid input, then $t\mapsto \myinputt_{2}(t, j)$ is locally essentially bounded on the interval $I^{j}_{\myinputt_{2}}$. 
		Definition \ref{definition:concatenation} implies that  $\myinputt(t, j) = \myinputt_{2}(t - T, j - J)$ for all $(t, j)\in \dom \myinputt_{2} + \{(T, J)\}$ such that $j > J$. Then $t\mapsto \myinputt(t, j) = \myinputt_{2}(t - T, j - J)$ is locally essentially bounded on the interval $I^{j}_{\myinputt} = I^{j - J}_{\myinputt_{2}} + \{T\}$ if $j > J$.
		\item Consider the case when $j = J$. According to Definition \ref{definition:concatenation}, 
		$$
		\myinputt(t, J) = \left\{
		\begin{aligned}
		&\myinputt_{1}(t, J) &t\in I^{J}_{\myinputt_{1}}\backslash \{T\}\\
		&\myinputt_{2}(t - T,0) &t \in I^{0}_{\myinputt_{2}} + \{T\}.\\
		\end{aligned}
		\right.
		$$
		
		Note that $t\mapsto \myinputt_{1}(t, j)$ is locally bounded on the interval $I^{J}_{\myinputt_{1}}$ and $t\mapsto \myinputt_{2}(t, 0)$ is Lebesgue measurable on the interval $I_{\myinputt_{2}}^{0}$. According to Definition \ref{definition:locallybounded},
		\begin{enumerate}
			\item for any $r\in I_{\myinputt_{1}}^{J}$, there exists a neighborhood $\mathcal{U}_{1}$ of $r$ such that there exists $c_{1}\geq 0$ such that $|\myinputt_{1}(r)| \leq c_{1}$ for almost all $r\in \mathcal{U}_{1}\cap I_{\myinputt_{1}}^{J}$;
			\item for any $r\in I_{\myinputt_{2}}^{0}$, there exists a neighborhood $\mathcal{U}_{2}$ of $r$ such that there exists $c_{2}\geq 0$ such that $|\myinputt_{2}(r)| \leq c_{2}$ for almost all $r\in \mathcal{U}_{1}\cap I_{\myinputt_{2}}^{0}$.
		\end{enumerate} 
		Then, for any $r\in I_{\myinputt}^{J} = I_{\myinputt_{1}}^{J} \cup I_{\myinputt_{2}}^{0} + \{T\}$, consider the following two cases: 1) $r \in I_{\myinputt_{1}}^{J}$ and 2) $r\in I_{\myinputt_{2}}^{0} + \{T\}$. 
		\begin{enumerate}
			\item If $r \in I_{\myinputt_{1}}^{J}$, then there exists a neighborhood $\mathcal{U}_{1}$ of $r$ such that there exists $c_{1}\geq 0$ such that $|\myinputt(r)| \leq c_{1}$ for almost all $r\in \mathcal{U}_{1}\cap I_{\myinputt}^{J}$.
			\item If $r\in I_{\myinputt_{2}}^{0} + \{T\}$, then there exists a neighborhood $\mathcal{U}_{2}$ of $r$ such that there exists $c_{2}\geq 0$ such that $|\myinputt_{2}(r)| \leq c_{2}$ for almost all $r\in \mathcal{U}_{1}\cap I_{\myinputt_{2}}^{0} + \{T\}$.
		\end{enumerate}
		Therefore, $t\mapsto \myinputt(t, J)$ is locally essentially bounded on the interval $I_{\myinputt}^{J}$.
	\end{enumerate}
	Therefore, $t\mapsto \myinputt(t, j)$ is locally bounded on the interval $I_{\myinputt}^{j}$ for all $j\in \mathbb{N}$. \ifbool{conf}{\qed}{}
\end{proof}
\begin{lemma}\label{lemma:concatenation_hybridinput}
	Given two hybrid inputs, denoted $\myinputt_{1}$ and $\myinputt_{2}$, then their concatenation $\myinputt = \myinputt_{1}|\myinputt_{2}$ is also a hybrid input.
\end{lemma}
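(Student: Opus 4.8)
The plan is to read the claim off directly from Definition \ref{definition:hybridinput}, which declares that a function defined on a hybrid time domain is a hybrid input precisely when, for every $j\in\nnumbers$, its $j$-th time slice $t\mapsto\myinputt(t,j)$ is both Lebesgue measurable and locally essentially bounded on the interval $I^{j}_{\myinputt}:=\{t:(t,j)\in\dom\myinputt\}$. The proof therefore decomposes into three independent verifications: that the concatenated domain is a hybrid time domain, that each time slice is measurable, and that each time slice is locally essentially bounded.

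First I would invoke Lemma \ref{lemma:concatenationHTD} to conclude that $\dom\myinputt = \dom\myinputt_{1}\cup(\dom\myinputt_{2}+\{(T,J)\})$, with $(T,J)=\max\dom\myinputt_{1}$, is itself a hybrid time domain; this uses only that $\dom\myinputt_{1}$ and $\dom\myinputt_{2}$ are hybrid time domains and that $\myinputt_{1}$ is compact, so that the concatenation of Definition \ref{definition:concatenation} is well defined. Next I would apply Lemma \ref{lemma:concatenation_input_Lebesguemeasurable} to obtain Lebesgue measurability of $t\mapsto\myinputt(t,j)$ on $I^{j}_{\myinputt}$ for every $j$, and then Lemma \ref{lemma:concatenation_input_locallyessentiallybounded} to obtain local essential boundedness of the same slices. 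With all three properties established, the three clauses of Definition \ref{definition:hybridinput} are satisfied verbatim, and the conclusion follows.

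The main point is that there is essentially no residual obstacle at this level: the two genuinely delicate cases—the measurability and the boundedness of the slice at the junction index $j=J$, where $\myinputt$ switches from $\myinputt_{1}$ to the shifted $\myinputt_{2}$—have already been absorbed into the case analyses ($j<J$, $j>J$, $j=J$) of the two preceding lemmas. Hence this final lemma is purely an assembly step, and I would present it as a short citation of Lemmas \ref{lemma:concatenationHTD}, \ref{lemma:concatenation_input_Lebesguemeasurable}, and \ref{lemma:concatenation_input_locallyessentiallybounded} checked against the three requirements in Definition \ref{definition:hybridinput}.
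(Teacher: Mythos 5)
Your proposal is correct and takes essentially the same route as the paper: the paper's proof likewise verifies the three clauses of Definition \ref{definition:hybridinput} by citing Lemma \ref{lemma:concatenationHTD} for the hybrid time domain, Lemma \ref{lemma:concatenation_input_Lebesguemeasurable} for measurability of each slice, and Lemma \ref{lemma:concatenation_input_locallyessentiallybounded} for local essential boundedness. Your added remark that compactness of $\myinputt_{1}$ is what makes the concatenation well defined is a fair observation the paper leaves implicit, but it does not change the argument.
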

\begin{proof}
	We show that $\myinputt = \myinputt_{1}|\myinputt_{2}$ satisfies the conditions in Definition \ref{definition:hybridinput} as follows.
	\begin{enumerate}
		\item We show that $\dom \myinputt$ is hybrid time domain;
		\item We show that for each $j\in \nnumbers$, $t\to \myinputt(t, j)$ is Lebesgue measurable on the interval $I^{j}_{\myinputt}:=\{t:(t, j)\in \dom \myinputt\}$.
		\item We show that for each $j\in \nnumbers$, $t\to \myinputt(t, j)$ is locally essentially bounded on the interval $I^{j}_{\myinputt}:=\{t:(t, j)\in \dom \myinputt\}$. 
	\end{enumerate}

	Note that $\myinputt_{1}$ and $\myinputt_{2}$ are hybrid inputs. Therefore, Lemma \ref{lemma:concatenationHTD} guarantees that $\dom \myinputt$ is hybrid time domain. Lemma \ref{lemma:concatenation_input_Lebesguemeasurable} and Lemma \ref{lemma:concatenation_input_locallyessentiallybounded} show that $\myinputt$ is Lebesgue measurable and locally essentially bounded on the interval $I^{j}_{\myinputt}:=\{t:(t, j)\in \dom \myinputt\}$, respectively. Therefore, $\myinputt$ is a hybrid input.\ifbool{conf}{\qed}{}
\end{proof}
\begin{definition} (Absolutely continuous function and locally absolutely continuous function \cite[Definition A.20]{sanfelice2021hybrid})\label{definition:locallycontinuous}
	A function $s: [a, b]\to \myreals[n]$ is said to be \emph{absolutely continuous} if for each $\varepsilon > 0$ there exists $\delta > 0$ such that for each countable collection of disjoint subintervals $[a_{k}, b_{k}]$ of $[a, b]$ such that $\sum_{k}(b_{k} - a_{k}) \leq \delta$, it follows that $\sum_{k}|s(b_{k}) - s(a_{k})| \leq \varepsilon$. A function is said to be locally absolutely continuous if $r\mapsto s(r)$ is absolutely continuous on each compact subinterval of $\mypreals$.
\end{definition}
\begin{lemma}\label{lemma:concatenation_state_locallyabsolutelycontinuous}
	Given two hybrid arcs, denoted $\mystatet_{1}$ and $\mystatet_{2}$, such that $\mystatet_{1}(T, J) = \mystatet_{2}(0,0)$, where $(T, J) = \max \dom \mystatet_{1}$, their concatenation $\mystatet = \mystatet_{1}|\mystatet_{2}$, is such that for each $j\in \nnumbers$, $t\mapsto \mystatet(t,j)$ is locally absolutely continuous on the interval $I^{j}_{\mystatet}:=\{t:(t, j)\in \dom \mystatet\}$.
\end{lemma}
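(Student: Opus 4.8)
The plan is to mirror the three-case structure already used in the proofs of Lemma \ref{lemma:concatenation_input_Lebesguemeasurable} and Lemma \ref{lemma:concatenation_input_locallyessentiallybounded}. First, Lemma \ref{lemma:concatenationHTD} guarantees that $\dom \mystatet$ is a hybrid time domain, so each $I^{j}_{\mystatet}$ is a genuine interval and the claim is well-posed. Writing $(T, J) = \max \dom \mystatet_{1}$, I would split the verification of local absolute continuity of $t \mapsto \mystatet(t, j)$ according to whether $j < J$, $j > J$, or $j = J$, and apply Definition \ref{definition:locallycontinuous} on an arbitrary compact subinterval.

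The two extreme cases are immediate. For $j < J$, Definition \ref{definition:concatenation} gives $\mystatet(t, j) = \mystatet_{1}(t, j)$ on $I^{j}_{\mystatet} = I^{j}_{\mystatet_{1}}$, so local absolute continuity is inherited directly from the hybrid-arc property of $\mystatet_{1}$ in Definition \ref{definition:hybridarc}. For $j > J$, Definition \ref{definition:concatenation} gives $\mystatet(t, j) = \mystatet_{2}(t - T, j - J)$ on $I^{j}_{\mystatet} = I^{j - J}_{\mystatet_{2}} + \{T\}$; since a constant time shift carries compact subintervals to compact subintervals and leaves every increment $|\mystatet(b_{k}, j) - \mystatet(a_{k}, j)|$ unchanged, absolute continuity is preserved and the property is inherited from $\mystatet_{2}$.

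The substantive case is $j = J$, where Definition \ref{definition:concatenation} yields
$$
\mystatet(t, J) = \left\{
\begin{aligned}
&\mystatet_{1}(t, J) &t\in I^{J}_{\mystatet_{1}}\backslash \{T\}\\
&\mystatet_{2}(t - T,0) &t \in I^{0}_{\mystatet_{2}} + \{T\}.
\end{aligned}
\right.
$$
Here I would use the hypothesis $\mystatet_{1}(T, J) = \mystatet_{2}(0,0)$, which forces the two branches to agree at $t = T$, so that $\mystatet(\cdot, J)$ is single-valued and continuous across the seam. Fixing a compact subinterval $[a, b] \subset I^{J}_{\mystatet}$, the cases $b \leq T$ and $a \geq T$ reduce to a single branch and are already handled; the only new situation is $a < T < b$, where I split $[a, b] = [a, T] \cup [T, b]$ and note that $\mystatet(\cdot, J)$ restricts to the absolutely continuous functions $\mystatet_{1}(\cdot, J)$ and $\mystatet_{2}(\cdot - T, 0)$ on the two pieces. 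Given $\varepsilon > 0$, I take the $\delta_{1}, \delta_{2}$ furnished by Definition \ref{definition:locallycontinuous} for the two pieces at level $\varepsilon/2$, set $\delta = \min\{\delta_{1}, \delta_{2}\}$, and split any covering subinterval straddling $T$ at the point $T$, using $|\mystatet(b_{k}, J) - \mystatet(a_{k}, J)| \leq |\mystatet(b_{k}, J) - \mystatet(T, J)| + |\mystatet(T, J) - \mystatet(a_{k}, J)|$ to bound the total-variation sum by $\varepsilon$.

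The hard part is exactly this gluing at $t = T$: the concatenation must stay absolutely continuous across the seam despite being defined by two different formulas there. Everything rests on the matching condition $\mystatet_{1}(T, J) = \mystatet_{2}(0, 0)$, which is precisely the hypothesis of the lemma and which makes the split-at-$T$ estimate go through — the same role that item 2 of Proposition \ref{theorem:concatenationsolution} plays in the corresponding statement about solution pairs.
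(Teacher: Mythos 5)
Your proof is correct and takes essentially the same route as the paper's: the identical three-case split on $j < J$, $j > J$, and $j = J$, with the seam at $t = T$ in the $j = J$ case glued together by the matching hypothesis $\mystatet_{1}(T, J) = \mystatet_{2}(0,0)$, which kills the cross term in the total-variation sum. If anything, your handling of the seam is slightly tighter than the paper's: taking $\delta = \min\{\delta_{1}, \delta_{2}\}$ and explicitly splitting any subinterval straddling $T$ via the triangle inequality is cleaner than the paper's choice $\delta = \delta_{1} + \delta_{2}$, which does not by itself guarantee each piece's length sum stays below its own $\delta_{i}$.
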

\begin{proof}
	We show that for all $j\in \mathbb{N}$, the function $t\mapsto \mystatet(t, j)$ satisfies the conditions in Definition \ref{definition:locallycontinuous}.
	
	According to Definition \ref{definition:concatenation}, $\dom \mystatet = \dom \mystatet_{1} \cup (\dom \mystatet_{2} +\{(T, J)\})$ where $(T, J) = \max \dom \mystatet_{1}$.
	Therefore, 
	$$
		I^{j}_{\mystatet} = \left\{ \begin{aligned}
		&I^{j}_{\mystatet_{1}} &\text{ if }j< J\\
		&I^{J}_{\mystatet_{1}} \cup (I^{0}_{\mystatet_{2}} +  \{T\}) &\text{ if }j = J\\
		&I^{j - J}_{\mystatet_{2}} + T&\text{ if }j > J
		\end{aligned}\right.
	$$
	
	The following cases are considered to prove that $\mystatet$ is absolutely continuous on each interval $I^{j}_{\mystatet}$.
	\begin{enumerate}[label=\Roman*]
		\item Consider the case when $j < J$. Since $\mystatet_{1}$ is a hybrid arc, according to Definition \ref{definition:hybridarc}, $t\mapsto \mystatet_{1}(t, j)$ is locally absolutely continuous on the interval $I_{\mystatet_{1}}^{j} = I_{\mystatet}^{j}$ such that $j < J$. Note that according to Definition \ref{definition:concatenation}, $\mystatet(t, j) = \mystatet_{1}(t, j)$ for all $t\in I_{\mystatet_{1}}^{j} = I^{j}_{\mystatet}$ where $j < J$. Therefore, $t\mapsto \mystatet(t, j) =\mystatet_{1}(t, j)$ is locally absolutely continuous on the interval $I_{\mystatet}^{j}$ such that $j < J$.
		\item Consider the case when $j> J$. Since $\mystatet_{2}$ is a hybrid arc, according to Definition \ref{definition:hybridarc}, then $t\mapsto \mystatet_{2}(t - T, j - J)$ is locally absolutely continuous on the interval $I_{\mystatet_{2}}^{j - J} + \{T\}= I^{j}_{\mystatet}$ where $j > J$. Note that according to Definition \ref{definition:concatenation}, $\mystatet(t, j) = \mystatet_{2}(t - T, j - J)$ for all $t\in I_{\mystatet_{2}}^{j - J} + \{T\} = I^{j}_{\mystatet}$ where $j > J$. Therefore, $t\mapsto  \mystatet_{2}(t - T, j - J) = \mystatet(t, j)$ is locally absolutely continuous on the interval $I_{\mystatet}^{j} = I_{\mystatet_{2}}^{j - J} + \{T\}$ if $j > J$.
		\item Consider the case when $j = J$ and $I_{\mystatet}^{j} = I_{\mystatet_{1}}^{J}\cup (I_{\mystatet_{2}}^{0} + \{T\})$. Since $t\mapsto \mystatet(t, J)$ is locally absolutely continuous on the intervals $I_{\mystatet_{1}}^{J}$ and $(I_{\mystatet_{2}}^{0} + \{T\})$,  respectively, then
		\begin{enumerate}
			\item for each $\varepsilon_{1} > 0$ there exists $\delta_{1} > 0$ such that for each countable collection of disjoint subintervals $[a^{1}_{k_{1}}, b^{1}_{k_{1}}]$ of $I_{\mystatet_{1}}^{J}$ such that $\sum_{k_{1}}(b^{1}_{k_{1}} - a^{1}_{k_{1}}) \leq \delta_{1}$, it follows that $\sum_{k_{1}}|\mystatet(b^{1}_{k_{1}}) - \mystatet(a^{1}_{k_{1}})| \leq \varepsilon_{1}$;
			\item for each $\varepsilon_{2} > 0$ there exists $\delta_{2} > 0$ such that for each countable collection of disjoint subintervals $[a^{2}_{k_{2}}, b^{2}_{k_{2}}]$ of $I_{\mystatet_{2}}^{0} + \{T\}$ such that $\sum_{k_{2}}(b^{2}_{k_{2}} - a^{2}_{k_{2}}) \leq \delta_{2}$, it follows that $\sum_{k_{2}}|\mystatet(b^{2}_{k_{2}}) - \mystatet(a^{2}_{k_{2}})| \leq \varepsilon_{2}$.
		\end{enumerate}
		Then for each $\varepsilon > 0$, assume that $\delta > 0$ exists such that for each countable collection of disjoint subintervals $[a_{k}, b_{k}]$ of $I_{\mystatet_{1}}^{J}\cup (I_{\mystatet_{2}}^{0} + \{T\})$ such that $\sum_{k}(b_{k} - a_{k}) \leq \delta$, it follows that $\sum_{k}|\mystatet(b^{2}_{k}) - \mystatet(a^{2}_{k})| \leq \varepsilon$. 
		
		Partition the collection of $[a_{k}, b_{k}]$ as follows
		\begin{enumerate}
			\item $\{[a^{1}_{k_{1}}, b^{1}_{k_{1}}]\}_{k_{1}} \leftarrow \{[a_{k}, b_{k}]\}_{k}\cap I_{\mystatet_{1}}^{J}$;
			\item $\{[a^{2}_{k_{2}}, b^{2}_{k_{2}}]\}_{k_{2}} \leftarrow \{[a_{k}, b_{k}]\}_{k}\cap (I_{\mystatet_{2}}^{0} + \{T\})$.
		\end{enumerate}
	Then compute $\varepsilon_{1} = \sum_{k_{1}}|\mystatet(b^{1}_{k_{1}}) - \mystatet(a^{1}_{k_{1}})|$ and $\varepsilon_{2} = \sum_{k_{2}}|\mystatet(b^{2}_{k_{2}}) - \mystatet(a^{2}_{k_{2}})|$. Note that 
	$$\begin{aligned}
	&\sum_{k}|\mystatet(b^{2}_{k}) - \mystatet(a^{2}_{k})| \\
	&=  \sum_{k_{1}}|\mystatet(b^{1}_{k_{1}}) - \mystatet(a^{1}_{k_{1}})| 
	+ \sum_{k_{2}}|\mystatet(b^{2}_{k_{2}}) - \mystatet(a^{2}_{k_{2}})|\\
	& + |\phi_{2}(0, 0) - \phi_{1}(T, J)|, \\
	\end{aligned}$$
	and it is assumed that $\phi_{2}(0, 0) = \phi_{1}(T, J)$. Then
	$$
	\sum_{k}|\mystatet(b^{2}_{k}) - \mystatet(a^{2}_{k})| = \varepsilon_{1} + \varepsilon_{2} + 0.
	$$
	For $\varepsilon_{1}$ and $\varepsilon_{2}$, there exists $\delta_{1}$ and $\delta_{2}$ such that $\sum_{k_{1}}(b^{1}_{k_{1}} - a^{1}_{k_{1}}) \leq \delta_{1}$ and $\sum_{k_{2}}(b^{2}_{k_{2}} - a^{2}_{k_{2}}) \leq \delta_{2}$ hold, respectively. Then we can find $\delta = \delta_{1} + \delta_{2}$ because 
	
	$\sum_{k}(b_{k} - a_{k}) =  \sum_{k_{1}}(b^{1}_{k_{1}} - a^{1}_{k_{1}}) + \sum_{k_{2}}(b^{2}_{k_{2}} - a^{2}_{k_{2}}) \leq \delta_{1} + \delta_{2} = \delta.$
	\end{enumerate}
	Therefore, the function $t\mapsto \mystatet(t, j)$ is locally absolutely continuous on the interval $I_{\mystatet}^{j}$ for all $j\in \mathbb{N}$.\ifbool{conf}{\qed}{}
\end{proof}
\begin{lemma}\label{lemma:concatenation_hybridarc}
	Given two hybrid arcs, denoted $\mystatet_{1}$ and $\mystatet_{2}$, such that $\mystatet_{1}(T, J) = \mystatet_{2}(0,0)$, where $(T, J) = \max \dom \mystatet_{1}$, then their concatenation $\mystatet = \mystatet_{1}|\mystatet_{2}$ is also a hybrid arc.
\end{lemma}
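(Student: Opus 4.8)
The plan is to verify directly the two defining conditions of a hybrid arc in Definition \ref{definition:hybridarc} for the concatenation $\mystatet = \mystatet_{1}|\mystatet_{2}$, assembling the two supporting lemmas in exactly the same way that Lemma \ref{lemma:concatenation_hybridinput} assembles the hybrid-input properties. That is, I would show in turn that (i) $\dom \mystatet$ is a hybrid time domain, and (ii) for each $j\in\nnumbers$, the map $t\mapsto \mystatet(t,j)$ is locally absolutely continuous on $I^{j}_{\mystatet} := \{t : (t,j)\in \dom \mystatet\}$.

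First I would observe that, since $\mystatet_{1}$ and $\mystatet_{2}$ are hybrid arcs, Definition \ref{definition:hybridarc} guarantees that $\dom \mystatet_{1}$ and $\dom \mystatet_{2}$ are hybrid time domains. Lemma \ref{lemma:concatenationHTD}, which applies to any two functions defined on hybrid time domains, then immediately yields that the domain of their concatenation $\dom \mystatet = \dom \mystatet_{1}\cup(\dom \mystatet_{2} + \{(T,J)\})$, with $(T,J)=\max\dom\mystatet_{1}$, is itself a hybrid time domain. This settles condition (i).

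Next I would invoke the hypothesis $\mystatet_{1}(T,J) = \mystatet_{2}(0,0)$. This is precisely the hypothesis required by Lemma \ref{lemma:concatenation_state_locallyabsolutelycontinuous}, which concludes that for each $j\in\nnumbers$ the map $t\mapsto \mystatet(t,j)$ is locally absolutely continuous on the interval $I^{j}_{\mystatet}$. This settles condition (ii). Since both conditions of Definition \ref{definition:hybridarc} hold, $\mystatet$ is a hybrid arc, completing the argument.

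I expect no genuine obstacle in this final step, as all the analytic substance has already been discharged in Lemmas \ref{lemma:concatenationHTD} and \ref{lemma:concatenation_state_locallyabsolutelycontinuous}; the proof is a short bookkeeping assembly. The one point I would emphasize is that, unlike the hybrid-input case in Lemma \ref{lemma:concatenation_hybridinput} where no matching condition is needed, here the hypothesis $\mystatet_{1}(T,J) = \mystatet_{2}(0,0)$ is essential: it is exactly what forces continuity of $\mystatet$ across the splice hybrid time $(T,J)$ (the term $|\phi_{2}(0,0) - \phi_{1}(T,J)|$ vanishing in the proof of Lemma \ref{lemma:concatenation_state_locallyabsolutelycontinuous}), and it cannot be omitted.
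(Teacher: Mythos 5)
Your proposal is correct and follows essentially the same route as the paper's proof: both verify the two conditions of the hybrid-arc definition by invoking Lemma \ref{lemma:concatenationHTD} for the hybrid time domain and Lemma \ref{lemma:concatenation_state_locallyabsolutelycontinuous} for local absolute continuity on each interval $I^{j}_{\mystatet}$. Your added remark that the matching hypothesis $\mystatet_{1}(T,J)=\mystatet_{2}(0,0)$ is exactly what Lemma \ref{lemma:concatenation_state_locallyabsolutelycontinuous} needs (and is absent from the input case) is accurate and a nice clarification, but the argument itself is the paper's.
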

\begin{proof}
	We show that $\mystatet = \mystatet_{1}|\mystatet_{2}$ satisfies the conditions in Definition \ref{definition:hybridarc} as follows.
	\begin{enumerate}
		\item We show that $\dom \mystatet$ is hybrid time domain;
		\item We show that, for each $j\in \nnumbers$, $t\mapsto \mystatet(t,j)$ is locally absolutely continuous on the interval $I^{j}_{\mystatet}:=\{t:(t, j)\in \dom \mystatet\}$. 
	\end{enumerate}
	
	Note that $\mystatet_{1}$ and $\mystatet_{2}$ are functions defined on hybrid time domains. Therefore, Lemma \ref{lemma:concatenationHTD} guarantees that $\dom \mystatet$ is hybrid time domain. Lemma \ref{lemma:concatenation_state_locallyabsolutelycontinuous} guarantees that for each $j\in \nnumbers$, $t\mapsto \mystatet(t,j)$ is locally absolutely continuous on the interval $I^{j}_{\mystatet}:=\{t:(t, j)\in \dom \mystatet\}$. Therefore, $\mystatet$ is a hybrid arc. \ifbool{conf}{\qed}{}
\end{proof}

\begin{lemma}\label{lemma:concatenation_solution_c}
	Given two solution pairs, denoted $\psi_{1}$ and $\psi_{2}$ such that $\psi_{2}(0, 0)\in C$ if both $I_{\psi_{1}}^{J}$ and $I_{\psi_{2}}^{0}$ have nonempty interior, where, for each $j\in \{0, J\}$, $I_{\psi}^{j} = \{t: (t, j)\in \dom \psi\}$ and $(T, J) = \max \dom \psi_{1}$, then their concatenation $\psi = \psi_{1}|\psi_{2}$ satisfies that for each $j\in \mathbb{N}$ such that $I^{j}_{\psi}$ has nonempty interior $\interior(I^{j}_{\psi})$, $\psi = (\mystatet, \myinputt)$ satisfies
	$$
	(\mystatet(t, j),\myinputt(t, j))\in C
	$$ for all $t\in \interior I^{j}_{\mystatet}$.
\end{lemma}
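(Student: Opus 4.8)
The plan is to verify the required flow membership $\psi(t,j)\in C$ (equivalently $(\mystatet(t,j),\myinputt(t,j))\in C$) one hybrid-time level $j$ at a time, using the explicit description of the flow intervals of the concatenation. Recall from Definition \ref{definition:concatenation} that $\dom \psi = \dom \psi_{1}\cup(\dom \psi_{2}+\{(T,J)\})$ with $(T,J)=\max\dom\psi_{1}$; exactly as computed in the proof of Lemma \ref{lemma:concatenation_state_locallyabsolutelycontinuous}, this gives
\[
I^{j}_{\psi} = \begin{cases}
I^{j}_{\psi_{1}} & \text{if } j < J,\\
I^{J}_{\psi_{1}} \cup (I^{0}_{\psi_{2}} + \{T\}) & \text{if } j = J,\\
I^{j-J}_{\psi_{2}} + \{T\} & \text{if } j > J.
\end{cases}
\]
The first step is to dispose of the two ``pure'' levels. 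For $j<J$, the concatenation sets $\psi(t,j)=\psi_{1}(t,j)$ on all of $I^{j}_{\psi}=I^{j}_{\psi_{1}}$, and since $\interior I^{j}_{\psi}=\interior I^{j}_{\psi_{1}}$, the membership $\psi(t,j)\in C$ for $t\in\interior I^{j}_{\psi}$ is inherited directly from item 2 of Definition \ref{definition:solution} applied to the solution pair $\psi_{1}$. For $j>J$ the identical argument applies after the time shift by $T$: $\psi(t,j)=\psi_{2}(t-T,j-J)$, $\interior I^{j}_{\psi}=\interior I^{j-J}_{\psi_{2}}+\{T\}$, and membership follows from $\psi_{2}$ being a solution pair.

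The substance lies in the junction level $j=J$, where $I^{J}_{\psi}=I^{J}_{\psi_{1}}\cup(I^{0}_{\psi_{2}}+\{T\})$ glues the final flow interval of $\psi_{1}$ to the shifted initial flow interval of $\psi_{2}$ at the common time $T$. Here I would partition $\interior I^{J}_{\psi}$ into the three pieces $t<T$, $t=T$, and $t>T$. On $t<T$ we have $t\in\interior I^{J}_{\psi_{1}}$ and $\psi(t,J)=\psi_{1}(t,J)\in C$; on $t>T$ we have $t-T\in\interior I^{0}_{\psi_{2}}$ and $\psi(t,J)=\psi_{2}(t-T,0)\in C$; both follow from the solution property of $\psi_{1}$ and $\psi_{2}$, respectively.

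The only delicate point, and the crux of the lemma, is the junction time $t=T$. By the value assignment in Definition \ref{definition:concatenation}, the point $(T,J)$ is taken from the second pair, so $\psi(T,J)=\psi_{2}(0,0)$. The key observation is that $T$ belongs to $\interior I^{J}_{\psi}$ \emph{if and only if} both $I^{J}_{\psi_{1}}$ and $I^{0}_{\psi_{2}}$ have nonempty interior: writing $I^{J}_{\psi_{1}}=[a,T]$ and $I^{0}_{\psi_{2}}+\{T\}=[T,b]$, the union is $[a,b]$, and $T\in(a,b)$ precisely when $a<T$ and $b>T$; if either interval degenerates to $\{T\}$, then $T$ is an endpoint of $I^{J}_{\psi}$ and the condition at $T$ is vacuous. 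Thus the one and only case in which $T$ is an interior point is exactly the case that triggers the standing hypothesis $\psi_{2}(0,0)\in C$, which then yields $\psi(T,J)=\psi_{2}(0,0)\in C$. This is the sole place the hypothesis is used; the main (if mild) obstacle is the bookkeeping needed to confirm that $T$ falls in $\interior I^{J}_{\psi}$ precisely under that hypothesis. Combining the three levels establishes the claim for every $j$ with $\interior I^{j}_{\psi}\neq\emptyset$.
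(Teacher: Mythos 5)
Your proof is correct and follows essentially the same route as the paper's own proof: the identical three-way case split on $j<J$, $j=J$, $j>J$ induced by the decomposition of $I^{j}_{\psi}$, with the pure levels inherited from the solution properties of $\psi_{1}$ and $\psi_{2}$, and the hypothesis $\psi_{2}(0,0)\in C$ invoked exactly at the junction time $t=T$. If anything, your explicit observation that $T\in \interior I^{J}_{\psi}$ if and only if both $I^{J}_{\psi_{1}}$ and $I^{0}_{\psi_{2}}$ have nonempty interior is stated more sharply than in the paper, which treats that point tersely.
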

\begin{proof}
	We show that for all $j\in \mathbb{N}$ such that $I^{j}_{\psi}$ has  nonempty interior, $\psi(t, j)\in C$ for all $t\in \interior I^{j}_{\psi}$.
	
	According to Definition \ref{definition:concatenation}, $\dom \psi = \dom \psi_{1} \cup (\dom \psi_{2} +\{(T, J)\})$ where $(T, J) = \max \dom \psi_{1}$.
	Therefore, 
	$$
	I^{j}_{\psi} = \left\{ \begin{aligned}
	&I^{j}_{\psi_{1}} &\text{ if }j< J\\
	&I^{J}_{\psi_{1}} \cup (I^{0}_{\psi_{2}} +  \{T\}) &\text{ if }j = J\\
	&I^{j - J}_{\psi_{2}} + T&\text{ if }j > J
	\end{aligned}\right.
	$$
	
	\begin{enumerate}[label=\Roman*]
		\item Consider the case of all $j< J$ such that $I^{j}_{\psi_{1}}$ has nonempty interior. Since $\psi_{1}$ is a solution pair to $\mathcal{H}$, according to Definition \ref{definition:solution}, 
		$
		\psi_{1}(t, j)\in C
		$ for all 
		$t\in \interior I^{j}_{\psi_{1}}.
		$ Because $\psi(t, j) = \psi_{1}(t, j)$ for all $t \in \interior I^{j}_{\psi_{1}}$, then $\psi(t, j) = \psi_{1}(t, j)\in C$ for all $t\in \interior I^{j}_{\psi} = I^{j}_{\psi_{1}}$.
		\item Consider the case of all $j>J$ such that $I^{j - J}_{\psi_{2}} + \{T\}$ has nonempty interior. Since $\psi_{2}$ is a solution pair  to $\mathcal{H}$, according to Definition \ref{definition:solution}, then 
		$
		\psi_{2}(t, j)\in C
		$ for all 
		$
		t\in \interior I^{j}_{\psi_{2}}.
		$ 
		Because $\psi(t, j) = \psi_{2}(t - T, j - J)$ for all $t\in I_{\psi_{2}}^{j - J} + \{T\}$,  then 
		$
		\psi(t, j) = \psi_{2}(t - T, j - J)\in C
		$
		for all $t\in \interior I_{\psi}^{j} = I^{j - J}_{\psi_{2}} + \{T\}$.
		\item Consider the case when $j = J$ and $I_{\psi}^{J} = I_{\psi_{1}}^{J}\cup (I_{\psi_{2}}^{0} + \{T\})$ has nonempty interior. If $I_{\psi_{1}}^{J}$ or  $I_{\psi_{2}}^{0} + \{T\}$ has nonempty interior while the other has empty interior, then it is the same as above two items to show that $\psi(t, j) \in C$ holds for all $t\in \interior I_{\psi}^{J}$.
		
		If both $I_{\psi_{1}}^{J}$ and $(I_{\psi_{2}}^{0} + \{T\})$ has nonempty interior, since we have had 
		$$
		\psi(t, j) = \psi_{1}(t, j)\in C \quad \forall t\in \interior I_{\psi_{1}}^{J}
		$$ and 
		$$
		\begin{aligned}
		\psi(t, j) = \psi_{2}(t - T, j - J)\in C\\ \forall t\in \interior I_{\psi_{2}}^{0} + \{T\},
		\end{aligned}
		$$ then $\psi(t, j)\in C$ for all $t\in \interior I^{j}_{\psi}\cup (\interior I_{\psi_{2}}^{0} + \{T\})$.
		
		In addition,  since the point $\psi_{2}(0, 0)$ is assumed to belong to $C$ in this case, then $\psi(t, J) \in C$ holds  for all $t\in \interior I_{\psi}^{J}  = \interior (I_{\psi_{1}}^{J} \cup (I_{\psi_{2}}^{0} + \{T\}))$.
		
	\end{enumerate}
	Therefore, $\psi(t, j)\in C$ holds for all $t\in \interior I^{j}_{\psi}$ for all $j\in \mathbb{N}$ such that $I_{\psi}^{j}$ has nonempty interior.\ifbool{conf}{\qed}{}
\end{proof}
\begin{lemma}\label{lemma:concatenation_solution_f}
	Given two solution pairs, denoted $\psi_{1}$ and $\psi_{2}$, then their concatenation $\psi = \psi_{1}|\psi_{2}$ satisfies that for each $j\in \mathbb{N}$ such that $I^{j}_{\psi}$ has nonempty interior $\interior(I^{j}_{\psi})$, $\psi = (\mystatet, \myinputt)$ satisfies
	$$
	\frac{\text{d}}{\text{d} t} {\mystatet}(t,j) = f(\mystatet(t,j), \myinputt(t,j))
	$$ for almost all $t\in I^{j}_{\mystatet}$.
\end{lemma}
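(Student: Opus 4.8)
The plan is to reuse the domain decomposition and the three-case structure from the proof of Lemma~\ref{lemma:concatenation_solution_c}, now verifying the flow differential equation rather than membership in $C$. From Definition~\ref{definition:concatenation} I would first record that $\dom \psi = \dom \psi_{1}\cup(\dom \psi_{2} + \{(T, J)\})$ with $(T, J) = \max \dom \psi_{1}$, so that
$$
I^{j}_{\mystatet} = \left\{ \begin{aligned}
&I^{j}_{\psi_{1}} &\text{ if }j < J\\
&I^{J}_{\psi_{1}} \cup (I^{0}_{\psi_{2}} + \{T\}) &\text{ if }j = J\\
&I^{j - J}_{\psi_{2}} + \{T\} &\text{ if }j > J.
\end{aligned}\right.
$$

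For $j < J$ the concatenation satisfies $\psi(t, j) = \psi_{1}(t, j)$ on $I^{j}_{\mystatet} = I^{j}_{\psi_{1}}$, so item~2 of Definition~\ref{definition:solution} applied to the solution pair $\psi_{1}$ gives $\frac{\text{d}}{\text{d} t}\mystatet(t, j) = f(\mystatet(t, j), \myinputt(t, j))$ for almost all $t\in I^{j}_{\mystatet}$. For $j > J$ one has $\psi(t, j) = \psi_{2}(t - T, j - J)$; since a constant time shift does not change the derivative, applying item~2 of Definition~\ref{definition:solution} to $\psi_{2}$ and substituting $s = t - T$ yields the same identity for almost all $t\in I^{j - J}_{\psi_{2}} + \{T\}$.

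The only case needing attention is $j = J$, where $I^{J}_{\mystatet} = I^{J}_{\psi_{1}} \cup (I^{0}_{\psi_{2}} + \{T\})$. On the left piece $\psi = \psi_{1}$ and on the right piece $\psi(t, J) = \psi_{2}(t - T, 0)$, so by the two arguments above the flow equation holds for almost all $t$ in each piece separately. The pieces meet only at the junction $t = T$, so the set where the equation can fail is contained in the union of the two exceptional null sets and the singleton $\{T\}$, which still has Lebesgue measure zero; hence the equation holds for almost all $t\in I^{J}_{\mystatet}$.

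I expect the junction point $t = T$ to be the sole substantive issue, and this is precisely where the present lemma is easier than Lemma~\ref{lemma:concatenation_solution_c}: because the conclusion is quantified by ``for almost all $t$'' rather than ``for all $t$,'' the measure-zero singleton $\{T\}$ can simply be discarded, so no additional hypothesis (such as $\psi_{2}(0,0)\in C$) is required to control it. Everything else reduces to the change of variables and the solution-pair property of $\psi_{1}$ and $\psi_{2}$.
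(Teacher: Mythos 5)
Your proposal is correct and matches the paper's own proof essentially step for step: the same domain decomposition into the cases $j<J$, $j>J$, and $j=J$, the same shift argument via $\psi_{2}(t-T,j-J)$, and the same observation that in the $j=J$ case the flow equation holds almost everywhere on each of $I^{J}_{\psi_{1}}$ and $I^{0}_{\psi_{2}}+\{T\}$ and hence on their union. Your explicit remark that the junction $\{T\}$ is a null set (so no hypothesis like $\psi_{2}(0,0)\in C$ is needed, unlike in Lemma~\ref{lemma:concatenation_solution_c}) is exactly the point implicit in the paper's treatment of that case.
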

\begin{proof}
	We show that for all $j\in \mathbb{N}$ such that $I^{j}_{\psi}$ has  nonempty interior, for almost all $t\in I_{\psi}^{j}$,
	$
	\dot{\mystatet} = f(\mystatet(t, j), \myinputt(t, j)).
	$
	
	According to Definition \ref{definition:concatenation}, $\dom \psi = \dom \psi_{1} \cup (\dom \psi_{2} +\{(T, J)\})$ where $(T, J) = \max \dom \psi_{1}$.
	Therefore, 
	$$
	I^{j}_{\psi} = \left\{ \begin{aligned}
	&I^{j}_{\psi_{1}} &\text{ if }j< J\\
	&I^{J}_{\psi_{1}} \cup (I^{0}_{\psi_{2}} +  \{T\}) &\text{ if }j = J\\
	&I^{j - J}_{\psi_{2}} + T&\text{ if }j > J
	\end{aligned}\right.
	$$
	\begin{enumerate}[label=\Roman*]
		\item Consider the case of all $j < J$ such that $I^{j}_{\psi_{1}}$ has  nonempty interior. Since $\psi_{1}$ is a solution pair, then $\dot{\mystatet}_{1} (t, j)= f(\mystatet_{1}(t, j), \myinputt_{1}(t, j))$ for almost all $t\in \interior I^{j}_{\psi_{1}}$. Because 
		$$
		(\mystatet(t, j), \myinputt(t, j)) = (\mystatet_{1}(t, j), \myinputt_{1}(t, j))
		$$ for all $(t, j)\in \dom \psi_{1}\backslash (T, J)$, therefore, 
		$$
		\begin{aligned}
		\dot{\mystatet}(t, j) &= \dot{\mystatet}_{1}(t, j) \\
		&= f(\mystatet_{1}(t, j), \myinputt_{1}(t, j)) = f(\mystatet(t, j), \myinputt(t, j))
		\end{aligned}
		$$ for almost all $t\in \interior I^{j}_{\psi}$.
		\item Consider the case of all $j > J$ such that $I^{j- J}_{\psi_{2}}$ has  nonempty interior. Since $\psi_{2}$ is a solution pair and $I^{j- J}_{\psi_{2}}$ has nonempty interior, $\dot{\mystatet}_{2}(t, j) = f(\mystatet_{2}(t- T, j - J), \myinputt_{2}(t- T, j- J))$ for almost all $t\in \interior I^{j - J}_{\psi_{2}} +\{T\}$. Because 
		$$
		\begin{aligned}
		&(\mystatet(t, j), \myinputt(t, j)) = \\&(\mystatet_{2}(t - T, j - J), \myinputt_{2}(t - T, j - J))
		\end{aligned}
		$$
		for all $t\in I_{\psi_{2}}^{j - J} + \{T\}$,  then 
		$$
		\begin{aligned}
		\dot{\mystatet}(t, j) &=\dot{\mystatet}_{2}(t - T, j - J) \\
		&= f(\mystatet_{2}(t - T, j - J), \myinputt_{2}(t - T, j - J)) \\
		&= f(\mystatet(t, j), \myinputt(t, j))
		\end{aligned}
		$$
		for almost all $t\in \interior I^{j}_{\mystatet}$.
		\item Consider the case when $j = J$ and $I_{\psi}^{J} = I_{\psi_{1}}^{J}\cup (I_{\psi_{2}}^{0} + \{T\})$ has nonempty interior. If $I_{\psi_{1}}^{J}$ or  $I_{\psi_{2}}^{0} + \{T\}$ has nonempty interior while the other has empty interior, then it is the same as above two items to show that for almost all $t\in I_{\psi}^{j}$,
		$
		\dot{\mystatet} = f(\mystatet(t, j), \myinputt(t, j)).
		$
		If both $I_{\psi_{1}}^{J}$ and $(I_{\psi_{2}}^{0} + \{T\})$ has nonempty interior, then the interval $I_{\psi}^{J} =  I_{\psi_{1}}^{J} \cup (I_{\psi_{2}}^{0} + \{T\})$. Since we have had $\dot{\mystatet}(t, J) = f(\mystatet(t, J), \myinputt(t, J))$ for almost all $t\in I_{\psi_{1}}^{J}$ and $t\in I_{\psi_{2}}^{0} + \{T\}$, then $\dot{\mystatet}(t, j)= f(\mystatet(t, j), \myinputt(t, j))$ holds for $t\in I_{\psi}^{J} =  I_{\psi_{1}}^{J} \cup (I_{\psi_{2}}^{0} + \{T\})$.
	\end{enumerate}
	Therefore, $ \dot{\mystatet}(t, j) = f(\mystatet(t, j), \myinputt(t, j))$ holds for almost all $t\in I_{\psi}^{j}$.\ifbool{conf}{\qed}{}
\end{proof}
\begin{lemma}\label{lemma:concatenation_solution_dg}
	Given two solution pairs, denoted $\psi_{1}$ and $\psi_{2}$, such that $\mystatet_{1}(T, J) = \mystatet_{2}(0,0)$, where $(T, J) = \max \dom \mystatet_{1}$, then their concatenation $\psi = \psi_{1}|\psi_{2} = (\mystatet, \myinputt)$ satisfies that for all $(t,j)\in \dom (\mystatet, \myinputt)$ such that $(t,j + 1)\in \dom (\mystatet, \myinputt)$, 
$$
	\begin{aligned}
	(\mystatet(t, j), \myinputt(t, j))&\in D\\
	\mystatet(t,j+ 1) &= g(\mystatet(t,j), \myinputt(t, j)).
	\end{aligned}
$$
\end{lemma}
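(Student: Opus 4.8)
The plan is to verify the jump condition (item 3 of Definition \ref{definition:solution}) for the concatenation $\psi = \psi_{1}|\psi_{2} = (\mystatet, \myinputt)$ by classifying each jump of $\psi$ according to whether it originates in $\psi_{1}$ or in $\psi_{2}$. First I would recall from Definition \ref{definition:concatenation} the domain decomposition $\dom\psi = \dom\psi_{1} \cup (\dom\psi_{2} + \{(T,J)\})$ together with the value assignment $\mystatet(t,j)=\mystatet_{1}(t,j)$, $\myinputt(t,j)=\myinputt_{1}(t,j)$ on $\dom\psi_{1}\setminus\{(T,J)\}$ and $\mystatet(t,j)=\mystatet_{2}(t-T,j-J)$, $\myinputt(t,j)=\myinputt_{2}(t-T,j-J)$ on $\dom\psi_{2}+\{(T,J)\}$. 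The crucial observation is that every point of $\dom\psi_{2}+\{(T,J)\}$ has $j$-coordinate at least $J$, so any $(t,j)\in\dom\psi$ with $j\le J-1$ necessarily lies in $\dom\psi_{1}$.

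Then I fix an arbitrary $(t,j)\in\dom\psi$ with $(t,j+1)\in\dom\psi$ and split into the two exhaustive cases $j\le J-1$ and $j\ge J$ (no integer $j$ satisfies $j<J<j+1$, so these cases partition all admissible $j$). In the first case both $(t,j)$ and $(t,j+1)$ lie in $\dom\psi_{1}$, with $(t,j)\neq(T,J)$ since $j<J$, so $\mystatet(t,j)=\mystatet_{1}(t,j)$ and $\myinputt(t,j)=\myinputt_{1}(t,j)$; because $\psi_{1}$ is a solution pair, item 3 of Definition \ref{definition:solution} applied to $\psi_{1}$ gives $(\mystatet_{1}(t,j),\myinputt_{1}(t,j))\in D$ and $\mystatet_{1}(t,j+1)=g(\mystatet_{1}(t,j),\myinputt_{1}(t,j))$, which transfers to $\psi$ once $\mystatet(t,j+1)$ is identified with $\mystatet_{1}(t,j+1)$.

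In the second case, $j\ge J$, both $(t,j)$ and $(t,j+1)$ lie in $\dom\psi_{2}+\{(T,J)\}$, so writing $t'=t-T\ge 0$ and $j'=j-J\ge 0$ I have $(t',j'),(t',j'+1)\in\dom\psi_{2}$ with $\mystatet(t,j)=\mystatet_{2}(t',j')$, $\myinputt(t,j)=\myinputt_{2}(t',j')$, and $\mystatet(t,j+1)=\mystatet_{2}(t',j'+1)$. Applying item 3 of Definition \ref{definition:solution} to $\psi_{2}$ yields $(\mystatet_{2}(t',j'),\myinputt_{2}(t',j'))\in D$ and $\mystatet_{2}(t',j'+1)=g(\mystatet_{2}(t',j'),\myinputt_{2}(t',j'))$, and translating back by $(T,J)$ gives the required identities for $\psi$.

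I expect the only delicate point to be the junction $(T,J)$. A jump located at $(T,J)$ appears in $\psi$ exactly when $\psi_{2}$ begins with a jump (i.e.\ $(0,1)\in\dom\psi_{2}$); this falls under the second case with $t'=0$, $j'=0$, where the concatenation assigns the values of $\psi_{2}$ at $(0,0)$, so the relation is inherited from $\psi_{2}$. Symmetrically, when a jump of $\psi_{1}$ terminates at $(T,J)$ (the edge case $(t,j+1)=(T,J)$ in the first case), I must invoke the standing hypothesis $\mystatet_{1}(T,J)=\mystatet_{2}(0,0)$ to identify $\mystatet(T,J)$ with $\mystatet_{1}(T,J)$ so that the jump relation carried over from $\psi_{1}$ still closes. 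Verifying that no spurious jump is created at the splice — that is, that every consecutive pair $(t,j),(t,j+1)$ in $\dom\psi$ is a genuine jump of $\psi_{1}$ or of $\psi_{2}$ and never an artifact of gluing — is the heart of the argument, and it follows directly from the domain decomposition recalled in the first step.
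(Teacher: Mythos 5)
Your proof is correct and takes essentially the same route as the paper's: a case analysis on where the jump lies (inside $\dom \psi_{1}$, inside $\dom \psi_{2} + \{(T,J)\}$, or straddling the junction $(T,J-1)\to(T,J)$), inheriting item 3 of the solution definition from $\psi_{1}$ or $\psi_{2}$ accordingly, and invoking the hypothesis $\mystatet_{1}(T,J)=\mystatet_{2}(0,0)$ precisely at the junction, where the concatenation's value at $(T,J)$ is taken from $\psi_{2}$. Your two-case split by $j$-coordinate with the junction treated as an edge case is merely a repackaging of the paper's three explicit cases.
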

\begin{proof}
	We show that for all $(t, j)\in \dom \psi$ such that $(t, j + 1)\in \dom \psi  = \dom \psi_{1} \cup (\dom \psi_{2} +\{(T, J)\})$ where $(T, J) = \max \dom \psi_{1}$,
	$$
	\begin{aligned}
	\psi(t, j) &\in D\\
	\mystatet(t, j + 1) &= g(\mystatet(t, j), \myinputt(t, j)) 
	\end{aligned}
	$$
	\begin{enumerate}
		\item For all $(t, j)$ such that $(t, j) \in \dom \psi_{1}$ and $(t, j + 1)\in \dom \psi_{1}$, since $\psi_{1}$ is a solution pair, then 
		$$
		\begin{aligned}
		\psi_{1}(t, j) &\in D\\
		\mystatet_{1}(t, j + 1) &= g(\mystatet_{1}(t, j), \myinputt_{1}(t, j)).\\
		\end{aligned}
		$$ Due to $(t, j + 1)\in \dom \psi_{1}$, then $(t, j)\neq (T, J)$. Since $\psi(t, j) = \psi_{1}(t, j)$ for all $(t, j)\in \dom \psi_{1}\backslash (T, J)$, then			
		$$
		\begin{aligned}
		\psi(t, j)  &= 	\psi_{1}(t, j)\in D\\
		\mystatet(t, j + 1) &= \mystatet_{1}(t, j + 1)\\
		&= g(\mystatet_{1}(t, j), \myinputt_{1}(t, j))\\
		& = g(\mystatet(t, j), \myinputt(t, j)) .
		\end{aligned}
		$$
		\item For all $(t, j)$ such that $(t, j) \in \dom \psi_{2} + \{(T, J)\}$ and $(t, j + 1)\in \dom \psi_{2} + \{(T, J)\}$, we can have $(t - T, j - J)\in \dom \psi_{2}$ and $(t - T, j - J + 1)\in \dom \psi_{2}$. Since $\psi_{2}$ is a solution pair, then 
		$$
		\begin{aligned}
		&\psi_{2}(t - T, j - J) \in D\\
		&\mystatet_{2}(t - T, j - J + 1) \\
		&= g(\mystatet_{2}(t - T, j - J), \myinputt_{2}(t - T, j - J)).
		\end{aligned}
		$$ Since $\psi(t, j) = \psi_{2}(t - T, j - J)$ for all $(t - T, j - J)\in \dom \psi_{2}$, then
		$$
		\begin{aligned}
		&\psi(t, j)  = 	\psi_{2}(t - T, j - J) \in D\\
		&\mystatet(t, j + 1) = \mystatet_{2}(t - T, j - J + 1)\\
		&= g(\mystatet_{2}(t - T, j - J), \myinputt_{2}(t - T, j - J))\\
		& = g(\mystatet(t, j), \myinputt(t, j)). 
		\end{aligned}
		$$
					\item A special case is that $(t, j) \in \dom \psi_{1}$ and $(t, j + 1) \in \dom \psi_{2} + \{(T, J)\}$. The only case is that $(t, j) = (T, J - 1)\in \dom \psi_{1}$ and $(t, j + 1) = (T, J)\in \dom \psi_{2} + \{(T, J)\}$. Note that it is assume that $\mystatet_{1}(T, J) = \mystatet_{2}(0,0)$. Therefore, we have
					$$
					\begin{aligned}
						\mystatet_{1}(T, J ) = \mystatet_{2}(0,0) = \mystatet(T, J)\\
					\mystatet_{1}(T, J - 1) = \mystatet(T, J - 1)\\
					\myinputt_{1}(T, J - 1) = \myinputt(T, J - 1)
					\end{aligned}
					$$
					Since $\psi_{1}$  is a solution pair of $\mathcal{H}$ and 
					$$
						(T,J - 1)\in \dom \psi_{1} \quad (T, J)\in \dom \psi_{1},
					$$
					according to Definition \ref{definition:solution}, we have
					$$
					\begin{aligned}
					\mystatet_{1}(T, J) = g(\mystatet_{1}(T, J - 1), \myinputt_{1}(T, J - 1))\\
					\psi_{1}(T, J - 1) = (\mystatet_{1}(T, J - 1), \myinputt_{1}(T, J - 1)) \in D.
					\end{aligned}
				$$
					Therefore, we have
					$$
					\begin{aligned}
					 &\mystatet(T, J) = \mystatet_{2}(0,0) = \mystatet_{1}(T, J) \\
					 &= g(\mystatet_{1}(T, J - 1), \myinputt_{1}(T, J - 1)) \\
					 & = g(\mystatet(T, J - 1), \myinputt(T, J - 1)) \\
					 \end{aligned}
					 $$
					 and
					 $$
					 \begin{aligned}
					 &(\mystatet(T, J - 1), \myinputt(T, J - 1)) = (\mystatet_{1}(T, J - 1), \myinputt_{1}(T, J - 1)) \\
					 &\in D.
					\end{aligned}
					$$
	\end{enumerate}\ifbool{conf}{\qed}{}
\end{proof}

\section{Proof of Proposition \ref{theorem:reversedarcbwsystem}}\label{appendix:proof_reverse}
Given the solution pair $\psi = (\phi, u)$ to a hybrid system $\mathcal{H}$, we need to show that the reversal $\psi' = (\phi', u')$ of $\psi$  is a compact solution pair to its backward-in-time hybrid system $\mathcal{H}^{\text{bw}}$. The following items are showing that $\psi'$ satisfies each condition in Definition \ref{definition:solution}.
	\begin{itemize}
		\item The first item is to prove that the domain $\dom \phi'$ equals $\dom u'$ and $\dom \psi' := \dom \phi' =  \dom u'$ is a compact hybrid time domain. 
		
		Since $\psi = (\phi, u)$ is a solution pair to $\mathcal{H}$, then 
		$$\dom \phi = \dom u.$$ 
		Due to Definition \ref{definition:reversedhybrdarc}, $\dom \phi' = \{(T, J)\} - \dom \phi$ and $\dom u' = \{(T, J)\} - \dom u$ where $(T, J) = \max \dom \psi$. Therefore,
		 $$\dom \phi' = \{(T, J)\} - \dom \phi = \{(T, J)\} - \dom u = \dom u'.$$
		Thus, $\dom \phi' = \dom u'$ is proved.
		
		Then we want to show that $\dom \psi' = \dom \phi' = \dom u'$ is a hybrid time domain. Since $\psi$ is a compact solution pair, then $\dom \psi$ is a compact hybrid time domain. Therefore, 
		\begin{equation}
		\dom \psi  = \cup_{j = 0}^{J}([t_{j}, t_{j+1}],j)
		\end{equation}
		holds for some finite sequence of times, 
		\begin{equation}
		\label{sequence:reverse}
			0=t_{0}\leq t_{1}\leq t_{2}\leq ... \leq t_{J + 1} = T
		\end{equation}
		where $(T, J) = \max \dom \psi$.
		
		Since $\dom \psi' = \{(T, J)\} - \dom \psi$, then
		\begin{equation}
		\label{equation:intersectionreverse}
		\begin{aligned}
		\dom \psi'  &= \{(T, J)\} - \dom \psi\\
		&= \{(T, J)\} - \cup_{j = 0}^{J}([t_{j}, t_{j+1}],j) \\
		&= \cup_{j = 0}^{J}([T - t_{j+1}, T - t_{j}],J -j).\\
%
%
		\end{aligned}
		\end{equation}
		Let $t'_{j} = T - t_{J + 1 - j}$, where $t_{j}$ is the sequence of time in (\ref{sequence:reverse}). Since (\ref{sequence:reverse}) holds, then 
		$$
			0=t'_{0}\leq t'_{1}\leq t'_{2}\leq ... \leq t'_{J + 1} = T.
		$$
		Therefore, (\ref{equation:intersectionreverse}) can be written as
		\begin{equation}
			\begin{aligned}
			\dom \psi'   &= \cup_{j = 0}^{J}([t'_{j}, t'_{j + 1}], j).
			\end{aligned}
		\end{equation}
		Hence, $\dom \psi'$ is a compact hybrid time domain. 
		
		\item The second item is to prove that $\psi'(0, 0) \in \overline{C^{\text{bw}}} \cup D^{\text{bw}}$. Since $\psi$ is compact, the hybrid time $(T, J)\in \dom \psi$, where $(T, J) = \max \dom \psi$. The state $\phi(T, J)$ is either reached during a flow or at a jump. 
		\begin{enumerate}
			\item Consider the case when $\phi(T, J)$ is reached during a flow. In this case, $I_{\psi}^{J} = I_{\psi'}^{0}$ has nonempty interior. 
			
			Definition  \ref{definition:reversedhybrdarc} suggests that $$
			\phi'(0, 0) = \phi(T, J)
			$$ and 
			$u'(0, 0)$ is such that $(\phi'(0, 0), u'(0, 0))\in \overline{C} = \overline{C^{\text{bw}}}$. Therefore, $\psi'(0, 0) = (\phi'(0, 0), u'(0, 0)) \in \overline{C^{\text{bw}}} \subset \overline{C^{\text{bw}}} \cup D^{\text{bw}}$.
			\item Consider the case when $\psi(T, J)$ is reached at a jump. In this case, 
			$$
			(T, J - 1)\in \dom \psi \quad (T, J)\in \dom \psi.
			$$
			According to Definition \ref{definition:solution}, the state input pair 
			$$
			\begin{aligned}
			\psi(T, J - 1) &= (\phi(T, J -1), u(T, J - 1))\in D\quad \phi(T, J)\\ 
			&= g(\phi(T, J - 1), u(T, J - 1)).
			\end{aligned}
			$$ According to Definition \ref{definition:reversedhybrdarc},
			$$
				\phi'(0, 0) = \phi(T, J) \quad u'(0, 0) = u'(T, J - 1). 
			$$
			Therefore, 
			$$
				(\phi'(0, 1), u'(0, 0)) \in D \quad \phi'(0, 0) = g(\phi'(0, 1), u'(0,0)). 
			$$

			Since $D^{\text{bw}}$ is defined as 
			$$
			D^{\text{bw}} = \{(x, u): \exists z\in g^{\text{bw}} (x, u): (z, u)\in D\},
			$$
			where $g^{\text{bw}}$ is defined as
			$$
			g^{\text{bw}}(x, u) = \{z: x = g(z, u), (z, u)\in D\},
			$$
			therefore, 
			$$
				\phi'(0, 1) \in g^{\text{bw}}(\phi'(0,0), u'(0,0))
			$$ and
			$$
				\psi'(0,0) = (\phi'(0,0), u'(0,0)) \in D^{\text{bw}}.
			$$
			Thus, $\psi'(0,0)= (\phi'(0, 0), u'(0, 0)) \in D^{\text{bw}}\subset \overline{C^{\text{bw}}} \cup D^{\text{bw}}$.
		\end{enumerate} 
		In conclusion, $\psi'(0, 0) = (\phi'(0, 0), u'(0, 0))\in \overline{C^{\text{bw}}} \cup D^{\text{bw}}$.
		\item This item is to prove that $\psi'$ satisfies the conditions in the first item in Definition \ref{definition:solution}. The following items are to prove each of these conditions is satisfied. 
		\begin{enumerate}[label=(\alph*)]		
			\item This item is to prove that for all $j\in \mathbb{N}$ such that $I^{j}_{\psi'}$ has nonempty interior, $\phi'$ is absolutely continuous on each $I_{\phi'}^{j} = \{t: (t, j)\in \dom \phi'\}$ with nonempty interior. 
			
			Due to $\dom \psi' = \{(T, J)\} - \dom \psi$, then $(t, j)\in \dom \psi'$ implies $(T - t, J - j)\in \dom \psi$. Hence, 
			\begin{equation}
			\label{equation:reverseinterval}
				I_{\psi'}^{j} = \{T\} - I_{\psi}^{J - j}.
			\end{equation}
			Therefore, $\interior I_{\psi'}^{j} \neq \emptyset$ implies that  $\interior I_{\psi}^{J - j} \neq \emptyset$. 
			
			Since $\psi = (\phi, u)$ is a solution pair to $\mathcal{H}$, the function $t\mapsto \phi(t, j)$ is locally absolutely continuous for each $I_{\psi}^{j}$ with nonempty interior.  Therefore, $t\mapsto \phi(T - t, J - j)$ is locally absolutely continuous for each $\{T\} - I_{\psi}^{J - j}$ with nonempty interior. 
			
			According to Definition \ref{definition:reversedhybrdarc}, 
			$$
				\phi'(t, j) = \phi(T-t, J - j)
			$$
			for all $(t, j)\in \dom \phi'$.
			Therefore, the function $t\mapsto \phi'(t, j) = \phi(T-t, J - j)$ is locally absolutely continuous for each $I_{\psi'}^{j} = \{T\} - I_{\psi}^{J - j}$ with nonempty interior.
			\item This item is to prove that for all $j\in \mathbb{N}$ such that $I^{j}_{\psi'}$ has nonempty interior, $\psi'(t, j) = (\phi'(t, j), u'(t, j))\in C^{\text{bw}}$ for all $t\in \interior I_{\psi'}^{j}$. 
			
			Since $I^{j}_{\psi'} = \{T\} - I_{\psi}^{J - j}$ and $I^{j}_{\psi'}$ has nonempty interior, then $\{T\} - I_{\psi}^{J - j}$ has nonempty interior.
			Because $\psi = (\phi, u)$ is a solution pair to $\mathcal{H}$, according to Definition \ref{definition:solution}, then $\psi(t, j)\in C$ for all $t\in \interior I_{\psi}^{j}$, where $ I_{\psi}^{j}$ has nonempty interior. 
			
			Since $\{T\} - I_{\psi}^{J - j}$ has nonempty interior, then 
			$$
			\psi(T- t, J - j) = (\phi(T- t, J - j), u(T- t, J -j))\in C
			$$ for all $T - t\in I_{\psi}^{J - j}$.
			
			According to Definition \ref{definition:reversedhybrdarc}, since $I^{j}_{\psi'}$ has nonempty interior, then
			$$
				\phi'(t, j) = \phi(T - t, J - j) \quad u'(t, j) = u(T - t, J - j).
			$$		
			Hence, state input pair $\psi'(t, j) = (\phi'(t, j), u'(t, j))\in C = C^{\text{bw}}$ for all $t\in \interior I_{\psi'}^{j}$, where $I_{\psi'}^{j}$ has nonempty interior.
			\item This item is to prove that for all $j\in \mathbb{N}$ such that $I^{j}_{\psi'}$ has nonempty interior, the function $t\mapsto u'(t, j)$ is Lebesgue measurable and locally bounded. 
			\begin{itemize}
				\item This item is to show that $t\mapsto u'(t, j)$ is Lebesgue measurable for all $j\in \mathbb{N}$ such that $I_{u'}^{j}$ has nonempty interior. 
				
				If $I_{u'}^{j}$ has nonempty interior, then $I_{u}^{J - j} = \{T\} - I_{u'}^{j}$ has nonempty interior. Therefore, $t\mapsto u(t, J - j)$ is Lebesgue measurable. Let $u_{i}$ denote the $i$th component of $u$. For all $a\in \mathbb{R}$ and $i\in \{1, 2, ..., m\}$, since $t\mapsto u(t, J - j)$ is Lebesgue measurable, $S_{i}:= \{t\in I_{u_{i}}^{J - j}: u_{i}(t, J - j)> a\}$ is Lebesgue measurable. 
				
				Note that 
				\begin{equation}
				\begin{aligned}
				S_{i} &= \{t\in I_{u_{i}}^{J - j}: u_{i}(t, J - j)> a\}\\
				& = \{t\in \{T\} - I_{u_{i}}^{J - j}: u_{i}(T - t, J - j)> a\}\\
				&= \{t\in I_{u'}^{j}: u_{i}(T - t, J - j)> a\}\\
				&= \{t\in \interior I_{u'}^{j}: u_{i}(T - t, J - j)> a\}\\
				&\cup \{t\in \partial I_{u'}^{j}: u_{i}(T - t, J - j)> a\}\\
				&= \{t\in \interior I_{u'}^{j}: u'_{i}(t, j)> a\}\cup\{t\in \partial I_{u'}^{j}: u_{i}(T - t, J - j)> a\}\\
				&=: S_{i}^{\interior}\cup S_{i}^{\partial}.
				\end{aligned}
				\end{equation}
				Therefore, $S_{i}^{\interior} = S_{i}^{\interior}\cup S_{i}^{\partial}$ is Lebesgue measurable. Let $I_{u'}^{j} = [T_1, T_2]$, then $S_{i}^{\partial}$ can be one of the following:
				$$
				\emptyset, \{T_1\}, \{T_2\}, \{T_1, T_2\}.
				$$ Since all of the above are Lebesgue measurable, then $S_{i}^{\partial}$ is Lebesgue measurable. Since $S_{i}^{\partial}$ is measurable, then its complement $\mathbb{R}\backslash S_{i}^{\partial}$ is Lebesgue measurable. Hence, 
				$$
				S_{i}^{\interior} = S_{i}\cap (\mathbb{R}\backslash S_{i}^{\partial})
				$$
				is Lebesgue measurable. 
				
				Since we want to show that $t\mapsto u'(t, j)$ is Lebesgue measurable for all $j\in \mathbb{N}$ such that $I_{u'}^{j}$ has nonempty interior, it is equivalent to show that $S'_{i} := \{t\in I_{u'}^{j}: u'_{i}(t, j)> a\}$ is Lebesgue measurable for all $a\in \mathbb{R}$, $i\in \{1, 2,..., m\}$ and $j\in \mathbb{N}$ such that $I_{u'}^{j}$ has nonempty interior.
				
				Note that 
				\begin{equation}
				\begin{aligned}
					S'_{i} &= \{t\in \interior I_{u'}^{j}: u'_{i}(t, j)> a\}\cup \{t\in \partial I_{u'}^{j}: u'_{i}(t, j)> a\}\\
					& = S_{i}^{\interior}\cup  \{t\in \partial I_{u'}^{j}: u'_{i}(t, j)> a\}\\
					&=: S_{i}^{\interior}\cup  S_{i}^{\partial'}
				\end{aligned}
				\end{equation} and $S_{i}^{\interior}$ is Lebesgue measurable. Hence, the Lebesgue measurability of $S'_{i}$ depends on that of set $S_{i}^{\partial'}$. Similarly, let $I_{u'}^{j} = [T'_1, T'_2]$, then $S_{i}^{\partial'}$ can be one of the following:
				$$
				\emptyset, \{T'_1\}, \{T'_2\}, \{T'_1, T'_2\}.
				$$
				Since all of the above are Lebesgue measurable, then $S_{i}^{\partial'}$ is Lebesgue measurable. Therefore, $S'_{i} = S_{i}^{\interior}\cup  S_{i}^{\partial'}$ is Lebesgue measurable for all $a\in \mathbb{R}$, $i\in \{1, 2,..., m\}$ and $j\in \mathbb{N}$ such that $I_{u'}^{j}$ has nonempty interior. Hence, $t\mapsto u'(t, j)$ is Lebesgue measurable for all $j\in \mathbb{N}$ such that $I_{u'}^{j}$ has nonempty interior. 
				\item This item is to show that $t\mapsto u'(t, j)$ is locally bounded for all $j\in \mathbb{N}$ such that $I_{u'}^{j}:= \{t\in \mathbb{R}_{\geq 0}: (t, j)\in \dom u'\}$. In other words, we want to show that for all $j\in \mathbb{N}$ such that $I^{j}_{u'}$ has nonempty interior, $i\in \{1, 2, ..., m\}$ and any $t_{0}\in I^{j}_{u'}$, there exists a neighborhood $A'$ of $t_{0}$ such that for some number $M' > 0$, one has 
				$$
				|u'_{i}(t, j)|\leq M'
				$$
				for all $t\in A'$.
				
				Note that for all the $j\in \mathbb{N}$ such that $I^{j}_{u}$ has nonempty interior, $t\mapsto u(t, j)$ is locally bounded. Therefore, for all $i\in \{1, 2, ..., m\}$ and any $t_{0}\in I^{j}_{u}$, there exists a neighborhood $A$ of $t_{0}$ such that for some number $M > 0$ one has 
				$$
					|u_{i}(t, j)|\leq M
				$$
				for all $t\in A$, where $u_{i}$ denotes the $i$th component of $u$.

				Note that $\dom u' = (T, J) - \dom u$. Hence, $I_{u'}^{j} = \{T\} - I_{u}^{J - j}$. If $I_{u'}^{j}$ has nonempty interior, then $\{T\} - I_{u}^{J - j}$ has nonempty interior. Since $\{T\} - I_{u}^{J - j}$ has nonempty interior, $t\mapsto u(T - t, J - j)$ is locally bounded. Therefore, for all $i\in \{1, 2, ..., m\}$ and any $t_{0}\in \{T\} - I_{u}^{J - j} = I_{u'}^{j}$, there exists a neighborhood $A$ of $t_{0}$ such that for some number $M > 0$ one has 
				$$
				|u_{i}(T - t, J- j)|\leq M
				$$
				for all $t\in A$.
				
				Note that $A = (A\cap \interior I_{u'}^{j})\cup (A\cap \partial I_{u'}^{j})$. For all the $t\in (A\cap \interior I_{u'}^{j})$, 
				$$
					u(T- t, J - j) = u'(t, j).
				$$ Therefore, $|u_{i}'(t, j)|\leq M$ for all $t\in A\cap \interior I_{u'}^{j}$. 
				
				Let  $I_{u'}^{j} = [T_{1}, T_{2}]$. Since $I_{u'}^{j}$ has nonempty interior, then $T_{1} < T_{2}$. Hence, $\partial I_{u'}^{j} = \{T_{1}, T_{2}\}$. Therefore, $A\cap \partial I_{u'}^{j} \subset \{T_{1}, T_{2}\}$. 
				
				Therefore, for all $t\in A$, 
				$$
				u'_{i}(t, j)\leq \max \{M, u'(T_{1}, j), u'(T_{2}, j)\}.
				$$ We can select $A' = A$ and $M' = \max \{M, u'(T_{1}, j), u'(T_{2}, j)\}$. Hence, then the local boundness of $t\mapsto u'(t, j)$ is proved. 
				
%
			\end{itemize}
		\color{black}
%
%
%
			\item This item is to prove that for all $j\in \mathbb{N}$ such that $I^{j}_{\psi'}$ has nonempty interior, for almost all $t\in I_{\psi'}^{j}$,
			\begin{equation}
			\dot{\phi}'(t, j) = f^{\text{bw}}(\phi'(t, j), u'(t, j)).
			\end{equation}
			For almost all $t\in I^{j}_{\psi'}$, the following holds.
			\begin{equation} 
			\begin{aligned}
			\dot{\phi'}(t, j) &= \dot{\phi}(T - t, J - j) = \frac{\text{d} \phi(T - t, J - j)}{\text{d}(T-t)} \\
			&= -f(\phi(T - t, J - j), u(T - t, J - j))\\
			& = -f(\phi'(t, j), u'(t, j))\\
			& = f^{\text{bw}}(\phi'(t, j), u'(t, j))
			\end{aligned}
			\end{equation}
			Therefore, 
			\begin{equation}
			\dot{\phi'}(t, j)  = f^{\text{bw}}(\phi'(t, j), u'(t, j))\qquad \text{for almost all $t\in I^{j}_{\psi'}$}
			\end{equation}
		\end{enumerate}
		\item The last item is to  prove that for all $(t, j)\in \dom \psi'$ such that $(t, j + 1)\in \dom \psi'$,
		\begin{equation}
		\begin{aligned}
		(\phi'(t, j), u'(t, j))&\in D^{\text{bw}}\\
		\phi'(t, j + 1) &= g^{\text{bw}}(\phi'(t, j), u'(t, j))
		\end{aligned}
		\end{equation}
		Since the hybrid time domain $\dom \psi' = \{T, J\} - \dom \psi$, for all $(t, j)\in \dom \psi'$ such that $(t, j + 1)\in \dom  \psi'$, the hybrid times $(T - t, J - j - 1)\in \dom \psi$ and $(T - t, J -  j)\in \dom  \psi$. Since $(T - t, J - j - 1)\in \dom  \psi$ and $(T - t, J -  j)\in \dom  \psi$, then
		\begin{equation}
		\begin{aligned}
		\psi(T - t, J - j - 1) &\in D\\
		\phi(T - t, J - j) &= g(\phi(T - t, J - j - 1 ), u(T - t, J - j - 1)).\\
		\end{aligned}
		\end{equation}
		According to Definition \ref{definition:reversedhybrdarc},
		\begin{equation}
		\begin{aligned}
		\phi'(t, j) &= \phi(T - t, J - j)\\
		\phi'(t, j + 1) &= \phi(T - t, J - j - 1 ) \\
		u'(t, j) &= u(T - t, J - j - 1).
		\end{aligned}
		\end{equation}
		Since the state $\phi(T - t, J - j) = g(\phi(T - t, J - j - 1 ), u(T - t, J - j - 1))$, then
		\begin{equation}
		\begin{aligned}
		\phi'(t, j + 1) &= \phi(T - t, J - j - 1 ) \\
		&\in g^{\text{bw}}(\phi(T - t, J - j), u(T - t, J - j - 1) ) \\
		&\in g^{\text{bw}}(\phi'(t, j), u'(t, j) )\\
		\end{aligned}
		\end{equation}
		Since the state input pair $\psi(T - t, J - j - 1) = (\phi(T - t, J - j - 1 ), u(T - t, J - j - 1 )) \in D$, then 
		\begin{equation}
		(\phi'(t, j+ 1 ), u'(t,  j)) \in D
		\end{equation}
		Since $\phi'(t, j + 1) \in g^{\text{bw}}(\phi'(t, j), u'(t, j) )$ and the jump set of the backward-in-time hybrid system is defined as (\ref{set:dbw}), the state input pair $(\phi'(t,j),u'(t,j))\in D^{\text{bw}}$.
	\end{itemize}
	Because all the conditions in Definition \ref{definition:solution} are satisfied, $\psi'=(\phi', u')$ is a solution pair to $\mathcal{H^{\text{bw}}}$.

\section{Lemma 2 in \cite{kleinbort2018probabilistic}}
\begin{lemma}[Lemma 2 in \cite{kleinbort2018probabilistic}]\label{lemma:2in6}
	Let $\pi$, $\pi'$ be two trajectories, with the corresponding control functions $\Gamma(t)$, $\Gamma'(t)$, for some constant $\delta > 0$,. Let $T > 0$ be a time duration such that for all $t\in [0, T]$, it holds that $\Gamma(t) = u$ and $\Gamma'(t) = u'$. That is, $\Gamma$, $\Gamma'$ remain fixed throughout $[0, T]$. Then
	$$
		|\pi(T), \pi'(T)| \leq e^{K_{x}T}\delta + K_{u}Te^{K_{x}T}\Delta u
	$$
	where $\Delta u = |u - u'|$.
\end{lemma}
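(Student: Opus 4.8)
The plan is to reduce the statement to a standard Gronwall estimate. Interpreting the left-hand side $|\pi(T),\pi'(T)|$ as the distance $|\pi(T)-\pi'(T)|$, I note that since $\pi$ and $\pi'$ are trajectories of $\dot{x}=f(x,u)$ under the constant controls $u$ and $u'$ on $[0,T]$, each is absolutely continuous and satisfies the integral equation $\pi(t)=\pi(0)+\int_{0}^{t}f(\pi(s),u)\,ds$, and analogously for $\pi'$ with $u'$. Subtracting, applying the triangle inequality, writing $e(t):=|\pi(t)-\pi'(t)|$, and recalling the hypothesis $e(0)\le\delta$, I would obtain $e(t)\le\delta+\int_{0}^{t}|f(\pi(s),u)-f(\pi'(s),u')|\,ds$.

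The next step is to split the integrand by inserting the intermediate value $f(\pi'(s),u)$, giving $|f(\pi(s),u)-f(\pi'(s),u')|\le|f(\pi(s),u)-f(\pi'(s),u)|+|f(\pi'(s),u)-f(\pi'(s),u')|$, and then the two Lipschitz bounds in Assumption \ref{assumption:flowlipschitz} yield $|f(\pi(s),u)-f(\pi'(s),u')|\le K_{x}e(s)+K_{u}\Delta u$, where $K_{x},K_{u}$ are the Lipschitz constants of Assumption \ref{assumption:flowlipschitz} and $\Delta u=|u-u'|$. Substituting back produces the scalar integral inequality $e(t)\le\delta+K_{u}\Delta u\,t+K_{x}\int_{0}^{t}e(s)\,ds$. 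Since $t\le T$ on the interval of interest, I would bound the affine term by the constant $a:=\delta+K_{u}T\Delta u$ (this deliberate coarsening is what matches the non-sharp form stated in the lemma), obtaining $e(t)\le a+K_{x}\int_{0}^{t}e(s)\,ds$.

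Finally I would invoke the integral form of the Gronwall inequality with constant data, which gives $e(t)\le a\,e^{K_{x}t}$; evaluating at $t=T$ and expanding $a$ yields $e(T)\le(\delta+K_{u}T\Delta u)\,e^{K_{x}T}=e^{K_{x}T}\delta+K_{u}Te^{K_{x}T}\Delta u$, which is exactly the claimed bound. An equivalent route is to observe that $t\mapsto e(t)$ is locally absolutely continuous with $\dot{e}(t)\le|f(\pi(t),u)-f(\pi'(t),u')|\le K_{x}e(t)+K_{u}\Delta u$ for almost every $t$, and then apply the comparison lemma against the linear ODE $\dot{y}=K_{x}y+K_{u}\Delta u$, $y(0)=\delta$, whose explicit solution gives the same estimate.

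The only point requiring care — and the main technical obstacle — is the intermediate-value splitting: the state-Lipschitz and input-Lipschitz inequalities of Assumption \ref{assumption:flowlipschitz} hold only at points of the flow set, so the inserted cross point $(\pi'(s),u)$ must be admissible. While $(\pi(s),u)$ and $(\pi'(s),u')$ lie in the flow set automatically because $\pi$ and $\pi'$ are solutions under $u$ and $u'$, the membership of the cross point is precisely the quadruple admissibility condition embedded in the hypotheses of Assumption \ref{assumption:flowlipschitz} and must be granted for the controls and trajectories at hand; once this is in place, the remaining estimates are entirely routine.
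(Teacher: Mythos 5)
Your proof is correct, and it is the standard argument: the paper itself gives no proof of Lemma \ref{lemma:2in6} --- it imports the statement verbatim from \cite{kleinbort2018probabilistic} --- and the proof in that reference is exactly the Gronwall-type estimate you give: subtract the integral equations, split the integrand through the cross point $(\pi'(s),u)$, bound via the two Lipschitz constants, coarsen the affine term $K_u\Delta u\,t$ to $K_u\Delta u\,T$, and apply the integral Gronwall inequality to get $e(T)\leq\left(\delta+K_uT\Delta u\right)e^{K_xT}$. Your closing caution about the admissibility of the inserted cross point is well placed, and note that the paper anticipates it: Assumption \ref{assumption:flowlipschitz} is deliberately phrased over quadruples with $(x_0,u_0)\in C$, $(x_1,u_0)\in C$, and $(x_0,u_1)\in C$, so the splitting step is licensed directly by the hypotheses of the assumption rather than needing a separate argument (and in the paper's setting $f$ is in any case defined on all of $\mathbb{R}^n\times\mathbb{R}^m$). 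No gaps.
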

}
\end{document}